\DeclareMathOperator\Onest{OnEst}
\newcommand{\Onstate}{\Onest^{\mathrm{State}}}
\newcommand{\Ontraj}{\Onest^{\mathrm{Traj}}}
\newcommand{\Ostate}{\Ocal^{\mathrm{State}}}
\newcommand{\Otraj}{\Ocal^{\mathrm{Traj}}}
\newtheorem{assumption}{Assumption}
\newtheorem{observation}{Observation}
\newcommand{\Doff}{D_{\mathrm{off}}}
\newcommand{\Noff}{N_{\mathrm{off}}}
\newcommand{\Non}{N_{\mathrm{int}}}
\newcommand{\StateDagger}{\ensuremath{\textsc{Stagger}}\xspace}
\newcommand{\TrajDagger}{\ensuremath{\textsc{Tragger}}\xspace}
\newcommand{\WarmStartDagger}{\ensuremath{\textsc{Warm-Stagger}}\xspace}
\newcommand{\WarmStartTragger}{\ensuremath{\textsc{Warm-Tragger}}\xspace}
\newcommand{\Bbc}{\Bcal_{\text{bc}}}
\DeclareMathOperator\MDP{\Mcal}
\newcommand{\pie}{\pi^{\mathrm{E}}}
\newcommand{\pidet}{\Bcal^{\mathrm{Det}}}
\renewcommand{\paragraph}[1]{\noindent\textbf{#1}}
\newcommand{\alg}{\mathbb{A}}
\newcommand{\eclass}{\Bcal^\mathrm{E}}
\newcommand{\bE}{\mathbf{E}}
\newcommand{\bob}{\mathbf{b}}
\title{Interactive and Hybrid Imitation Learning: Provably Beating Behavior Cloning}
\author{%
  Yichen Li \\
  University of Arizona\\
  \texttt{yichenl@arizona.edu}\\
  \And
  Chicheng Zhang\\
  University of Arizona\\
  \texttt{chichengz@cs.arizona.edu}\\
}
\begin{document}

\maketitle

\begin{abstract}
Imitation learning (IL) is a paradigm for learning sequential decision-making policies from experts, leveraging offline demonstrations, interactive annotations, or both. Recent advances show that when annotation cost is tallied per trajectory, Behavior Cloning (BC)—which relies solely on offline demonstrations—cannot be improved in general, leaving limited conditions for interactive methods such as DAgger to help. We revisit this conclusion and prove that when the annotation cost is measured per state, algorithms using interactive annotations can provably outperform BC.
Specifically: (1) we show that \StateDagger, a one‑sample‑per‑round variant of DAgger, provably beats BC  under low-recovery-cost settings; (2) 
we initiate the study of hybrid IL where the agent learns from offline demonstrations and interactive annotations. We propose \WarmStartDagger whose learning guarantee is not much worse than using either data source alone. Furthermore, motivated by compounding error and cold‑start problem in imitation learning practice, we give an MDP example in which \WarmStartDagger has significant better annotation cost; (3) experiments on MuJoCo continuous‑control tasks confirm that, with modest cost ratio between interactive  and offline annotations, interactive and hybrid approaches consistently outperform BC. To the best of our knowledge, our work is the first to highlight the benefit of state‑wise interactive annotation and hybrid feedback in imitation learning.
\end{abstract}


\section{Introduction}
Imitation learning, also known as learning from demonstrations, is a widely-used approach 
for learning policies to make sequential decisions~\cite{pomerleau1988alvinn,bojarski2016end,bansal2018chauffeurnet}. 
In many applications, it offers a preferable alternative to reinforcement learning, as it bypasses the need for carefully designed reward functions and avoids costly exploration~\cite{osa2018algorithmic,sun2017deeply}.

Two prominent data collection regimes exist in imitation learning: offline and interactive. 
In offline imitation learning, expert demonstration data in the format of trajectories 
is collected ahead of time, which is a non-adaptive process that is easy to maintain.
In contrast, in interactive imitation learning, the learner is allowed to query the expert for annotations in an adaptive  manner~\cite{ross2011reduction,ross2014reinforcement,sun2017deeply}.
The most basic and well-known approach for offline imitation learning is Behavior Cloning~\cite{ross2010efficient,foster2024behavior}, which casts the policy learning problem as a supervised learning problem that learns to predict expert actions from states. 
Although simple and easy to implement, offline imitation learning has the drawback that the quality of the data can be limited~\cite{pomerleau1988alvinn}. 
As a result, the trained model can well suffer from compounding error, where imperfect imitation leads the learned policy to enter unseen states, resulting in a compounding sequence of mistakes.
In contrast, in interactive imitation learning, 
the learner maintains a learned policy over time, with the demonstrating 
experts providing  corrective feedback \emph{on-policy}, which enables targeted collection of  demonstrations and improves sample efficiency.


Recent work~\cite{foster2024behavior}, via a sharp analysis of Behavior Cloning, shows that the sample efficiency of Behavior Cloning cannot be improved in general when measuring using the number of trajectories annotated.
Interactive methods like DAgger~\cite{ross2010efficient} 
can enjoy sample complexity benefits,
but so far the benefits are only exhibited in limited examples, with the most general ones in the tabular setting~\cite{rajaraman2021value}. This leaves open the question:







\begin{quote}
\textit{Can interaction provide sample efficiency benefit for imitation learning under diverse settings, notably in the presence of function approximation?}
\end{quote}







In this paper, we make progress towards this question, with a focus on the \emph{deterministically realizable} setting (i.e. the expert policy $\pi^E$ is deterministic and is in the learner's policy class $\Bcal$). 
Specifically, we make the following contributions: 

\begin{enumerate}[itemsep=0pt,topsep=0pt,partopsep=0pt]
\item Motivated by the costly nature of interactive labeling on entire trajectories~\cite{laskey2017comparing,mandlekar2020human}, we propose to measure the cost of annotation 
using the number of states annotated by the demonstrating expert. 
We propose a general 
state-wise interactive imitation learning algorithm, \StateDagger, 
and show that as long as the expert can recover from mistakes at low cost in the environment~\cite{ross2011reduction}, 
it significantly improves over Behavior Cloning in terms of its number of state-wise demonstrations required. 



\item Motivated by practical imitation learning applications where sets of offline demonstration data are readily available, we study \emph{hybrid imitation learning}, where the learning agent can additionally query the demonstration expert interactively to improve its performance. 
We design a hybrid imitation learning algorithm, \WarmStartDagger, and prove that its policy optimality guarantee is not much worse than using either of the data sources alone. 

\item Inspired by compounding error~\cite{pomerleau1988alvinn} and cold start problem~\cite{lin2020autoregressive,oetomo2023cutting}, 
we provide an MDP example, 
for which we show hybrid imitation learning can achieve strict sample complexity savings over using either source alone, and provide simulations that verify this claim.






\item 
We conduct experiments 
in MuJoCo continuous control tasks
and show that if the cost of state-wise interactive demonstration is not much higher than its offline counterpart, interactive algorithms can enjoy a better cost efficiency than Behavior Cloning. 
Under some cost regimes and some environments, hybrid imitation learning can outperform approaches that use  either source alone.

\end{enumerate}



\section{Preliminaries}
\label{sec:prelims}


\paragraph{Basic notation.} Define $[n] := \cbr{1,\ldots, n}$. Denote by $\Delta(\Xcal)$ the set of probability distributions over a set $\Xcal$. For $u \in \Delta(\Xcal)$ and $x \in \Xcal$, we denote by $u(x)$ the $x$-th coordinate of $u$ and $e_x$ the delta mass on $x$. We use the shorthand $x_{1:n}$ to represent the sequence 
$(x_i)_{i=1}^n$; 
we will also apply this shorthand to tuples, e.g. using $(x,y)_{1:n}$ to denote $(x_i, y_i)_{i=1}^n$.  
We will frequently use the Hellinger distance to measure the difference between two distributions:
$D_H^2(\mathbb{P},\mathbb{Q}) = \int( \sqrt{\tfrac{d\mathbb{P}}{d\omega}} - \sqrt{\tfrac{d\mathbb{Q}}{d\omega}} )^2 d\omega$, where $\mathbb{P}$ and $\mathbb{Q}$
share a dominating measure $\omega$.



\paragraph{Episodic Markov decision process and agent-environment interaction.}
A fixed-horizon episodic MDP $\Mcal$ is defined as a tuple $(\mathcal{S}, \mathcal{A}, P, \rho, \mathcal{R}, H)$, where $\mathcal{S}$ is the state space, $\mathcal{A}$ is the action space, 
$P: \Scal \times \Acal \rightarrow \Delta(\Scal)$
denotes the transition dynamics,
$\rho \in \Delta(\Scal)$ is the initial state distribution,
$\mathcal{R}: \Scal\times \Acal \rightarrow \Delta([0,1])$ denotes the reward distribution, and $H$ denotes episode length. 
\footnote{Here we assume that the transition dynamics and reward functions are stationary, i.e., it does not depend on time step in the episode. To translate our results to the  nonstationary transition dynamics and reward setting (to have a fair comparison with~\cite[e.g.,][]{foster2024behavior,rajaraman2020toward}), we can augment the state with a step index, i.e. define $\tilde{s} = (s, h)$. \label{fn:can-capture-layered-setting} 
}
Given a stationary policy $\pi : \Scal \rightarrow \Delta(\Acal)$, we use $\pi(\cdot |s)$ to denote the action distribution of $\pi$ on state $s$.
Rolling out policy $\pi$ in MDP $\Mcal$ gives a distribution over trajectories 
$\tau = (s_h, a_h, r_h)_{h=1}^H$ by first drawing the initial state $s_1 \sim \rho$, and then iteratively taking actions $a_h \sim \pi(\cdot \mid s_h)$, receiving rewards $r_h \sim \mathcal{R}(s_h, a_h)$, and transitioning to the next state $s_{h+1} \sim P(s_h, a_h)$ (except at step $H$). Let $\EE^{\pi}$ and $\mathbb{P}^\pi$ 
denote expectation and probability law for $(s_h, a_h, r_h)_{h=1}^H$ induced by $\pi$ and $\Mcal$. Given $\pi$, denote by
$d^\pi(s) := \frac{1}{H}
\sum_{h=1}^H
\mathbb{P}^\pi(s_h = s)$
its state visitation distribution.
The expected return of policy $\pi$ is defined as
$
J(\pi) := \mathbb{E}^\pi \left[ \sum_{h=1}^H r_h \right],
$ and the value functions of $\pi$ are given by
$V_h^\pi(s) := \mathbb{E}^\pi \left[ \sum_{h'=h}^H r_{h'} \,|\, s_h = s \right]$, and $Q_h^\pi(s, a) := \mathbb{E}^\pi \left[ \sum_{h'=h}^H r_{h'} \,|\, s_h = s, a_h = a \right].$
If for policy $\pi$, step $h$, and state $s$, $\pi(\cdot \mid s)$ is the delta-mass on an action, we also sometimes slightly abuse the notation and let $\pi(s)$ denote that action.



\paragraph{Additional policy-related notations.}  Throughout, we assume the access to a class $\Bcal$ of stationary policies
of finite size $B$.
A $(\text{MDP}, \text{Expert})$ pair $(\Mcal,\pie)$ is said to be $\mu$-recoverable if for all $h \in [H]$, $s\in \Scal$ and $a \in \Acal$, $Q_h^{\pi^{\mathrm{E}}}(s,a) - V_h^{\pi^{\mathrm{E}}}(s) \leq \mu$. 
Additionally, we assume normalized return~\cite{foster2024behavior}, where there exists some $R > 0$ such that for any trajectory $(s_h, a_h, r_h)_{h=1}^H$ rolled out in $\Mcal$, $\sum_{h =1}^H r_h \in [0,R]$. It always holds that $\mu \leq R$, while in many applications we expect $\mu$ to be much smaller~\cite[][Section 2.2]{ross2011reduction}. 
Throughout this paper, we make the assumption that our imitation learning problem is \emph{deterministically realizable}: 
\begin{assumption}
[Deterministic Realizability]
\label{assum:realizability}
The expert policy $\pie$ is deterministic and is contained in the learner's policy class $\Bcal$. 
\end{assumption}





In our algorithm and analysis, we frequently use the following ``convexification'' of policy class $\Bcal$:
\begin{definition}[Each-step mixing and each-step mixing policy class]
\label{def:mixed_class_each_step}
For $u$ in $\Delta(\Bcal)$, define its each-step mixing policy $\bar{\pi}_u$ as a stationary policy from $\Scal$ to $\Delta(\Acal)$: 
\[
\bar{\pi}_u(a \mid s) := \sum_{\pi \in \Bcal} u(\pi) \pi(a \mid s).
\]
Define $\bar{\Pi}_{\Bcal} := \cbr{ \bar{\pi}_u: u \in \Delta(\Bcal) }$ as $\Bcal$'s each-step mixing policy class.
\end{definition} 

An each-step mixing policy $\bar{\pi}_u \in \bar{\Pi}_{\Bcal}$
can be executed by drawing
 $\pi \sim u$ freshly-at-random at each step $h \in [H]$ and taking action $a_h \sim \pi(\cdot | s_h)$ (e.g.~\cite{li2022efficient,li2023agnostic}). 

\paragraph{Offline imitation learning and Behavior Cloning.} In offline imitation learning, the agent is given a collection of expert trajectories $\Dcal = \cbr{ \tau_1, \ldots, \tau_{\Noff} }$,
where $\tau_i = (s_{i,h}, a_{i,h})_{h=1}^H$ is the $i$-th (reward-free) trajectory, all of which are drawn iid from the trajectory distribution of expert policy $\pie$. Behavior Cloning finds policy $\pi \in \Bcal$ that minimizes its log loss on expert's actions on the seen states:
\[
\hat{\pi} = 
\argmin_{\pi \in \Bcal}
\sum_{i=1}^{\Noff} \sum_{h = 1}^H 
\log \frac{1}{\pi(a_{i,h} \mid  s_{i,h})}.
\]
Recent work of~\cite{foster2024behavior} establishes a sharp horizon-independent sample complexity bound of Behavior Cloning, which we recall here: 
\begin{theorem}[\cite{foster2024behavior}]
\label{thm:bc-guarantee}
Suppose Assumption~\ref{assum:realizability} holds, then with probability $1-\delta$, the policy returned by BC $\hat{\pi}$ satisfies: 
\[
J(\pie) - J(\hat{\pi})
\leq 
\tilde{O}\rbr{ \frac{ R \log B }{\Noff} }.
\]
\end{theorem}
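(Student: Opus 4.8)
The plan is to leverage \textbf{deterministic realizability} to turn the $\log$-loss ERM into an exact \emph{consistency} statement, then apply a finite-class generalization argument, and finally convert a trajectory-level disagreement probability into a return gap via coupling. \textbf{Step 1 (consistency of the ERM).} Since $\pie \in \Bcal$ is deterministic, every sampled pair satisfies $a_{i,h} = \pie(s_{i,h})$, hence $\pie(a_{i,h}\mid s_{i,h}) = 1$ and $\pie$ attains total log loss exactly $0$. As the log loss is nonnegative, any minimizer $\hat\pi$ also attains $0$, which forces $\hat\pi(a_{i,h}\mid s_{i,h}) = 1$ for every $i,h$; so $\hat\pi$ is deterministic on, and agrees with $\pie$ on, every state appearing in $\Dcal$. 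This exact-consistency phenomenon is what ultimately removes the horizon dependence.

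\textbf{Step 2 (generalization over $\Bcal$).} For a policy $\pi$ define the ``good set'' $W_\pi := \{ s \in \Scal : \pi(\cdot\mid s) = e_{\pie(s)}\}$ and $p_\pi := \mathbb{P}^{\pie}\!\big(\exists h \in [H] : s_h \notin W_\pi\big)$, the probability that an expert trajectory ever visits a state on which $\pi$ fails to reproduce the expert action exactly. If $\pi$ is consistent with all $\Noff$ i.i.d.\ expert trajectories, then each trajectory stayed inside $W_\pi$, an event of probability $(1-p_\pi)^{\Noff} \le e^{-p_\pi \Noff}$. A union bound over the $B$ policies in $\Bcal$ then shows that, with probability $1-\delta$, every $\pi \in \Bcal$ consistent with $\Dcal$ satisfies $p_\pi \le \log(B/\delta)/\Noff$; by Step 1 this applies to $\hat\pi$.

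\textbf{Step 3 (coupling disagreement to return).} On the event of Step 2, couple the rollouts of $\pie$ and $\hat\pi$: share the initial state, and at each step, as long as the current state lies in $W_{\hat\pi}$ both policies are forced to play $\pie(s_h)$, so we can share the reward draw and next state as well; let the two processes evolve independently after the first exit from $W_{\hat\pi}$. Let $E$ be the event that the coupled path never leaves $W_{\hat\pi}$. Before the first exit the coupled process is distributed exactly as an expert rollout, so $\mathbb{P}(E^c) = p_{\hat\pi}$, and on $E$ the two trajectories — hence their returns — coincide. Using $\sum_h r_h \in [0,R]$ to drop the nonnegative $\hat\pi$-term and to bound the remaining one,
\[
J(\pie) - J(\hat\pi) = \mathbb{E}\!\left[\Big(\textstyle\sum_h r_h^{\pie} - \sum_h r_h^{\hat\pi}\Big)\mathbf{1}_{E^c}\right] \le \mathbb{E}\!\left[\textstyle\sum_h r_h^{\pie}\,\mathbf{1}_{E^c}\right] \le R\,\mathbb{P}(E^c) = R\, p_{\hat\pi} \le \frac{R\log(B/\delta)}{\Noff},
\]
and absorbing $\log(1/\delta)$ into $\tilde O(\cdot)$ gives the claim.

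\textbf{Where the difficulty lies.} The conceptual crux is Step 1: without deterministic realizability one only controls an \emph{average} log loss, and the usual reduction from per-state imitation error to return gap costs a factor of $H$ (or more), so the horizon-free rate genuinely hinges on exact consistency. The step needing the most care is the coupling in Step 3 — specifically the assertion that $\mathbb{P}(E^c)$ under the coupling equals $p_{\hat\pi}$, i.e.\ the disagreement probability measured under the \emph{expert's} visitation (which Step 2 controls), not under $\hat\pi$'s own possibly very different state distribution.
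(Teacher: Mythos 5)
Your argument is correct, and it reaches the stated bound $R\log(B/\delta)/\Noff$ by a route that is genuinely different from the one behind the cited result. The paper itself does not prove Theorem~\ref{thm:bc-guarantee}; it recalls it from Foster et al., whose proof treats BC as log-loss MLE, bounds the \emph{trajectory-level} Hellinger distance $D_H^2(\mathbb{P}^{\hat\pi},\mathbb{P}^{\pie}) \lesssim \log(B/\delta)/\Noff$ via a martingale MLE generalization bound (the same machinery this paper reuses in Lemmas~\ref{lem:hellinger_stability} and~\ref{lem:a14}), and then converts to a return gap through the trajectory-wise $L_\infty$ semi-metric $\rho(\hat\pi\parallel\pie)$ (Lemma~\ref{lem:foster_reduction}). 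You instead exploit deterministic realizability head-on: exact consistency of the ERM, a classical realizable-PAC version-space bound over the finite class, and a stopping-time coupling that correctly identifies $\mathbb{P}(E^c)$ with the disagreement probability under the \emph{expert's} visitation — the point you rightly flag as the delicate one, and which your coupling handles properly since the shared path agrees in law with an expert rollout up to the first exit from $W_{\hat\pi}$. What the Hellinger route buys is generality (it extends to stochastic realizable experts and yields the intermediate distributional guarantee that this paper's hybrid analysis actually needs); what your route buys is elementarity and clean constants, at the price of being locked to the deterministic case — which is all that Assumption~\ref{assum:realizability} and the theorem statement require here. One cosmetic caveat: since the paper's policies are stationary but the dynamics are stage-dependent, it is slightly cleaner to define $W_\pi$ as a subset of $\Scal$ exactly as you did, noting that consistency at a state observed at any step forces agreement at that state at all steps; your argument goes through unchanged.
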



\paragraph{Interactive imitation learning protocol.} In interactive IL, the learner has the ability to query the demonstration expert interactively. A first way to model interaction with expert is through a \emph{trajectory-wise demonstration oracle $\Otraj$}~\cite{ross2011reduction,foster2024behavior}: given a state sequence $(s_h)_{h=1}^H$, return $(a_h)_{h=1}^H$ such that $a_h = \pie(s_h)$ for all $h$. Subsequent works have considered modeling the interaction with expert as interacting with a \emph{state-wise demonstration oracle}~\cite{judah2014active,brantley2019disagreement,nguyen2020active,sekhari2024selective} $\Ostate$: given a state $s_h$ and step $h$, return $a_h = \pie(s_h)$.
We consider the learner interacting with the environment and demonstration oracles using the following protocol: 

\begin{itemize}[itemsep=0pt,topsep=0pt,partopsep=0pt]
\renewcommand\labelitemi{}
            \item \textbf{For} $i = 1, 2, \dots$

            \begin{itemize}
                \item Select policy $\pi^i$ and roll it out in $\Mcal$, observing a reward-free trajectory
                $(s_1, a_1, \ldots, s_H, a_H)$.
                \item Query the available oracle(s) to obtain expert annotations.
            \end{itemize}
            \item \textbf{Goal:} Return policy $\hat{\pi}$ such that $J(\pie) -J(\hat{\pi}) $ is small, with a few number of queries to $\Otraj$ or $\Ostate$.
        \end{itemize}

In practice, we expect the cost of querying $\Otraj$ to be higher than that of collecting a single offline expert trajectory~\cite{laskey2017comparing}. Since $H$ queries to $\Ostate$ can simulate one query to $\Otraj$, the cost of a single $\Ostate$ query should be at least $\frac{1}{H}$ the cost of $\Otraj$. Consequently, we also expect one $\Ostate$ query to be more expensive than obtaining an additional offline (state, expert action) pair. We denote the ratio between these two costs as $C$, where $C \geq 1$ is an application-dependent constant. 
\footnote{For practical settings such as human-in-the-loop learning with expert interventions~\cite{mandlekar2020human,spencer2020learning}, obtaining a short segment of corrective demonstrations may be  
cheaper than querying $\Ostate$ for each state therein. Thus, our cost model may need refinements to match practical applications, which we leave as interesting future work.}

\section{State-wise Annotation in Interactive Imitation Learning}


Recent work~\cite{foster2024behavior} on refined analysis of Behavior Cloning (BC) casts doubt on the utility of interaction in imitation learning: when measuring sample complexity in the number of trajectories annotated, BC is minimax optimal even among interactive algorithms~\cite[][Corollary 2.1 and Theorem 2.2]{foster2024behavior}. Although benefits of interactive approaches have been shown in specific examples, progresses so far have been sparse~\cite{foster2024behavior,rajaraman2021value}, with the most general results in the less-practical tabular setting~\cite{rajaraman2021value}. 
In this section, we reexamine this conclusion and show that interaction benefits imitation learning in a fairly general sense in the general function approximation setting: when measuring sample complexity using the number of state-wise annotations, we design an interactive algorithm with sample complexity better than BC, as long as the expert policy has a low recovering cost $\mu$ in the environment.

\begin{algorithm}[t]
\caption{$\StateDagger$: DAgger with State-wise annotation oracle}
\label{alg:dagger_one_sample}
\begin{algorithmic}[1]
\STATE \textbf{Input:} MDP $\Mcal$, state-wise expert annotation oracle $\Ostate$ with query budget $\Non$, stationary policy class $\Bcal$, online learning oracle $\alg$.
%
\FOR{$n = 1, \dots, \Non$}
    \STATE Query $\alg$ and receive $\pi^n$.
    \STATE Execute $\pi^n$ 
    and sample $s^n \sim d^{\pi^n}$. Query $\Ostate$ for $a^{*,n}=\pie(s^n)$.
    \STATE Update $\alg$ with loss function
    
    \begin{equation}
    \ell^n(\pi)
    := \log\rbr{\frac1{
    \pi(a^{*,n}|s^n)}}
    .
    \label{eqn:ell-n-u-1}
    \end{equation}
\ENDFOR
\STATE Output $\hat{\pi}$, a first-step uniform mixture of $\cbr{\pi^n}_{n=1}^{\Non}$.
\end{algorithmic}
\end{algorithm}

\subsection{Interactive IL Enables Improved Sample Complexity with State-wise Annotations}

Our algorithm \StateDagger (short for State-wise DAgger), namely Algorithm~\ref{alg:dagger_one_sample}, interacts with the demonstration expert using a state-wise annotation oracle $\Ostate$. 
Similar to the original DAgger~\cite{ross2011reduction}, it requires base policy class $\Bcal$ and reduces interactive imitation learning to 
no-regret online learning. 
At iteration $n$, 
it rolls out the current policy $\pi^n$
obtained from an online learning oracle 
$\alg$ and samples state $s^n$ from $d^{\pi^n}$. A classical example of $\alg$ is the exponential weight algorithm that chooses policies from $\bar{\Pi}_{\Bcal}$ (\cite{cesa2006prediction}; see also Proposition~\ref{prop:log_loss_base} in Appendix~\ref{sec:auxiliary_results}).
It then queries $\Ostate$ to get expert action $a^{*,n}$ and updates $\alg$ with loss function $\ell^n(\pi)$ induced by this new example (Eq.~\eqref{eqn:ell-n-u-1}). 
The final policy $\hat{\pi}$ is returned as a uniform first-step mixture of the historical policies $\{\pi^n\}_{n=1}^{\Non}$, i.e., sample one $\pi^n$ uniformly at random and execute it for the episode.
In contrast to the DAgger variants analyzed in~\cite{foster2024behavior,rajaraman2021value}
, which trains a distinct policy at each step—yielding $H$ policies in total—and employs trajectory-level annotations, our algorithm utilizes parameter sharing of the policy representation across all steps and uses state-wise annotations.\footnote{A one-sample-per-iteration version of AggreVate~\cite{ross2014reinforcement} has been analyzed in~\cite[][Section 15.5]{agarwal2019reinforcement}. Our analysis follows a similar structure and further  takes direct  advantage of the expert action feedback in DAgger and the deterministic realizability assumption to get refined sample complexity.}

We show the following performance guarantee of Algorithm~\ref{alg:dagger_one_sample}
with $\alg$ instantiated as the exponential weight algorithm: 






\begin{theorem}
\label{thm:log_loss_dagger_main}
Suppose \StateDagger is run with a state-wise expert annotation oracle $\Ostate$, an MDP $\Mcal$ where $(\Mcal, \pie)$ is $\mu$-recoverable, a policy class $\Bcal$ such that deterministic realizability (Assumption~\ref{assum:realizability}) holds, and the online learning oracle $\alg$ set as the exponential weight algorithm
with decision space 
$\bar{\Pi}_{\Bcal}$. Then it returns $\hat{\pi}$ such that, with probability at least $1-\delta$,
\[
J(\pie) - J(\hat{\pi})
\leq 
\mu H \cdot \frac{ \log(B) + 2\log(1/\delta)}{\Non}.
\]
\end{theorem}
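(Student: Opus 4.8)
The plan is to run the usual DAgger-style reduction from interactive imitation to no-regret online learning. It breaks into three steps: (a) a performance-difference argument that, via $\mu$-recoverability (which bounds the one-step cost-to-go gap $V_h^{\pie}(s)-Q_h^{\pie}(s,a)$ by $\mu$), upper bounds $J(\pie)-J(\hat\pi)$ by $\mu H$ times the average per-round probability that the learner's policy disagrees with $\pie$ on the states it itself induces; (b) bounding that disagreement probability by a conditional Hellinger distance between the learner's conditional action law and the expert's; and (c) combining the realizable log-loss regret of exponential weights (which forces the cumulative loss to be at most $\log B$, since $\pie$ incurs zero log-loss on every round) with a log-loss concentration inequality to bound the cumulative Hellinger error by $\log B+2\log(1/\delta)$. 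Only the last step is nontrivial.

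For step (a), I would first use that $\hat\pi$ is a first-step uniform mixture of $\{\pi^n\}_{n=1}^{\Non}$, so $J(\pie)-J(\hat\pi)=\frac1{\Non}\sum_{n=1}^{\Non}\bigl(J(\pie)-J(\pi^n)\bigr)$ and it suffices to bound each $J(\pie)-J(\pi^n)$. The performance difference lemma (swap $\pi^n$ for $\pie$ one step at a time) gives
\[
J(\pie)-J(\pi^n)=\sum_{h=1}^H\EE_{s\sim d_h^{\pi^n}}\EE_{a\sim\pi^n(\cdot\mid s)}\bigl[V_h^{\pie}(s)-Q_h^{\pie}(s,a)\bigr].
\]
The bracketed term is $0$ when $a=\pie(s)$ (since $\pie$ is deterministic), and $\mu$-recoverability bounds it by $\mu$ when $a\neq\pie(s)$; folding the $H$ steps into the average state-visitation distribution $d^{\pi^n}$ yields $J(\pie)-J(\pi^n)\le\mu H\cdot\EE_{s\sim d^{\pi^n}}[1-\pi^n(\pie(s)\mid s)]$. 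For step (b), using $1-p\le 2(1-\sqrt p)$ and $D_H^2(q,e_a)=2(1-\sqrt{q(a)})$, that expectation is at most $\varepsilon_n:=\EE_{s\sim d^{\pi^n}}\bigl[D_H^2\bigl(\pi^n(\cdot\mid s),e_{\pie(s)}\bigr)\bigr]$, so $J(\pie)-J(\hat\pi)\le\frac{\mu H}{\Non}\sum_{n=1}^{\Non}\varepsilon_n$.

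Step (c) has two parts. First, the exponential-weights forecaster over $\Delta(\Bcal)$ run with log-loss and returning each-step mixing policies (Proposition~\ref{prop:log_loss_base}) satisfies $\sum_n\ell^n(\pi^n)\le\log B+\min_{\pi\in\Bcal}\sum_n\ell^n(\pi)$; by Assumption~\ref{assum:realizability}, $\pie\in\Bcal$ and $\ell^n(\pie)=\log\bigl(1/\pie(\pie(s^n)\mid s^n)\bigr)=0$ for every $n$, hence $\sum_n\ell^n(\pi^n)\le\log B$ deterministically. Second, I would pass from $\sum_n\ell^n(\pi^n)$ to $\sum_n\varepsilon_n$ via the nonnegative mean-one martingale
\[
M_N=\prod_{n=1}^{N}\frac{\sqrt{\pi^n(\pie(s^n)\mid s^n)}}{\EE\bigl[\sqrt{\pi^n(\pie(s^n)\mid s^n)}\,\big|\,\mathcal{F}_{n-1}\bigr]},
\]
where $\mathcal{F}_{n-1}$ is the history before round $n$ (on which $\pi^n$ is measurable): Ville's inequality gives $M_N<1/\delta$ for all $N$ with probability $1-\delta$, and unrolling $\log M_N$ together with $\EE[\sqrt{\pi^n(\pie(s^n)\mid s^n)}\mid\mathcal{F}_{n-1}]=1-\frac12\varepsilon_n$ and $-\log(1-x)\ge x$ yields $\sum_n\varepsilon_n\le\sum_n\ell^n(\pi^n)+2\log(1/\delta)\le\log B+2\log(1/\delta)$. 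Plugging into the bound from step (b) gives the claim.

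The main obstacle is precisely this concentration: $\ell^n(\pi^n)$ is nonnegative but unbounded, so Hoeffding/Azuma does not apply and a naive Markov bound would give only a $1/\delta$ rather than a $\log(1/\delta)$ tail; the likelihood-ratio supermartingale above is what yields the stated $2\log(1/\delta)$ with coefficient one on $\log B$, and one has to be careful that $s^n\sim d^{\pi^n}$ with $\pi^n$ itself $\mathcal{F}_{n-1}$-measurable, so that the filtration is well-posed. Step (a) is otherwise standard, the only mild point being that the $\pi^n$ are each-step mixing policies in $\bar{\Pi}_{\Bcal}$, so disagreement with $\pie$ is the genuine probability $1-\pi^n(\pie(s)\mid s)$ rather than a $0/1$ quantity.
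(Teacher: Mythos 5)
Your proof is correct and follows essentially the same route as the paper: performance difference lemma plus $\mu$-recoverability to reduce to the on-policy disagreement probability, the delta-mass Hellinger inequality $1-p(a)\le D_H^2(p,e_a)$, the exp-concavity/exponential-weights regret bound with $\ell^n(\pie)=0$, and an online-to-batch concentration step. The only difference is cosmetic: your likelihood-ratio supermartingale with Ville's inequality is exactly the content of the lemma the paper invokes as a black box (Lemma~A.14 of Foster et al.), which the paper applies directly to obtain $\Onstate_{\Non}\le\log B+2\log(1/\delta)$.
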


Theorem~\ref{thm:log_loss_dagger_main} shows that \StateDagger returns a policy of suboptimality $O(\frac{\mu H \log B}{\Non})$ using $\Non$ interactive state-wise annotations from the expert. 
In comparison, with the cost of $\Non$ state-wise annotations, one can obtain $\frac{C\Non}{H}$ trajectory-wise annotations;~\cite{foster2024behavior}'s analysis shows that Behavior Cloning with this number of trajectories from $\pie$ returns a policy of suboptimality $O(\frac{R H \log B}{C \Non})$ (recall Theorem~\ref{thm:bc-guarantee}). 
Thus, if $C \ll \frac{R}{\mu}$, Algorithm~\ref{alg:dagger_one_sample} has a better cost-efficiency guarantee than Behavior Cloning.


We now sketch the proof of Theorem~\ref{thm:log_loss_dagger_main}. 
In line with~\cite{foster2024behavior}, we define the online, on-policy state-wise estimation error as
\[
\Onstate_N := \sum_{n=1}^{N} \EE_{s \sim d^{\pi^n}} \sbr{D^2_\text{H}(\pi^n(\cdot \mid s),\pie(\cdot \mid s)) }.
\]

The proof proceeds by bounding this error and translating it to the performance difference between $\hat\pi$ and $\pie$. While our definition of estimation error is similar to~\cite[][Appendix C.2]{foster2024behavior}, their definition 
requires all $H$ states per trajectory, while ours depends on the distribution over a state sampled uniformly from the rollout of policy $\pi^n$. This enables each labeled state to serve as immediate online feedback, fully utilizing the adaptivity
of online learning. 



\subsection{Experimental Comparison}

We conduct a simple simulation study comparing the sample efficiency of log-loss Behavior Cloning~\cite{foster2024behavior} and \StateDagger in four MuJoCo~\cite{todorov2012mujoco, brockman2016openai} continuous control tasks with $H=1000$ and pretrained deterministic MLP experts~\cite{schulman2015trust,schulman2017proximal}. Considering MuJoCo's low sensitivity to horizon length~\cite{foster2024behavior}, we reveal expert states one by one along consecutive trajectories for BC to allow fine-grained state-wise sample complexity
comparison, while \StateDagger queries exactly one state per iteration by sampling from the latest policy's rollout and updating immediately with the expert's annotation. 
In \StateDagger, we implement the online learning oracle $\alg$ so that it outputs a policy that approximately minimizes the log loss.
In addition to log loss, we also include results with online learning oracle minimizing historical examples' total square loss 
in Appendix~\ref{sec:mse_experiment}.
We defer other implementation details to Appendix~\ref{sec:appendix_experiment_results}. 


Figure~\ref{fig:state_wise_sample} shows the performance of the learned policy as a function of the number of state-wise annotations. When each interactive state-wise annotation has the same cost as an offline (state, expert action) pair ($C=1$),  \StateDagger has superior and more stable performance than Behavior Cloning. For a given target performance (e.g., near expert-level), \StateDagger often requires significantly fewer state-wise annotations than BC—especially on harder tasks—though the gains are less pronounced on easier ones like Ant and Hopper. To highlight sample efficiency, we plot \StateDagger using only half the annotation budget of BC; despite this, it still matches or surpasses BC on several tasks, suggesting meaningful benefits from interaction when $C$ is small (e.g., $C = 3$ for Walker).




\begin{figure}[t]  
  \centering  \includegraphics[width=\textwidth]{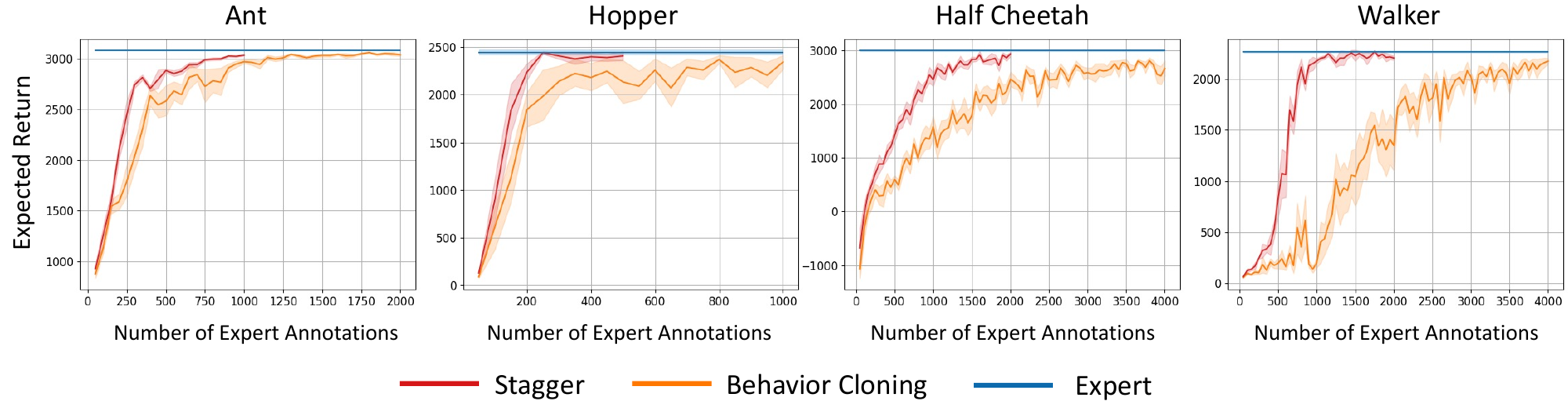}
  \caption{
State-wise sample complexity comparison between Behavior Cloning and \StateDagger. 
Shaded areas show the 10th–90th percentile bootstrap confidence intervals~\cite{diciccio1996bootstrap} over 10 runs. 
\StateDagger matches or exceeds BC with $50\%$ fewer annotations, achieving better state-wise annotation efficiency.
}
\label{fig:state_wise_sample}
\end{figure}

\section{Hybrid Imitation Learning: Combining Offline Trajectory-wise and Interactive State-wise Annotations}

Practical deployments of imitation learning systems often learn simultaneously from offline and interactive feedback modalities~\cite{kelly2019hg, hoque2021thriftydagger}: for example, in autonomous driving~\cite{zhang2017query, badue2021self, zhao2023real}, the learner has access to some offline expert demonstrations to start with, and also receives interactive expert demonstration feedback in trajectory segments for subsequent finetuning. 
Motivated by this practice, we formulate the following problem:

\paragraph{Hybrid Imitation Learning (HyIL): Problem Setup.}
The learner has access to two complementary sources of expert supervision:
\begin{itemize}
\item $\Noff$ offline expert trajectories $\Doff = \cbr{ (s_{i,h}, a_{i,h})_{h=1}^H, i \in [\Noff] }$, sampled i.i.d.\ from rolling out $\pi^E$ in $\Mcal$;
\item A state-wise annotation oracle $\Ostate$ that can be queried interactively up to $\Non$ times.
\end{itemize}
Each offline (state, action) pair takes a unit cost, and the cost of an interactive query is  $C \geq 1$. The total cost budget is therefore $H \cdot \Noff + C \cdot \Non$.
The goal is to return a policy $\hat{\pi}$ that minimizes its suboptimality relative to the expert policy $J(\pi^E) - J(\hat{\pi})$. 


We ask: can we design a HyIL algorithm with provable sample efficiency guarantee? Furthermore, can its performance surpass pure BC and pure interactive IL with the same cost budget?

\begin{algorithm}[t]
\caption{\WarmStartDagger: Warm-start \StateDagger with offline demonstrations}
\label{alg:warm_il}
\begin{algorithmic}[1]
\STATE \textbf{Input:} MDP $\Mcal$, state-wise expert annotation oracle $\Ostate$, stationary policy class $\Bcal$, online learning oracle $\alg$, offline expert dataset $D_{\text{off}}$ of size $\Noff$, online budget $\Non$
\STATE Initialize $\alg$ with policy class $\Bcal_{\text{bc}} := \{ \pi \in \Bcal : \pi(s_h) = a_h,\ \forall h \in [H], \forall (s, a)_{1:H} \in D_{\text{off}} \}$.
\label{step:bc-class}
\FOR{$n = 1, \dots, \Non$}
    \STATE Query $\alg$ and receive $\pi^n$.
    \STATE Execute $\pi^n$ and sample
    $s^n \sim d^{\pi^n}$. Query $\Ostate$ for $a^{*,n}=\pie(s^n)$.
    \STATE Update $\alg$ with loss function:
    \begin{equation}
        \ell^n(\pi) := \log \left( \frac{1}{ 
        \pi(a^{*,n} \mid s^n)} \right).
        \label{eqn:loss-warmstart}
    \end{equation}
\ENDFOR
\STATE \textbf{Output:} $\hat{\pi}$, a first-step uniform mixture of $\{\pi^1, \dots, \pi^N\}$.
\end{algorithmic}
\end{algorithm}








\subsection{\WarmStartDagger: Algorithm and Analysis}

We answer the above questions by proposing the \WarmStartDagger algorithm, namely Algorithm~\ref{alg:warm_il}. 
It extends \StateDagger to incorporate offline expert demonstrations, in that it constructs $\Bbc$, a restricted policy class that contains all policies in $\Bcal$
consistent with all offline expert demonstrations (line~\ref{step:bc-class}). 
It subsequently performs online log-loss optimization on $\Bbc$ over state-action pairs collected online, where the state $s^n$ is obtained by rolling out $\pi^n$ in the MDP $\Mcal$, and the action $a^{*,n}$ is annotated by the state-wise expert annotation oracle $\Ostate$. For analysis, we introduce the following definitions.




\begin{definition}[Non-stationary Markovian policies]
A non-stationary Markovian policy $\nu = (\nu_1, \ldots, \nu_H)$ is a collection of $H$ mappings, with each $\nu_h$ in $\Delta(\Acal)^{\Scal}$, where upon rolling out $\nu$, at every step $h \in [H]$, the agent takes action $a_h$ by sampling from $\nu_h(\cdot \mid s_h)$.  
\end{definition}

\begin{definition}[Step-wise completion of stationary policy class] 
\label{def:policy_completion}
For a stationary policy class $\Bcal \subseteq \Delta(\Acal)^\Scal$, define its step-wise completion to be a class of nonstationary Markovian policies: 
\[
\tilde{\Bcal} = \cbr{ \nu = (\nu_1, \ldots, \nu_H): \nu_h \in \Bcal,  
\text{ for all } h \in [H]
}
\]
\end{definition}




In words, each \(\pi \in \tilde{\Bcal}\) uses a possibly distinct policy \(\pi_h\) from \(\Bcal\) to take action at step \(h\).
By definition, $\tilde{B} := |\tilde{\Bcal}|$ 
is at most \(B^H\).
An interesting special case is the \emph{non-parameter sharing setting}~\cite{ross2010efficient,rajaraman2020toward,rajaraman2021value,foster2024behavior}, where
the set of possible states visited at different steps are disjoint (see also footnote~\ref{fn:can-capture-layered-setting} for examples), and $\Bcal$ are \emph{factorized}, in the sense that parameters associated with the policies to use at different steps are separate. In this case, we have that $\tilde{\Bcal} = \Bcal$ and thus $\tilde{B} = B$. 



\begin{theorem}
\label{thm:wstagger_main}
If \WarmStartDagger is run with a state-wise expert annotation oracle $\Ostate$, an MDP $\Mcal$ where $(\Mcal, \pie)$ is $\mu$-recoverable, a policy class $\Bcal$ such that deterministic realizability (Assumption~\ref{assum:realizability}) holds, and the online learning oracle $\alg$ set as the exponential weight algorithm with each-step mixing policies, then it returns $\hat{\pi}$ such that, with probability at least $1-\delta$,
\begin{equation}
J(\pie) - J(\hat{\pi}) \leq O\left( \min \left( \frac{R \log(\tilde{B}/\delta)}{\Noff}, \frac{\mu H \log(B_{\text{bc}}/\delta)}{ \Non } \right) \right),
\label{eqn:warm-stagger-main}
\end{equation}
where we recall that $B \leq \tilde{B} \leq B^H$, and $B_{\text{bc}}:= |\Bcal_{\text{bc}}| \leq B$. 
\end{theorem}
Theorem~\ref{thm:wstagger_main} shows that \WarmStartDagger finds a policy with suboptimality guarantee not signficantly worse than BC or \StateDagger: first, Behavior Cloning using the offline data has a suboptimality guarantee of $O\rbr{ \frac{R \log(B/\delta)}{\Noff} }$ (cf. Theorem~\ref{thm:bc-guarantee}), and \WarmStartDagger's guarantee is worse by at most a factor of $H$; second, 
\StateDagger without using offline data has a suboptimality of $O \rbr{ \frac{\mu H \log(B/\delta)}{ \Non } }$ (cf. Theorem~\ref{thm:log_loss_dagger_main}), which is on par with the second term of Eq.~\eqref{eqn:warm-stagger-main}. 
We conjecture that the $\log\tilde{B}$ dependence may be sharpened to $\log B$, and leave this as an interesting open question.

\begin{remark}
One may consider another baseline that naively switches between BC and \StateDagger based on a comparison between their bounds; however, such a baseline needs to know $R$ and $\mu$ ahead of time. In practice, we expect \WarmStartDagger to perform much better than this baseline, since it seamlessly incorporates both sources of data, and its design does not rely on theoretical bounds that may well be pessimistic. 
\end{remark}


\subsection{On the Benefit of Hybrid Imitation Learning}
\label{sec:hyil_benefit_main}

Theorem~\ref{thm:wstagger_main} is perhaps best viewed as a fall-back guarantee for \WarmStartDagger: its performance is not much worse than using either of the feedback source alone.
In this section, we demonstrate that the benefit of hybrid feedback modalities can go beyond this: 
we construct an MDP, in which hybrid imitation learning has a significantly better sample efficiency than both offline BC and interactive \StateDagger. 
Specifically, we prove the following theorem: 



\begin{theorem}
\label{thm:tabular-ex}
For large enough $S,H,A$, 
there exists an episodic MDP $\Mcal$ with $S$ states, $A$ actions, and horizon $H$, and expert policy $\pie$ such that:
\begin{itemize}[itemsep=0pt]
\item With $\Omega(S)$ offline expert trajectories for BC, the learned policy is $\Omega(H)$-suboptimal;

\item With $\Omega(H S)$ interactive expert annotations for \StateDagger, the learned policy is $\Omega(H)$-suboptimal;

\item With $\tilde{O}\left( S/H \right)$ offline trajectories and $O(1)$ expert interactions, \WarmStartDagger learns a policy $\hat{\pi}$ such that $J(\hat{\pi}) = J(\pi^E)$.
\end{itemize}
\end{theorem}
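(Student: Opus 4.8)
# Proof Proposal for Theorem~\ref{thm:tabular-ex}

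\textbf{Overview of the construction.} The plan is to build a ``chain'' MDP that exhibits both the compounding-error and cold-start phenomena. I envision a layered structure with a long ``good path'' that the expert follows, where the expert's action at most states is essentially forced (so BC on a few trajectories learns those), but there is a small number of \emph{rare bottleneck states} $s^\star_1,\ldots,s^\star_k$ (with $k = O(1)$) that lie on the good path but are visited with probability polynomially small under the expert's trajectory distribution. At each bottleneck, taking the wrong action sends the agent into an absorbing ``bad'' region where it incurs $\Omega(H)$ cumulative regret and never recovers --- this also makes $(\Mcal, \pie)$ fail to be $o(H)$-recoverable at those states, which is why the \StateDagger bound (Theorem~\ref{thm:log_loss_dagger_main}) degrades to $\mu H = \Omega(H^2)$ rather than being useful. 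The remaining $\Theta(S)$ states are arranged so that the expert's behavior on them requires seeing $\Omega(S)$ trajectories' worth of coverage to pin down via passive sampling, but \emph{once those are pinned down}, the restricted class $\Bbc$ collapses so that only the bottleneck actions remain ambiguous.

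\textbf{Lower bound for BC.} For the first bullet, I would exhibit a family of instances (or a single instance with a large policy class $\Bcal$) where with $O(S)$ offline trajectories, with constant probability at least one bottleneck state $s^\star_j$ is never observed, so any $\pi \in \Bcal$ consistent with the data --- including the BC minimizer --- may take the wrong action there. Since a single bottleneck mistake costs $\Omega(H)$, the learned policy is $\Omega(H)$-suboptimal. The key quantitative step is choosing the visitation probability of each $s^\star_j$ to be $\Theta(1/S)$ (or slightly larger), so $\Theta(S)$ trajectories are the threshold; a coupon-collector / Chernoff argument gives the constant-probability miss. I must be careful that this is consistent with Theorem~\ref{thm:bc-guarantee}: that bound is $\tilde O(RH\log B / \Noff)$, and with $R = \Theta(H)$ and $\Noff = \Theta(S)$ it reads $\tilde O(H^2 \log B / S)$, which is $\gg H$ in the regime $S \ll H \log B$, so there is no contradiction --- I should state the parameter regime ($S$ polynomial in $H$ but the relevant constants arranged) explicitly.

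\textbf{Lower bound for \StateDagger{} and upper bound for \WarmStartDagger.} For the second bullet, the point is that \StateDagger samples $s^n \sim d^{\pi^n}$, and until the policy is already near-optimal, its rollouts also hit the bottleneck states only with probability $\Theta(1/S)$ per step, hence $\Theta(H/S)$ per trajectory; so $\Omega(HS)$ interactive annotations are needed merely to \emph{observe} all bottlenecks, and with fewer the returned mixture policy still errs at some $s^\star_j$ with constant probability, giving $\Omega(H)$ suboptimality. (Alternatively/additionally, one can use the $\mu H$ factor in Theorem~\ref{thm:log_loss_dagger_main} directly since $\mu = \Omega(H)$ here.) For the third bullet --- the positive result --- I would argue that $\tilde O(S/H)$ offline trajectories, each of length $H$, supply $\tilde O(S)$ state-action pairs that, by the coverage design, hit \emph{every non-bottleneck state} with high probability, so $\Bbc$ contains only policies that already behave optimally off the bottlenecks; there are only $k = O(1)$ remaining degrees of freedom. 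Then a single interactive rollout of any $\bar\pi_u \in \bar\Pi_{\Bbc}$ — or $O(1)$ of them — reaches and resolves the bottlenecks (here I may need the MDP designed so that \emph{any} policy consistent with the offline data is forced onto the good path up to the first unresolved bottleneck, so rollouts deterministically hit $s^\star_1$, then $s^\star_2$, etc.), after which $\Bbc$ is pinned down to $\{\pie\}$ and $\hat\pi = \pie$ exactly, so $J(\hat\pi) = J(\pie)$.

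\textbf{Main obstacle.} I expect the delicate part to be engineering the MDP so that all three claims hold \emph{simultaneously} with a clean separation: the bottleneck states must be rare under \emph{every} near-consistent policy's rollout (for the BC and \StateDagger lower bounds), yet \emph{reachable in $O(1)$ interactive episodes} once the offline data has restricted the class (for the \WarmStartDagger upper bound). The natural way to reconcile these is a \emph{sequential reachability} design: reaching $s^\star_{j+1}$ requires having already taken the correct action at $s^\star_j$, and the offline data forces correctness everywhere \emph{except} the bottlenecks, so a policy in $\Bbc$ rolled out interactively marches deterministically down the good path and encounters $s^\star_1, s^\star_2, \ldots$ one per episode (or all in one episode if the path is arranged linearly). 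Verifying that the exponential-weights oracle on $\Bbc$, combined with the first-step-mixture output rule, actually yields $\hat\pi = \pie$ (not merely low expected suboptimality) after the bottlenecks are observed will require checking that once every state is labeled, $\Bbc$ is the singleton $\{\pie\}$ --- this is where the deterministic realizability assumption and the exact-consistency definition of $\Bbc$ do the work. The remaining steps (Chernoff/coupon-collector bounds, plugging into Theorems~\ref{thm:bc-guarantee} and~\ref{thm:log_loss_dagger_main}) are routine.
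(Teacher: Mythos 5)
There is a genuine gap, and it is exactly the one you flag as your ``main obstacle'' without resolving it. Your design requires the $O(1)$ bottleneck states to be (i) visited with probability $\Theta(1/S)$ per trajectory under the expert and under any data-consistent policy (so that BC with $\Theta(S)$ trajectories and \StateDagger both miss them), yet (ii) reachable in $O(1)$ interactive rollouts once the offline data has restricted the class. Your proposed fix---sequential reachability so that a policy in $\Bbc$ ``marches deterministically'' to $s^\star_1$---makes the bottlenecks non-rare for the expert as well, which destroys the BC lower bound in (i). The paper escapes this by \emph{not} putting the hard-to-learn state on the expert's path at all: the crucial state $\mathbf{b'}$ is \emph{never} visited by the expert (so no amount of offline data covers it), but it is absorbing for any imperfect policy, so a warm-started learner that errs once in $\mathbf{E'}$ spends $\Omega(H)$ steps in $\mathbf{b'}$ and the uniform state-sampling annotates it with constant probability per episode. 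The reward is then engineered so that knowing the expert action on all of $\mathbf{E}$ (from $\tilde O(S/H)$ offline trajectories, each contributing $H$ annotations) plus the single recovery action at $\mathbf{b'}$ yields \emph{exactly} expert return---the large set $\mathbf{E'}$, which is what makes BC expensive, never needs to be learned. Your construction has no analogue of this bypass, so the third bullet cannot be established as written.

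Separately, your \StateDagger lower bound is off by a factor of $H$. If a bottleneck is hit with probability $\Theta(H/S)$ per episode and one state is sampled uniformly from the $H$-step rollout, the per-episode probability of \emph{annotating} it is $\Theta(1/S)$, giving only an $\Omega(S)$ annotation lower bound, not $\Omega(HS)$. (The parenthetical appeal to the $\mu H$ factor in Theorem~\ref{thm:log_loss_dagger_main} is also not valid: that is an upper bound on \StateDagger and cannot certify a lower bound.) The paper's $\Omega(HS)$ comes from a cold-start mechanism you do not have: there are $\Theta(S)$ states in $\mathbf{E}$ that all must be annotated, an early rollout dies into the absorbing state $\mathbf{b}$ after seeing only $O(1)$ new unannotated states (a geometric argument with success probability $(A-1)/A$), and the uniformly sampled state therefore lands on a new useful state with probability only $O(1/H)$ per episode---hence $\Omega(H)$ episodes per new state and $\Omega(HS)$ in total. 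You would need to import both of these structural ideas (a large set of mandatory states plus early absorption wasting most annotations, and an expert-unreachable but learner-absorbing recovery state) for the three bullets to hold simultaneously.
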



Theorem~\ref{thm:tabular-ex} suggests that when $HS \gg \max(1, C)$ and $C \gg \frac{1}{H}$, \WarmStartDagger achieves expert-level performance with significantly lower cost than two baselines. To see this, observe that \WarmStartDagger has a total cost of $O(S + C)$, which is much smaller than $\Omega(HS)$ by BC, and $\Omega(HSC)$ by \StateDagger. 



\begin{figure*}[t]  
  \centering
  \includegraphics[width=\textwidth]{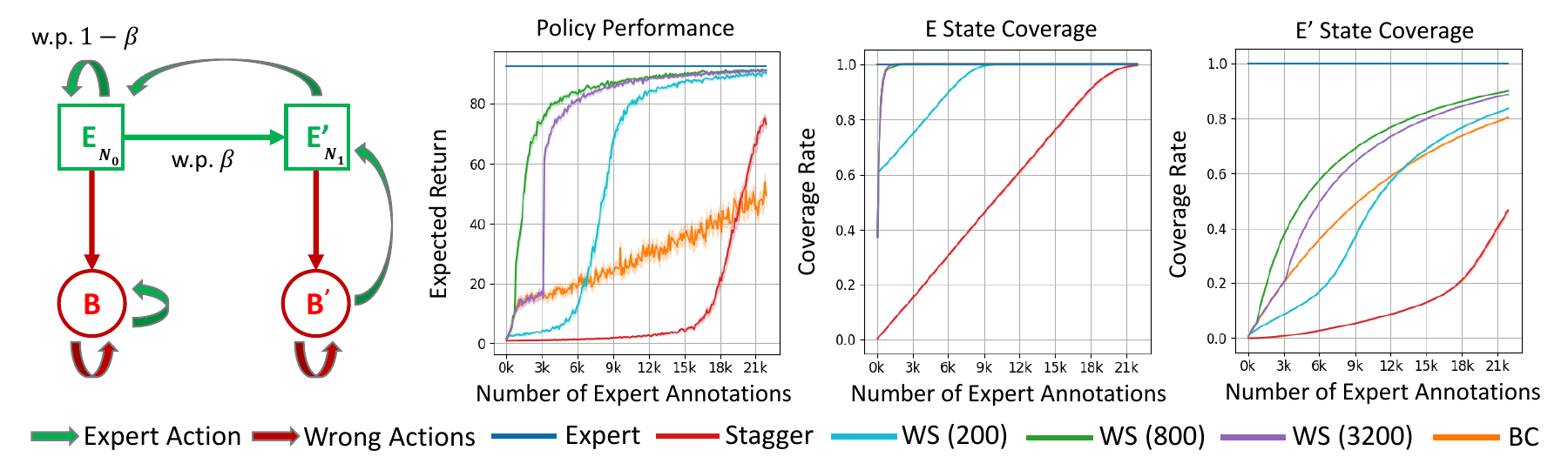}
  \caption{MDP construction and simulation results of algorithms with rewards assigned only in $\mathbf{E}$. We evaluate \WarmStartDagger (WS) with 
  200, 800, 3200 offline (state, expert action) pairs. All methods are evaluated under equal total annotation cost with $C = 1$.
  With 800 offline (state, expert action) pairs, WS significantly improves the sample efficiency over the baselines and explores $\mathbf{E'}$ more effectively.}
  \label{fig:mdp_design}
\end{figure*}


\paragraph{The MDP construction and simulation results.} We now sketch our construction of MDP $\Mcal$. $\Mcal$ has an episode length $H = \Omega(\log(S))$  and action space of size greater than $10H$.
For each state, one of the actions is taken by the expert; the rest are ``wrong'' actions.
We illustrate $\Mcal$'s state space on the left of Figure~\ref{fig:mdp_design};
specifically, it consists of the following subsets:
\begin{itemize}[itemsep=0pt]
\item Unrecoverable state $\mathbf{B} := \cbr{ \mathbf{b} }$: a special absorbing state that is unrecoverable by any action (dead).

\item Expert ideal states $\mathbf{E}$, where $|\mathbf{E}| = N_0$:
this can model for example, an autonomous driving agent driving stably on the edge of a cliff~\cite{ross2011reduction}, where any incorrect action transitions the agent to the unrecoverable state $\mathbf{b}$ (e.g., car falling off the cliff). Taking the expert action keeps the agent in $\mathbf{E}$ with high probability $(1 - \beta)$, and with a small probability $\beta$, moves the agent to $\mathbf{E'}$ (e.g., a safe slope).


\item Expert recoverable states $\mathbf{E'}$: this models the agent driving outside the edge of the cliff in a safe slope. When in $\mathbf{E'}$, taking the expert action allows the agent to return to a uniformly sampled state in $\mathbf{E}$. Taking a wrong action from $\mathbf{E'}$ leads to reaching a recoverable state $\mathbf{b'}$ (e.g., rest area).


\item Recoverable state $\mathbf{B'} = \cbr{ \mathbf{b'} }$: 
Not knowing how to act in $\mathbf{b'}$ will result in the agent getting trapped in $\mathbf{b'}$ for the episode.


\end{itemize}


We now briefly justify each algorithm’s learning performance as stated in Theorem~\ref{thm:tabular-ex}.
First, BC only observes expert actions in $\mathbf{E}$ and $\mathbf{E'}$, but never in $\mathbf{b'}$. As a result, near-expert performance at test time requires high coverage over $\mathbf{E'}$; otherwise, BC's trained policy will likely incur compounding errors and get trapped in $\mathbf{b}'$.
Second, \StateDagger suffers from a cold-start problem: early policies fail to explore $\mathbf{E}$ efficiently, and incorrect actions can cause transitions into $\mathbf{b}$. Consequently, coverage over $\mathbf{E}$ grows slowly, and the policy may still fail on unannotated states in $\mathbf{E}$ even with $\Omega(HS)$ queries.
Lastly, \WarmStartDagger benefits from offline data that fully covers $\mathbf{E}$, and uses a small number of interactions to visit $\mathbf{b'}$ and query the expert, avoiding costly exploration in $\mathbf{E'}$ while matching expert performance.

We also conduct a simulation of the aforementioned three algorithms in a variant of the above MDP with $N_0 = 200$, $N_1 = 1000$, $H = 100$, and $\beta = 0.08$, using another 
reward function that assigns a reward of 1 only when the agent visits the states in $\mathbf{E}$. Here, we let the online learning oracle $\alg$ optimize 0-1 loss on the data seen so far, which is equivalent to minimizing log loss under a class of deterministic policies and discrete actions. Figure~\ref{fig:mdp_design} shows return and state coverage as functions of the number of expert annotations, averaged over 200 runs. 


We observe that: (1) BC exhibits slow improvement, as $\mathbf{b'}$ remains unseen (and thus unannotated) throughout training, resulting in poor performance even with substantial coverage (e.g., $80\%$) over $\mathbf{E'}$; (2) \StateDagger is sample-inefficient due to slow exploration over states in $\mathbf{E}$, consistent with the cold-start intuition; (3)  \WarmStartDagger (WS), when initialized with limited (e.g., 200) offline (state, expert action) pairs, still needs to explore $\mathbf{E}$ first before it can safely reach $\mathbf{E'}$ without failure; and (4) \WarmStartDagger with sufficient offline coverage on $\mathbf{E}$ (e.g., initialized with 3200 offline (state, expert action) pairs) directly benefits from exploring $\mathbf{b'}$ with immediate performance gain, and enables safe and even faster exploration than the expert in $\mathbf{E'}$.

\begin{figure*}[t]  
  \centering
  \includegraphics[width=\textwidth]{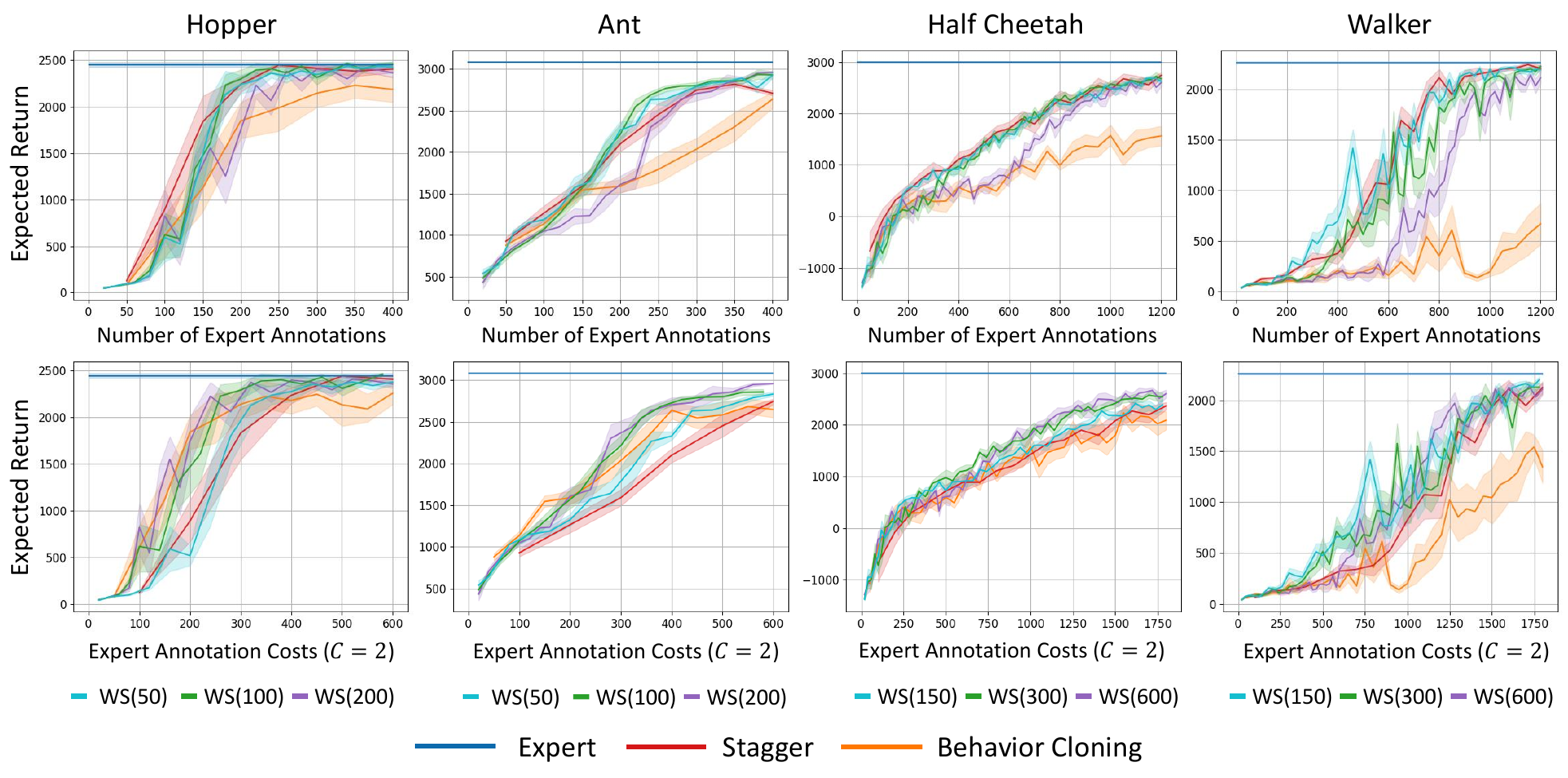}
  \caption{Sample and cost efficiency on MuJoCo tasks. The top row shows expected return vs. number of annotations ($C = 1$); the bottom row shows performance in a cost-aware setting ($C = 2$). \WarmStartDagger (WS) is initialized with 1/8, 1/4, or 1/2 of the total annotation budget as offline demonstrations. Specifically, WS($n$) refers to WS with offline expert trajectory demonstrations of total length $n$.
  For a good range of $n$'s, WS($n$) matches \StateDagger in sample efficiency and outperforms the baselines when $C = 2$.
  }
  \label{fig:log_comparison_main}
\end{figure*}


\subsection{Hybrid IL on Continuous Control Benchmarks}
\label{sec:experiments}
Following our earlier MuJoCo-based comparison of Behavior Cloning and \StateDagger, we now evaluate \WarmStartDagger (WS) on the same  benchmarks. This experiment aims to answer: Does WS reduce total annotation cost compared to the baselines?

Based on the observation in Figure~\ref{fig:state_wise_sample}, we assign 400 total state-wise annotations for Hopper and Ant, and 1200 for HalfCheetah and Walker2D. For \WarmStartDagger, we allocate 1/8, 1/4, or 1/2 of the total annotations to offline data, with the remainder used for interactive queries. For a fair comparison, all methods are evaluated under the same total annotation cost, with $C = 1$ or $C = 2$. This makes the baselines stronger, as they have full cost budget assigned to a single source.

In terms of the number of state-wise annotations ($C=1$), the results align with our theoretical findings: WS performs not significantly worse than BC or \StateDagger, regardless of the offline dataset size.
 WS still achieves performance competitive with \StateDagger, and even outperforms it on Ant when $C = 1$. 
Furthermore, as shown by the purple curves, WS with appropriate offline sample size has preferable performance over 4 tasks when $C = 2$, highlighting its utility in cost-aware regimes.
These results confirm that \WarmStartDagger reduces total annotation cost for moderate $C$.

\section{Related Work}


\paragraph{Imitation Learning with offline demonstrations}, pioneered in autonomous driving~\cite{pomerleau1988alvinn}, 
was solved by offline, state‑wise supervised learning in early works~\cite{ross2010efficient,syed2010reduction} and named Behavior Cloning (BC). A recent analysis by~\cite{foster2024behavior} employs trajectory-wise Hellinger distance to tighten the dependence of BC on the horizon at the trajectory level, although its sample complexity measured per state still grows quadratically with the horizon in the worst case.
This shortcoming, often termed
covariate shift or compounding error~\cite{pomerleau1988alvinn}, arises when the learned policy's imperfect imitation drives the learner to unseen states, resulting in a cascading sequence of mistakes.
From a data collection perspective, this can be mitigated by noise‑injection approaches such as~\cite{laskey2017dart,ke2021grasping}. By leveraging additional environment interactions,
generative‑adversarial IL methods~\cite{ho2016generative,sun2019provably,ke2020imitation,spencer2021feedback} 
frame learning as a two‑player game, and aims to find a policy that matches expert’s state‑action visitation distributions. This setting is 
also known as  ``apprenticeship learning using inverse reinforcement learning'' in earlier works~\cite{abbeel2004apprenticeship,syed2007game}.
Quantitative comparisons with these methods are beyond our scope, as they rely on extensive interactions with the MDP and access to a class of discriminator functions, while we focus on understanding the utility of state‑wise interactive annotations. This line of works also include recent work of~\cite{ren2024hybrid}, who introduce ``Hybrid Inverse Reinforcement Learning'', which leverages hybrid reinforcement learning~\cite{songhybrid} to accelerate its inner loop of policy search; different from theirs, our ``hybrid'' setting focus on utilizing heterogeneous data modalities.
Recent offline imitation learning approaches~\cite{chang2021mitigating,zeng2023demonstrations} do not require MDP access but still require access to  offline datasets possibly collected by non-expert policies, either with broad expert coverage or a large transition buffer.
Our work assumes that interacting with the environment does not incur costs; we leave a detailed analysis that takes into account environment interaction cost as future work.

\paragraph{Imitation Learning with interactive demonstrations}, first proposed by~\cite{ross2010efficient}, allows the expert to provide corrective feedback to the learner's action retroactively. Assuming low costs of expert recovery from mistakes ($\mu$-recoverability; recall Section~\ref{sec:prelims}), DAgger~\cite{ross2011reduction}, and following works~\cite{kim2013learning, ross2014reinforcement,sun2017deeply,cheng2018convergence, cheng2019predictor,rajaraman2021value} outperform traditional BC both theoretically and empirically. 
However, this efficiency demands substantial annotation effort~\cite{mandlekar2020human}.
Although DAgger~\cite{ross2011reduction} and some subsequent works~\cite{sun2017deeply,rajaraman2021value,swamy2022minimax,foster2024behavior} popularized the practice of annotating full trajectories, there has also been growing interest in state-wise annotations~\cite{nguyen2020active,li2022efficient,sekhari2023selective,li2023agnostic}, which appeared as early as~\cite{ross2010efficient,judah2014active}. In fact, practical applications of DAgger often adopt
partial trajectory annotation
in expert-in-the-loop~\cite{mandlekar2020human,spencer2022expert,liu2022robot} designs, as seen in~\cite{zhang2016query,kelly2019hg,hoque2021thriftydagger,torok2025greedy}, where issues such as inconsistencies caused by retroactive relabeling~\cite{laskey2017comparing} can be mitigated.
These methods often leverage human- or machine-gated expert interventions to ensure safety during data collection~\cite{zhang2017query,menda2017dropoutdagger}, provide more targeted feedback~\cite{menda2019ensembledagger,cui2019uncertainty}, and enable  learning on the fly~\cite{shi2024yell}. The use of selective state-wise queries aligns with our goal of promoting interactive imitation learning with efficient supervision and provable sample efficiency. 
We regard our contribution as providing a starting point for understanding this increasingly popular paradigm of
partial trajectory annotations.

\paragraph{Utilizing Offline Data for Interactive Learning.} Many practical deployments of interactive learning systems 
do not start from tabula rasa; instead, prior knowledge of various forms is oftentimes available. For example, combining offline data and interactive feedback has recently gained much popularity in applications such as training large language models to follow human instructions~\cite{dongrlhf,ouyang2022training}, and bandit machine translation~\cite{nguyen2017reinforcement}.
Many recent theoretical works in reinforcement learning try to quantify the computational and statistical benefit of combining offline and online feedback: for example,~\cite{li2023reward,tanhybrid} show provable reduction of sample complexity using hybrid reinforcement learning, using novel notions of partial coverage;~\cite{songhybrid} shows that under some structural assumptions on the MDP, hybrid RL can bypass computational barriers in online RL~\cite{kane2022computational}. Many works also quantify the benefit of utilizing additional offline data sources in the contextual bandit domain; for example,~\cite{oetomo2023cutting,sharma2020warm,zhang2019warm} study warm-starting contextual bandit learning using offline bandit logged data or supervised learning examples. While some variants of DAgger~\cite{zhang2017query,hoque2021thriftydagger} also operate in a hybrid setting, our work focuses on a fundamental formulation that explicitly accounts for the cost asymmetry between offline and interactive annotations~\cite{settles2008active}.



\section{Conclusion}
We revisit imitation learning from the perspective of state-wise annotations. 
We show via the \StateDagger algorithm that, interaction with the demonstrating expert, with its cost properly measured, can enable provable cost efficiency gains over Behavior Cloning. 
We also propose \WarmStartDagger that combines the benefits of offline data and interactive feedback. Our theoretical analysis shows that such a hybrid method can strictly outperform both pure offline and pure interactive baselines under realistic cost models.
Empirical results on our synthetic MDP support our theoretical findings, while MuJoCo experiments demonstrate the practical viability and competitive performance of our methods on continuous control tasks. In Appendix~\ref{sec:decoupled_h_dist}, we also show that a trajectory-wise annotation variant of  DAgger can match the sample complexity of log-loss Behavior Cloning without recoverability assumptions, with additional experiments (Appendix~\ref{sec:tragger_experiments_appendix}).

\paragraph{Limitations:} Our design of imitation learning algorithm only aims at closing the gap between the performance of the expert and the trained policy; thus, the performance of our learned policy is bottlenecked by the expert's performance. In this respect, designing imitation learners that output policies surpassing expert performance is an important direction. 

Our theory provide sample complexity guarantees for the discrete-action setting with deterministic and  realizable expert. When such assumptions are relaxed, additional challenges arise~\cite{simchowitz2025pitfalls}. In this respect, there remains a gap between our theoretical analysis and our MuJoCo experiment results. In future work, we are interested in conducting additional experiments on discrete-action control problems (e.g., Atari) as well as language model distillation tasks.  

\paragraph{Acknowledgments:} We thank the anonymous NeurIPS reviewers for their helpful feedback, which significantly improved the presentation of the paper.
We thank Kiant\'{e} Brantley for helpful discussions on state-wise annotations. 
We thank National Science Foundation IIS-2440266 (CAREER) for research support. 


\bibliographystyle{plain}
\bibliography{il}

\newpage

\newpage

\appendix

\section{Additional Related Work}

\paragraph{First-step mixing and each-step mixing policies.}
The emergence of first-step mixing policies originated from technical considerations. In may interactive IL methods~\cite{ross2011reduction, ross2014reinforcement}, the returned policy was not a uniform first-step mixture but rather the best policy selected through validation. However, performing such validation in an interactive setting often requires additional expert annotations.
Subsequent works~\cite{rajaraman2021value,li2022efficient,li2023agnostic,foster2024behavior} circumvented the need for validation by employing a uniform first-step mixture of policies across learning rounds, thereby directly translating online regret guarantees into performance differences. 
Our 
\TrajDagger algorithm (Algorithm~\ref{alg:dagger_step_one} in Appendix~\ref{sec:decoupled_h_dist}) also employs a first-step mixing policy at each iteration, and has state-wise sample complexity on par with behavior cloning.

On the other hand, each-step mixing between the learned policy across rounds and the expert policy has been a prevalent strategy in interactive IL approaches~\cite{daume2009search,ross2010efficient,ross2011reduction,ross2014reinforcement}. 
For each-step mixture policies, \cite{li2022efficient} was the first to explicitly distinguish this approach from first-step mixing. In other works~\cite{rajaraman2021value,foster2024behavior}, each-step mixing can be interpreted as learning $H$ separate mixture policies, one for each step within an episode.

\paragraph{Alternative algorithm designs and practical implementations.}
Though this work follows~\cite{foster2024behavior} and focuses on log loss, we believe our $O(1/N)$ rate is not exclusive to log loss. Despite requiring an additional supervision oracle, \cite{le2018hierarchical} suggests that trajectory-wise annotation complexity similar to Theorem~\ref{thm:log_loss_dagger_main} (and Theorem~\ref{thm:log_loss_dagger_step_1})
can be achieved using Halving~\cite{shalev2011online} and $0$-$1$ loss.

From an algorithmic perspective, we explored trajectory-wise annotation with first-step mixing (Algorithm~\ref{alg:dagger_step_one} in Appendix~\ref{sec:decoupled_h_dist}) and state-wise annotation with each-step mixing (Algorithm~\ref{alg:dagger_one_sample}). 
For trajectory-wise annotation with each-step mixing, naively learning a parameter-sharing policy may encounter a batch-summed log loss, introducing an (undesirable) additional $H$ factor to the sample complexity~\cite{joulani2013online,wan2022online}. Analyzing state-wise annotation with first-step mixing remains an open question.


For practical implementations, it is worth noting that even with oracle-efficient implementations (e.g.~\cite{li2022efficient,li2023agnostic}), interactive IL may require multiple supervised learning oracle calls per iteration. In contrast, offline IL requires only a single oracle call to obtain the returned policy, which provides a clear computational advantage. We also note that real-world experts can be suboptimal; in some applications it may be preferable to combine imitation and reinforcement learning signals (e.g., \cite{ross2014reinforcement,sun2018truncated,amortila2022few}). 

\paragraph{Information-theoretic lower bounds for interactive imitation learning.}
A line of works~\cite{rajaraman2020toward,rajaraman2021value,foster2024behavior} provides lower bounds for the sample complexity of imitation learning under the realizable setting and considers $\mu$-recoverability. \cite{rajaraman2021value} is the first to demonstrate a gap between the lower bounds of offline IL and interactive IL in trajectory-wise annotation, focusing on the tabular and non-parameter-sharing setting. \cite{foster2024behavior} establishes a $\Omega\left(\frac{H}{\epsilon}\right)$ sample-complexity lower bound for trajectory-wise annotation in the parameter-sharing setting.



We observe that the proof of ~\cite[][Theorem 2.2]{foster2024behavior} also implicitly implies a $\Omega(\frac{H}{\epsilon})$ sample complexity lower bound for the state-wise annotation setting. Their proof relies on an MDP consisting only of self-absorbing states, where annotating a full trajectory gives the same amount of information as annotating a single state. In that MDP (which is $1$-recoverable), Algorithm~\ref{alg:dagger_one_sample} achieves $\tilde{O}(\frac{H\log(B)}{\epsilon})$ state-wise sample complexity, which does not contradict this lower bound. 
Nonetheless, obtaining lower bounds for state-wise sample complexity for general MDPs, policy classes, and general recoverability constants remains an open question.

\section{Proof for \StateDagger}
\label{sec:dagger_state_annotation}

We first present two useful distance measures for pair of policies.

\begin{definition}[Trajectory-wise $L_1$-divergence]
    For a pair of Markovian policies $\pi$ and $\pi'$, define their trajectory-wise $L_1$-divergence as
\[
\lambda( \pi\parallel \pi')
:=
\EE^{\pi}  \mathbb{E}_{a'_1 \sim \pi'(\cdot|s_1), \ldots, a'_H \sim \pi'(\cdot|s_H)}  \sbr{
\sum_{h=1}^H \mathbb{I}(a_h \neq a'_h)
}.
\]
\label{def:traj_l1}
\end{definition}

$\lambda( \pi\parallel \pi')$ is the expected total number of actions taken by $\pi'$ that deviates from actions in trajectories induced by $\pi$. Note that $\lambda(\cdot||\cdot)$ is asymmetric, while the same concept 
is applied in offline and interactive IL~\cite{ross2010efficient,ross2011reduction} with different guarantees for $\lambda( \pi\parallel \pie)$ and $\lambda( \pie \parallel \pi)$ (see Lemma~\ref{lem: performance difference lemma base}).

\begin{definition}[State-wise Hellinger distance]
For a pair of policies $\pi$ and $\pi'$, define their state-wise Hellinger distance as
$ \EE_{s \sim d^{\pi}} \sbr{D^2_\text{H}(\pi(\cdot \mid s),\pi'(\cdot \mid s)) } $.
\label{def:state_h_dist}
\end{definition}
State-wise Hellinger distance represents
the expected Hellinger distance between the action distribution of $\pi$ and $\pi'$ on $s \sim d^\pi$. One notable feature 
here is that the distance is evaluated between $\pi(\cdot \mid s)$ and $\pi'(\cdot \mid s)$, unrelated to the original action $a$ taken by by $\pi$ when visiting $s$.
By 
Lemma~\ref{lem:hellinger-pointmass},
state-wise Hellinger distance with the expert policy is a constant factor equivalent to trajectory-wise $L_1$-divergence in the deterministic realizable expert setting.

In the following lemma, we show that the performance difference between the policy $\hat{\pi}$ returned by \StateDagger (Algorithm~\ref{alg:dagger_one_sample}) and the expert policy $\pie$ can be bounded by the online state-wise Hellinger estimation error:
\[
\Onstate_N := \sum_{n=1}^{N} \EE_{s \sim d^{\pi^n}} \sbr{D^2_\text{H}(\pi^n(\cdot \mid s),\pie(\cdot \mid s)) },
\]
where $\pi^n(\cdot \mid s)$ and $\pi^E(\cdot \mid s)$ denote the action distributions produced by the policies $\pi^n$ and $\pi^E$ at state $s$.



We are ready to prove the state-wise annotation complexity of Algorithm~\ref{alg:dagger_one_sample}: 

\begin{lemma}
For any MDP $\Mcal$, deterministic expert $\pie$, and sequence of policies $\cbr{\pi^n}_{n=1}^N$, then $\hat{\pi}$, the first-step uniform mixture of $\{\pi^1, \dots, \pi^N\}$ satisfies:
\[
J(\pie) - J(\hat{\pi}) \leq \mu H \cdot \frac{\Onstate_N}{N} .
\]
\label{lem:dagger_regret_new_state_wise_appendix}
\end{lemma}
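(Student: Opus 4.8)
The plan is to relate the suboptimality $J(\pie) - J(\hat\pi)$ to the state-wise Hellinger estimation error via the performance difference lemma (Lemma~\ref{lem: performance difference lemma base}, referenced in the excerpt) together with the $\mu$-recoverability assumption. First I would unfold the definition of $\hat\pi$ as the first-step uniform mixture of $\{\pi^1,\dots,\pi^N\}$, so that $J(\hat\pi) = \frac1N \sum_{n=1}^N J(\pi^n)$, reducing the goal to bounding $\frac1N\sum_{n=1}^N \bigl(J(\pie) - J(\pi^n)\bigr)$. Then, for a fixed $n$, I would invoke the performance difference lemma to write $J(\pie) - J(\pi^n)$ in terms of the advantage of the expert's action over $\pi^n$'s action along the trajectory distribution of $\pi^n$: roughly, $J(\pie) - J(\pi^n) = \sum_{h=1}^H \EE^{\pi^n}\bigl[ Q_h^{\pie}(s_h, \pie(s_h)) - Q_h^{\pie}(s_h, a_h) \bigr]$, where $a_h \sim \pi^n(\cdot|s_h)$.

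The next step is to bound each per-step advantage term. Using $\mu$-recoverability, $Q_h^{\pie}(s,a) - V_h^{\pie}(s) \le \mu$ for all $s,a$, and since $V_h^{\pie}(s) = Q_h^{\pie}(s,\pie(s))$ (as $\pie$ is deterministic), we get $Q_h^{\pie}(s_h,\pie(s_h)) - Q_h^{\pie}(s_h,a_h) \le \mu \cdot \mathbb{I}(a_h \ne \pie(s_h))$. Taking expectations and summing over $h$, this shows $J(\pie) - J(\pi^n) \le \mu \cdot \lambda(\pi^n \parallel \pie)$ where $\lambda$ is the trajectory-wise $L_1$-divergence of Definition~\ref{def:traj_l1}. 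The remaining task is to convert the trajectory-wise $L_1$-divergence into the state-wise Hellinger distance: writing $\lambda(\pi^n \parallel \pie) = H \cdot \EE_{s\sim d^{\pi^n}}\bigl[ \Prob_{a\sim\pi^n(\cdot|s)}(a \ne \pie(s)) \bigr]$ (by definition of $d^{\pi^n}$ as the averaged state visitation), and then bounding the per-state total-variation-type quantity $\Prob_{a\sim\pi^n(\cdot|s)}(a\ne\pie(s)) = \mathrm{TV}(\pi^n(\cdot|s), \pie(\cdot|s))$ — using that $\pie(\cdot|s)$ is a point mass — by the Hellinger distance: $\mathrm{TV}(\mathbb{P},\mathbb{Q}) \le D_H(\mathbb{P},\mathbb{Q})$. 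However, $D_H \le \sqrt{2}\, \mathrm{TV}$ and $\mathrm{TV} \le D_H$ only up to constants in general; the clean inequality here is $\mathrm{TV}(\mathbb{P},\mathbb{Q})^2 \le D_H^2(\mathbb{P},\mathbb{Q}) \cdot (2 - D_H^2(\mathbb{P},\mathbb{Q})) \le 2 D_H^2$, so I would instead use $\mathrm{TV} \le D_H \cdot \sqrt{2 - D_H^2}$ or simply $\mathrm{TV}^2 \le 2 D_H^2$, and handle the resulting constant. Actually, since we want the bound $\mu H \cdot \Onstate_N / N$ with no extra constant, I would use the sharper fact that for a point mass $\mathbb{Q} = e_{a^*}$, $D_H^2(\mathbb{P}, e_{a^*}) = 2(1 - \sqrt{\mathbb{P}(a^*)}) \ge 1 - \mathbb{P}(a^*) = \mathrm{TV}(\mathbb{P}, e_{a^*})$, which gives $\mathrm{TV}(\pi^n(\cdot|s),\pie(\cdot|s)) \le D_H^2(\pi^n(\cdot|s),\pie(\cdot|s))$ directly — this is exactly where the square on the Hellinger distance in the definition of $\Onstate_N$ is used.

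Putting these together: $J(\pie) - J(\pi^n) \le \mu H \cdot \EE_{s\sim d^{\pi^n}}\bigl[ D_H^2(\pi^n(\cdot|s), \pie(\cdot|s))\bigr]$, and averaging over $n=1,\dots,N$ yields $J(\pie) - J(\hat\pi) \le \mu H \cdot \Onstate_N / N$, as desired. The main obstacle I anticipate is getting the constants exactly right in the $\mathrm{TV}$-to-squared-Hellinger conversion — it is crucial to exploit that $\pie(\cdot|s)$ is a Dirac mass (via deterministic realizability), since the generic inequality $\mathrm{TV}^2 \le 2 D_H^2$ would lose a factor of $2$; the point-mass identity $D_H^2(\mathbb{P}, e_{a^*}) = 2 - 2\sqrt{\mathbb{P}(a^*)} \ge 1 - \mathbb{P}(a^*)$ is what makes the clean statement go through. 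A secondary point of care is correctly applying the performance difference lemma in the direction that puts the expectation over $\pi^n$'s (the learner's) trajectory distribution rather than the expert's, and confirming that $\mu$-recoverability is used with the expert's value functions $Q^{\pie}, V^{\pie}$ on states visited by $\pi^n$.
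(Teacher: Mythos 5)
Your proposal is correct and follows essentially the same route as the paper's proof: the performance difference lemma with $\mu$-recoverability gives $J(\pie)-J(\pi^n)\le \mu\,\lambda(\pi^n\parallel\pie)$, the first-step mixture structure lets you average over $n$, and the point-mass inequality $\tfrac12\|p-e_{a^*}\|_1 = 1-p(a^*)\le 2(1-\sqrt{p(a^*)}) = D_H^2(p,e_{a^*})$ (the paper's Lemma~\ref{lem:hellinger-pointmass}) converts the per-state disagreement probability into the squared Hellinger distance without losing a constant. Your identification of the Dirac-mass structure of $\pie(\cdot\mid s)$ as the crux of the clean constant is exactly the step the paper isolates in that auxiliary lemma.
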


\begin{proof}

By Lemma~\ref{lem: performance difference lemma base}, under the assumption of recoverability, the performance difference between $\hat{\pi}$ and the expert is bounded by
 \[
J(\pie) - J(\hat{\pi})
\leq 
\mu \cdot \lambda( \hat{\pi} \parallel \pie ),
 \]
where we recall the notation that
\[
\lambda( \pi \parallel \pie )
= 
\EE^{\pi} \sbr{
\sum_{h=1}^H \mathbb{I}(a_h \neq \pie(s_h))
}
=
\frac12 \sum_{h=1}^H \EE^{\pi} \| \pi(\cdot \mid s_h) - \pie(\cdot \mid s_h) \|_1.
\]



The proof follows by upper-bounding $\sum_{n=1}^N\lambda( \pi^n \parallel \pie )$ by $H\cdot \Onstate_N$. 
To this end, it suffices to show that for any stationary policy $\pi$,
\[ 
H \cdot \EE_{s \sim d^{\pi}}\sbr{ D_H^2( \pi(\cdot \mid s), \pie(\cdot \mid s) ) }
\geq
\frac12 \sum_{h=1}^H \EE^{\pi} \| \pi(\cdot \mid s) - \pie(\cdot \mid s) \|_1.
\]


Observe that $H \cdot \EE_{s \sim d^{\pi}}\sbr{ D_H^2( \pi(\cdot \mid s), \pie(\cdot \mid s) ) } = \sum_{h=1}^H \EE^{\pi} \sbr{ D_H^2( \pi(\cdot \mid s_h), \pie(\cdot \mid s_h) ) }$, we conclude the proof by applying Lemma~\ref{lem:hellinger-pointmass} with $p = \pi(\cdot \mid s_h)$ and $q = \pie(\cdot \mid s_h)$, which gives
\[
D^2_\text{H}(\pi(\cdot \mid s_h),\pie(\cdot \mid s_h))
\geq
\frac{1}{2}\|\pi(\cdot \mid s_h)-\pie(\cdot \mid s_h)\|_1.
\]

\end{proof}

\begin{theorem}[Theorem~\ref{thm:log_loss_dagger_main} Restated]
If \StateDagger (Algorithm~\ref{alg:dagger_one_sample}) is run with a state-wise expert annotation oracle $\Ostate$, an MDP $\Mcal$ where $(\Mcal, \pie)$ is $\mu$-recoverable, a policy class $\Bcal$ such that deterministic realizability (Assumption~\ref{assum:realizability}) holds, and the online learning oracle $\alg$ set as the exponential weight algorithm, then it returns $\hat{\pi}$ such that, with probability at least $1-\delta$,
\[
 \Onstate_{\Non}
 \leq 
\log(B) + 2\log(1/\delta),
\]
and furthermore,
the returned $\hat{\pi}$ satisfies
\[
J(\hat{\pi}) - J(\pie)
\leq 
\mu H\frac{ \log(B) + 2\log(1/\delta)}{\Non}.
\]

\label{thm:log_loss_dagger_appendix}
\end{theorem}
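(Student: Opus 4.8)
The plan is to combine two ingredients: a regret bound for the exponential weight algorithm run on log loss, and the deterministic realizability assumption, which forces the comparator's cumulative loss to vanish. Concretely, I would first invoke the standard guarantee for exponential weights over the finite set $\Bcal$ (of size $B$) against the log-loss sequence $\ell^n(\pi) = \log(1/\pi(a^{*,n} \mid s^n))$. Since $\alg$ returns each-step mixing policies $\bar\pi_u \in \bar\Pi_\Bcal$, and log loss is exp-concave (in fact the log-loss aggregation/Vovk aggregating-forecaster bound applies), the regret against the best fixed $\pi \in \Bcal$ is at most $\log B$ — no $\sqrt{N}$ term. That is, $\sum_{n=1}^{\Non} \ell^n(\bar\pi_{u^n}) - \min_{\pi \in \Bcal} \sum_{n=1}^{\Non} \ell^n(\pi) \le \log B$. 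This is presumably packaged as Proposition~\ref{prop:log_loss_base} in the appendix, so I would cite it directly.

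Next, I would use Assumption~\ref{assum:realizability}: the expert $\pie$ is deterministic and lies in $\Bcal$. Because $a^{*,n} = \pie(s^n)$ and $\pie$ puts all its mass on $\pie(s^n)$, we have $\ell^n(\pie) = \log(1/\pie(a^{*,n}\mid s^n)) = \log 1 = 0$ for every $n$. Hence $\min_{\pi \in \Bcal}\sum_n \ell^n(\pi) = 0$, and the regret bound collapses to $\sum_{n=1}^{\Non} \ell^n(\pi^n) \le \log B$, where $\pi^n = \bar\pi_{u^n}$ is the policy played at round $n$. The second step is to pass from this cumulative log-loss bound to a bound on the cumulative state-wise Hellinger error $\Onstate_{\Non} = \sum_n \EE_{s\sim d^{\pi^n}}[D_H^2(\pi^n(\cdot\mid s),\pie(\cdot\mid s))]$. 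This is where the usual martingale/concentration argument enters: conditioned on the history up through round $n$, the state $s^n$ is drawn from $d^{\pi^n}$ and $a^{*,n}=\pie(s^n)$ is deterministic, so $\ell^n(\pi^n)$ is a single-sample estimate whose conditional expectation dominates (up to constants) the conditional Hellinger error, via the standard inequality relating log loss to squared Hellinger distance for the conditional-density-estimation problem. Applying a suitable concentration inequality for the sum of these bounded-below random variables (e.g. a multiplicative Azuma/Freedman-type bound, as in~\cite{foster2024behavior}) yields, with probability $1-\delta$, $\Onstate_{\Non} \le \sum_n \ell^n(\pi^n) + O(\log(1/\delta)) \le \log B + 2\log(1/\delta)$, which is the first displayed inequality of the theorem.

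Finally, the performance-difference conclusion follows by plugging this into Lemma~\ref{lem:dagger_regret_new_state_wise_appendix}, which states that for the first-step uniform mixture $\hat\pi$ of $\{\pi^1,\dots,\pi^{\Non}\}$ one has $J(\pie) - J(\hat\pi) \le \mu H \cdot \Onstate_{\Non}/\Non$; substituting the high-probability bound on $\Onstate_{\Non}$ gives $J(\pie) - J(\hat\pi) \le \mu H \frac{\log B + 2\log(1/\delta)}{\Non}$.

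The main obstacle I anticipate is the second step — the concentration argument converting the online log-loss regret into a bound on $\Onstate_{\Non}$. One must carefully handle the fact that $\pi^n$ is itself history-dependent (so the summands form a martingale-difference-like structure rather than independent terms), that the log-loss summands are unbounded above (requiring either a truncation/clipping argument or an exponential-moment bound that tolerates heavy upper tails, cf.~\cite{foster2024behavior}), and that the step relating conditional log loss to conditional squared Hellinger distance needs the realizability of $\pie$ to control the comparator term. The other steps — the exp-concave regret bound and the final substitution into Lemma~\ref{lem:dagger_regret_new_state_wise_appendix} — are essentially bookkeeping.
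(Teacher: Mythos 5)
Your proposal is correct and follows essentially the same route as the paper: exp-concave regret for exponential weights over $\Delta(\Bcal)$ (Proposition~\ref{prop:log_loss_base}), the comparator loss vanishing by deterministic realizability, the log-loss-to-Hellinger online-to-batch conversion (which the paper handles by directly invoking Lemma~\ref{lem:a14}, i.e.\ Lemma A.14 of Foster et al.), and finally Lemma~\ref{lem:dagger_regret_new_state_wise_appendix}. The concentration step you flag as the main obstacle is exactly the content of that cited lemma, so no additional work is needed.
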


\begin{proof} 
Recall the each-step mixing in Definition~\ref{def:mixed_class_each_step}: since $\bar{\pi}_u$ is a each-step mixing policy, we have that $\forall h \in [H], s \in \Scal$, 
$\bar{\pi}_{u}(a|s) = \sum_{\pi \in \Bcal} u(\pi) \pi(a|s)$.


The loss function at each round that passed through online learning oracle $\alg$, evaluated at $\bar{\pi}_u \in \bar{\Pi}_\Bcal$, $\ell^n(\bar{\pi}_u)$,
 is of the form 
\[
\ell^n(\bar{\pi}_{u}) 
=
\log\rbr{\frac 1 {\bar{\pi}_{u}(a^{n,*}|s^n)} } 
=
\log\rbr{ \frac 1 {\sum_{\pi \in \Bcal}u(\pi)\pi(a^{n,*}|s^n) }},
\]
which is 1-exp-concave with respect to $u$. 
Thus, implementing $\alg$ using the exponential weights algorithm (Proposition~\ref{prop:log_loss_base}) achieves:
\[
\sum_{n=1}^{\Non} \log(1/\pi^n(a^{*,n} \mid s^n)) \leq \sum_{n=1}^{\Non} \log(1/\pie(a^{*,n} \mid s^n)) + \log(B) = \log(B). 
\]

Then, Lemma~\ref{lem:a14}, a standard online-to-batch conversion argument with $x^n = (s^n,h^n)$, $y^n = a^{*,n}$, $g_* = \pie$, and $\Hcal^n = \{o^{n'}\}_{n'=1}^n$, where $o^n = \left( s^n, a^n, a^{*,n} \right)$, implies that with probability at least $1-\delta$,
\[
\Onstate_{\Non}
=
\sum_{n=1}^{{\Non}} \EE_{s^n \sim d^{\pi^n}} \sbr{D^2_\text{H}(\pi^n(\cdot \mid s^n),\pie(\cdot \mid s^n)) } 
\leq
\log(B) + 2\log(1/\delta).
\]

The second part of the theorem follows by applying Lemma~\ref{lem:dagger_regret_new_state_wise_appendix}.
\end{proof}
\section{Proof for \WarmStartDagger}
\label{sec:hyil_appendix}

In this section, we analyze the guarantees of \WarmStartDagger under the realizable and deterministic expert assumption. We show that all intermediate policies, as well as the final returned mixture policy, induces distributions over trajectories that enjoy small Hellinger distance to the expert’s trajectory distribution, due to their agreement on the offline dataset. 
Our analysis builds on generalization guarantees of the maximum likelihood estimator (MLE).

\subsection{Preliminaries: First-step mixing and causally conditioned probabilities}
\label{sec:first-step-mixing-causal}

Our main analysis leverages the fact that the each-step mixing policies $\pi^n$'s  maintained by \WarmStartDagger can be viewed as first-step mixtures of policies in $\tilde{\Bcal}$ (recall Definition~\ref{def:policy_completion}) -- this subsection is dedicated to prove this result (Lemma~\ref{lem:step_mixture_to_first_step}).
We first recall the definition of first-step mixing of policies, adapted to our notations:


\begin{definition}[First-step mixing, e.g.~\cite{syed2007game}]
Given a distribution $u$ over a set of (possibly nonstationary) policies $\Pi$, denote $\pi_u$ as the first-step mixing policy induced by $u$: when rolling out $\pi_u$, first sample policy $\pi \sim u$, and follow $\pi$ for the entire episode. \footnote{Note that we use a notation $\pi_u$ different from every-step mixing policy notation $\bar{\pi}_u$.}
\end{definition}


Importantly, even when the policies in $\Pi$ are all Markovian,
$\pi_u$ may no longer be a Markovian policy: for example, the random draw of action $a_2$ does not only depend on the Markovian state $s_2$ but also the random policy $\pi$ drawn, which in turn may correlate with $a_1$. 

To simplify the notations in our development below, following~\cite{ziebart2010modeling}, we define causally conditional probabilities, when the agent uses Markovian policies or their first-step mixing: 




\begin{definition}
\label{def:causal-state-prob}
Given an MDP $\Mcal$, the causally conditional probability of state sequence $s_{1:H}$ given action sequence $a_{1:H-1}$, is defined as:
\[
\PP^\Mcal( s_{1:H} \parallel a_{1:H-1} )
= 
\rho(s_1) \prod_{h=1}^{H-1} P(s_{h+1} \mid s_h, a_h) . 
\]
Given a (first-step mixture) of Markovian policy, its causally conditional probability of state sequence $a_{1:H}$ given action sequence $s_{1:H}$ is defined as:
\begin{itemize}
\item For Markovian policy $\pi = \pi_{1:H}$, $\pi(a_{1:H} \parallel s_{1:H}) := \prod_{h=1}^H \pi_h(a_h | s_h)$.


\item For first-step mixing of Markovian policies $\pi_u$, 
$\pi_u(\cdot \parallel s_{1:H}) := \sum_{\pi \in \Bcal} u(\pi) \pi(\cdot \parallel s_{1:H})$.
\end{itemize}



\end{definition}


Note that $\pi(\cdot \| s_{1:H})$ 
and 
$\pi_u(\cdot \| s_{1:H})$
are valid probability distributions (e.g., $\sum_{a_{1:H}} \pi(a_{1:H} \| s_{1:H}) = 1$), however,  
the use of `$\parallel$' highlights its distinction from standard conditioning.
For example, when executing Markovian policy $\pi$, the conditional probability of the actions given the states $\PP^\pi(a_{1:H} | s_{1:H}) \propto \prod_{h=1}^H \pi_h(a_h | s_h) \prod_{h=1}^{H-1} P(s_{h+1} \mid s_h, a_h)$, which is clearly different from its causally conditional counterpart.

We have the following (perhaps folklore) lemma for causally conditional probability (e.g.~\cite{ziebart2010modeling}).

\begin{lemma}
\label{lem:prob-fact-causal}
For any Markovian policy $\pi$, 
\begin{equation}
\PP^\pi( s_{1:H}, a_{1:H} )
= 
\PP^\Mcal( s_{1:H} \parallel a_{1:H-1} ) \cdot \pi( a_{1:H} \parallel s_{1:H} ),
\label{eqn:causal-decomp}
\end{equation}
and for any first-step mixing of Markovian policies $\pi_u$, 
\begin{equation}
\label{eqn:causal-decomp-1st}
\PP^{\pi_u}( s_{1:H}, a_{1:H} )
= 
\PP^\Mcal( s_{1:H} \parallel a_{1:H-1} ) \cdot {\pi_u}( a_{1:H} \parallel s_{1:H} ).
\end{equation}
\end{lemma}
\begin{proof}
Eq.~\eqref{eqn:causal-decomp} follows by noting that both sides are equal to 
\[ 
\rho(s_1) \prod_{h=1}^{H-1} P(s_{h+1} \mid s_h, a_h) \prod_{h=1}^H \pi_h(a_h \mid s_h).
\]
Eq.~\eqref{eqn:causal-decomp-1st} follows by noting that both sides are equal to 
\[
\sum_{\nu} 
u(\nu) \rho(s_1) \prod_{h=1}^{H-1} P(s_{h+1} \mid s_h, a_h) \prod_{h=1}^H \nu_h(a_h \mid s_h).
\qedhere
\]
\end{proof}




\begin{lemma}
\label{lem:step_mixture_to_first_step}
%
If $\bar{\pi}_u$ is an each-step policy in $\bar{\Pi}_\Bcal$, then there exists some first-step mixture of policies in $\tilde{\Bcal}$ that is equivalent to $\bar{\pi}_u$, i.e., they induce the same  distribution over all length-$H$ trajectories.
\end{lemma}


\begin{proof}
Define $\mu$ 
as a distribution over class 
\(\tilde{\Bcal}\) with 
\(\mu (\nu) := \prod_{h=1}^H u(\nu_h)\), for every $\nu = \nu_{1:H}$ in $\tilde{\Bcal}$. Consider the joint action distribution under \(\bar{\pi}_u\), which samples \(\pi_h \sim u\) independently for each step and executes \(a_h \sim \pi_h(\cdot \mid s_h)\). The resulting causally conditional distribution 
over actions given state sequence $s_{1:H}$ is
\[
\bar{\pi}_u(a_{1:H} \| s_{1:H}) = \prod_{h=1}^H \left( \sum_{\pi_h \in \Bcal} u(\pi_h)\, \pi_h(a_h \mid s_h) \right).
\]

On the other hand, under the first-step mixture policy \(\pi_{\mu}\) over \(\tilde{\Bcal}\), a full tuple \(\nu = (\nu_1, \dots, \nu_H)\) is sampled once from $\mu$, and actions are drawn as \(a_h \sim \nu_h(\cdot \mid s_h)\). The resulting action distribution is
\[
\pi_{\mu}(a_{1:H} \| s_{1:H}) = \sum_{\nu \in \tilde{\Bcal}} \mu (\nu) \prod_{h=1}^H \nu_h(a_h \mid s_h).
\]

Expanding the sum yields
\[
\sum_{(\nu_1,\dots,\nu_H) \in \Bcal^H} \left( \prod_{h=1}^H u(\nu_h) \nu_h(a_h \mid s_h) \right) = \prod_{h=1}^H \sum_{\pi_h \in \Bcal} u(\nu_h)\, \nu_h(a_h \mid s_h),
\]
by the distributive property and independence of the product.

Therefore, $\bar{\pi}_u(a_{1:H} \| s_{1:H}) = \pi_{\mu}(a_{1:H} \| s_{1:H}),
$
and both policies induce the same trajectory distribution by Lemma~\ref{lem:prob-fact-causal}. 
\end{proof}

\subsection{Proof of Theorem~\ref{thm:wstagger_main}}


\begin{lemma}
\label{lem:j_stability}
Let $\tilde{\Bcal}_{\mathrm{bc}} := \{ \pi = \pi_{1:H} \in \tilde{\Bcal}: \pi_h(s_h) = a_h,\ \forall h \in [H], \forall (s, a)_{1:H} \in D_{\mathrm{off}} \}$ be the set of policies in $\tilde{\Bcal}$ that agree with the expert on the offline dataset of $\Noff$ iid expert trajectories. Assume the expert policy $\pi^E$ is deterministic and realizable. Then, with probability at least $1 - \delta$:
for all $\pi$ in $\tilde{\Bcal}_{\mathrm{bc}}$, 
\begin{equation}
J(\pie) - J(\pi) \leq O\left( \frac{ R \log(\tilde{B}/\delta)}{\Noff} \right).
\label{eqn:tilde-bbc}
\end{equation}
Consequently, for all $\pi^n$'s computed in \WarmStartDagger (Algorithm~\ref{alg:warm_il}), it holds that:
\begin{equation}
J(\pie) - J(\pi^n) \leq O\left(  \frac{ R \log(\tilde{B}/\delta)}{\Noff} \right).
\label{eqn:warmstagger-policies}
\end{equation}


\end{lemma}

\begin{proof}

Eq.~\eqref{eqn:tilde-bbc} is a direct consequence of behavior cloning's guarantee applied to Markovian policy class $\tilde{\Bcal}$
(\cite[][Corollary 2.1]{foster2024behavior}).\footnote{Our presentation of Theorem~\ref{thm:bc-guarantee} uses a less general interpretation of that corollary, by restricting the policy class to be stationary.}

Eq.~\eqref{eqn:warmstagger-policies} follows from Lemma~\ref{lem:step_mixture_to_first_step} that $\PP^{\pi^i}$ is a convex combination of $\PP^{\pi}$'s for $\pi$'s in $\tilde{\Bcal}_{\mathrm{bc}}$, as well as the fact that the expected return function $J(\pi)$ is linear in the trajectory distribution $\PP^\pi$.
\end{proof}

\begin{theorem}[Theorem~\ref{thm:wstagger_main} restated]

If Algorithm~\ref{alg:warm_il} is run with a deterministic expert policy $\pi^E$, an MDP $\Mcal$ such that $(\Mcal, \pi^E)$ is $\mu$-recoverable, a policy class $\Bcal$ such that deterministic realizability holds, and the online learning oracle $\alg$ set as the exponential weight algorithm, then it returns $\hat{\pi}$ such that, with probability at least $1-\delta$,
\begin{equation}
J(\pie)- J(\hat{\pi}) \leq O\left( \min \left( \frac{R \log(\tilde{B}/\delta)}{\Noff}, \frac{\mu H \log(B_{\mathrm{bc}}/\delta)}{ N_{on}} \right) \right),
\label{eqn:warm-stagger-main}
\end{equation}
\end{theorem}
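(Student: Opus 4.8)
The plan is to prove the two terms of the minimum separately and then combine them, since the performance-difference bound on the left is a single quantity that we can upper bound by whichever argument is more favorable. Both halves feed through the recoverability-based performance-difference lemma (Lemma~\ref{lem: performance difference lemma base}), just with different intermediate quantities controlling the divergence.

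For the first term $O\!\left(\frac{R\log(\tilde B/\delta)}{\Noff}\right)$, I would invoke Lemma~\ref{lem:hellinger_stability}: every intermediate policy $\pi^i$ produced by \WarmStartDagger lives in the each-step mixing closure of $\Bbc$, hence (via Lemma~\ref{lem:step_mixture_to_first_step}) can be viewed as a first-step mixture over $\tilde\Bcal_{\text{bc}}$, so its trajectory distribution is $O(\log(\tilde B/\delta)/\Noff)$-close in squared Hellinger distance to $P_{\pie}$; by convexity the same holds for the first-step mixture $\hat\pi$. I would then convert this trajectory-level Hellinger closeness into a suboptimality bound exactly as in the proof of Theorem~\ref{thm:bc-guarantee} of~\cite{foster2024behavior} — using the normalized-return assumption $\sum_h r_h \in [0,R]$, we get $J(\pie) - J(\hat\pi) \le R \cdot D_{\tv}(P_{\hat\pi}, P_{\pie}) \lesssim R\sqrt{D_H^2(P_{\hat\pi},P_{\pie})}$, and the square-root is absorbed because the Hellinger bound for $\hat\pi$ is already an $O(\cdot/\Noff)$ quantity and the BC analysis is tight at this rate. (This is precisely the content invoked when the theorem statement says the guarantee is ``not much worse than BC'' with $\tilde B$ in place of $B$.)

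For the second term $O\!\left(\frac{\mu H \log(B_{\text{bc}}/\delta)}{\Non}\right)$, I would replay the \StateDagger analysis (Theorem~\ref{thm:log_loss_dagger_appendix}) verbatim but with the online learner $\alg$ operating over $\Delta(\Bbc)$ instead of $\Delta(\Bcal)$. The key point is that deterministic realizability still holds relative to $\Bbc$ — the expert $\pie$ agrees with itself on $\Doff$, so $\pie \in \Bbc$ — hence the exponential-weights regret bound gives $\sum_n \log(1/\pi^n(a^{*,n}\mid s^n)) \le \log B_{\text{bc}}$, and the online-to-batch conversion (Lemma~\ref{lem:a14}) yields $\Onstate_{\Non} \le \log B_{\text{bc}} + 2\log(1/\delta)$. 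Feeding this into Lemma~\ref{lem:dagger_regret_new_state_wise_appendix} gives the $\mu H (\log B_{\text{bc}} + 2\log(1/\delta))/\Non$ bound. Finally, since \emph{both} bounds hold simultaneously (up to a union bound over the two $\delta/2$ failure events), the returned $\hat\pi$ satisfies the minimum of the two, completing the proof.

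The main obstacle I anticipate is the first half: making precise that the each-step mixing iterates $\pi^i$ really do stay inside the first-step closure of $\tilde\Bcal_{\text{bc}}$ \emph{and} that the MLE/BC generalization bound of~\cite{foster2024behavior}, which is stated for a fixed finite class, transfers to this closure with only a $\log\tilde B$ penalty rather than something larger — the subtlety being that the iterates are data-dependent (they depend on $\Doff$ through $\Bbc$ and on the online interaction), so one needs the bound to hold uniformly over the relevant (data-independent, finite) class $\tilde\Bcal$ and then specialize. The honest caveat, already flagged in the paper's remark, is that this forces $\tilde B$ rather than $B$ into the first term; I would not attempt to close that gap here.
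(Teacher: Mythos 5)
Your overall structure is the same as the paper's: the first term comes from Lemma~\ref{lem:hellinger_stability} (all iterates, hence their first-step uniform mixture $\hat{\pi}$, are $O(\log(\tilde{B}/\delta)/\Noff)$-close to $P_{\pie}$ in squared Hellinger distance) followed by the BC-style conversion to a value gap; the second term comes from replaying the \StateDagger analysis over $\Bbc$ using $\pie \in \Bbc$; and the two are combined by a union bound over the $\delta/2$ failure events. That matches the paper's proof step for step.

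There is, however, one genuine flaw in your write-up of the first half: the chain $J(\pie) - J(\hat{\pi}) \leq R\cdot D_{\tv}(P_{\hat{\pi}}, P_{\pie}) \lesssim R\sqrt{D_H^2(P_{\hat{\pi}},P_{\pie})}$ followed by the claim that ``the square-root is absorbed'' does not work. If $D_H^2 = O(\log(\tilde{B}/\delta)/\Noff)$, then $\sqrt{D_H^2} = O(\sqrt{\log(\tilde{B}/\delta)/\Noff})$, which is a $1/\sqrt{\Noff}$ rate; nothing absorbs the square root, and this route only yields the slower rate. The correct mechanism — and the entire point of the horizon-independent analysis of \cite{foster2024behavior} that the paper invokes as ``Theorem 2.1'' — is that for a \emph{deterministic} expert the value gap is controlled \emph{linearly} by the squared Hellinger distance, not by its square root: one passes through $J(\pie) - J(\hat{\pi}) \leq R\cdot\rho(\hat{\pi}\,\|\,\pie)$ (Lemma~\ref{lem:foster_reduction}) and then uses $\rho(\hat{\pi}\,\|\,\pie) \leq 2\, D_H^2(P_{\hat{\pi}}, P_{\pie})$, which holds because the expert's conditional action distribution is a point mass (cf.\ Lemma~\ref{lem:hellinger-pointmass} and Lemmas~\ref{lem:decoupled_hellinger_bound_markovian}--\ref{lem:decoupled_hellinger_bound_general}). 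Your proof should replace the TV/square-root step with this linear bound; with that substitution the first term follows exactly as you intend, and the rest of your argument (including the caveat about $\tilde{B}$ versus $B$) is sound.
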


\begin{proof}

    
    We bound $J(\pie) - J(\hat{\pi})$
    by the two terms in the minimum expression on the right hand side  of Eq.~\eqref{eqn:warm-stagger-main} respectively.

    For the first term, we recall from Lemma~\ref{lem:j_stability} that with probability $1-\delta/2$, for all policies $\pi^n$'s, $J(\pie) - J(\pi^n) \leq O\left(\frac{R \log(\tilde{B}/\delta)}{\Noff} \right)$. Since $J(\hat{\pi}) = \frac{1}{N} \sum_{n=1}^N J(\pi^n)$, 
    we have,
    \[
    J(\pie) - J(\hat{\pi}) \leq O\left(\frac{R \log(\tilde{B}/\delta)}{\Noff} \right).
    \]

    For the second term, we note that by definition $\pie \in \Bcal_{\mathrm{bc}}$. 
    Thus, by applying Theorem~\ref{thm:log_loss_dagger_main}, we conclude that with probability at least $1-\delta/2$, the returned $\hat{\pi}$ satisfies
    \[
    J(\hat{\pi}) - J(\pie) \leq O\left( \frac{\mu H \log(B_{\mathrm{bc}}/\delta)}{ \Non } \right).
    \]

    Together, we conclude our proof by applying a union bound.
\end{proof}

\section{Proof for Theorem~\ref{thm:tabular-ex}}
\label{sec:hyil_benefit_appendix}

We formally define the MDP $\Mcal$ and the expert policy from Section~\ref{sec:hyil_benefit_main}, where the expert policy $\pie$ is deterministic 

\begin{itemize}
    \item \textbf{State Space} $\Scal = \mathbf{E} \cup \mathbf{E'} \cup \mathbf{B} \cup \mathbf{B'}$, where:
    \begin{itemize}
        \item $\mathbf{E}$: ideal expert states, $|\mathbf{E}| = N_0$;
        \item $\mathbf{E'}$: recoverable expert states, $|\mathbf{E'}| = N_1$;
        \item $\mathbf{B} = \{\mathbf{b}\}$: absorbing failure state (unrecoverable);


        
        \item $\mathbf{B'}= \{\mathbf{b'}\}$: recoverable reset state.
    \end{itemize}

    \item \textbf{Action Space} $\Acal$: contains $A = |\Acal|$ discrete actions. For each state $s \in \Scal$, there is a unique action $\pie(s)$ taken by the expert.
    
    \item \textbf{Episode length $H$.}

    \item \textbf{Initial State Distribution \(\rho\)}:
    \[
    \rho(s) = \frac{1}{(1+\beta)N_0} \; \text{for all} \; s \in \bE, \quad \rho(s) = \frac{\beta}{(1+\beta)N_1}  \; \text{for all} \; s \in \bE'.
    \]

    \item \textbf{Transition Dynamics}:
    \begin{itemize}



        
        
        \item $s \in \mathbf{E}$:
        \begin{itemize}
            \item $a = \pie(s)$: with probability $1 - \beta$, transitions to a uniformly random $s' \in \mathbf{E}$;
            with probability $\beta$, transitions to
            a uniformly random
            $s' \in \mathbf{E'}$ ;
            \item $a \neq \pie(s)$: transitions to $\mathbf{b}$.
        \end{itemize}

        \item $s \in \mathbf{E'}$:
        \begin{itemize}
            \item $a = \pie(s)$: transitions to a uniformly random $s' \in \mathbf{E}$;
            \item $a \neq \pie(s)$: transitions to $\mathbf{b'}$.
        \end{itemize}

        \item $s \in \mathbf{B} = \cbr{ \mathbf{b} }$: absorbing for all actions. 
        Specifically, 
        $P(s' = \bob| s=\bob, a) = 1$.

        \item $s \in \mathbf{B'} = \cbr{ \mathbf{b}' }$:
        \begin{itemize}
            \item $a = \pie(\mathbf{b'})$: transitions to a uniformly random $s' \in \mathbf{E}$;
            \item $a \neq \pie(\mathbf{b'})$: remains in $\mathbf{b'}$.
        \end{itemize}
    \end{itemize}

    \item \textbf{Reward Function}:
    For theoretical analysis, we consider the following reward function $R_1$:
        \[
        R_1(s, a) =
        \begin{cases}
        1 & \text{if } s \in \mathbf{E} \cup \mathbf{E'} \\
        1 & \text{if } s = \mathbf{b'} \text{ and } a = \pie(s) \\
        0 & \text{otherwise}
        \end{cases}
        \]
        
    \item \textbf{Specification of Parameters}:
    In the following proofs, we let $H \geq \max(50, \frac{5}{4}\log(10N_0))$, $A \geq 10H$, $\beta = \frac{8}{H-8}$, 
    $N_1 \geq \max(500, 160 N_0)$.

        
\end{itemize}


We also make the following assumption on the policies produced in our learning algorithms (BC, \StateDagger, and \WarmStartDagger).
At any stage of learning, denote the set of states that are annotated by the expert by $\Scal_{\mathrm{annotated}}$. Given the annotated (state, expert action) pairs, the learner calls some offline or online learning oracle $\mathbb{A}$ to obtain a policy $\pi$. We require $\pi$'s behavior as follows:
\[
\pi(\cdot \mid s) = 
\begin{cases}
\pie(\cdot \mid s), & 
s \in \Scal_{\mathrm{annotated}}, \\
(\frac 1 A, \ldots, \frac 1 A), & s \notin \Scal_{\mathrm{annotated}}.
\end{cases}
\]
In other words, $\pi$ follows the expert's action whenever such information is available; otherwise, it takes an action uniformly at random.


Denote by $d_h^\pi(s) = \PP^\pi(s_h = s)$ policy $\pi$'s state visitation distribution in $\Mcal$ at step $h$.
We next make a simple observation that by the construction of $\Mcal$, the expert policy's state-visitation distribution in $\Mcal$ is stationary over
all steps: 
\begin{observation}
For MDP $\Mcal$, $d_h^{\pi^E}$ equals $\rho$, for all $h \in [H]$. 
\label{obs:visitation_distribution}
\end{observation}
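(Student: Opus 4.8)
The plan is to prove the observation by induction on the time step $h$, reducing the claim to the statement that $\rho$ is a stationary distribution of the Markov chain on $\Scal$ induced by the expert policy $\pie$. First I would note that because $\pie$ is deterministic and always selects the correct action, starting from any state in $\bE \cup \mathbf{E'}$ the agent under $\pie$ never transitions into the failure state $\mathbf{b}$ or the reset state $\mathbf{b'}$; since $\rho$ is supported entirely on $\bE \cup \mathbf{E'}$, every $d_h^{\pie}$ is supported there as well, and it therefore suffices to track the flow within $\bE \cup \mathbf{E'}$ and verify that $\rho$ is fixed under the expert-induced transition operator.

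The key structural fact I would exploit is that the expert-induced transitions respect both the block structure and the within-block uniformity, so the whole problem collapses to a two-state chain. Concretely, I would verify two things. (i) Within-block uniformity is preserved: every transition landing in $\bE$ (whether originating in $\bE$ via the $1-\beta$ branch or in $\mathbf{E'}$) lands on a uniformly random element of $\bE$, and likewise every transition into $\mathbf{E'}$ lands uniformly; since $\rho$ is uniform within each block, the pushforward remains uniform within each block. (ii) The aggregate block masses evolve as a two-state chain: writing $p_h := \mathbb{P}^{\pie}(s_h \in \bE)$ and $q_h := \mathbb{P}^{\pie}(s_h \in \mathbf{E'})$, the expert dynamics give the balance equations $p_{h+1} = (1-\beta)p_h + q_h$ and $q_{h+1} = \beta p_h$. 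I would then check by direct substitution that $(p,q) = \bigl(\tfrac{1}{1+\beta}, \tfrac{\beta}{1+\beta}\bigr)$ is the unique fixed point compatible with $p + q = 1$, and observe that these aggregate masses equal exactly the total masses $\rho$ assigns to $\bE$ and $\mathbf{E'}$, namely $N_0 \cdot \tfrac{1}{(1+\beta)N_0} = \tfrac{1}{1+\beta}$ and $N_1 \cdot \tfrac{\beta}{(1+\beta)N_1} = \tfrac{\beta}{1+\beta}$.

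Combining (i) and (ii) yields the inductive step: if $d_h^{\pie} = \rho$, then the pushforward under $\pie$ keeps the within-block distributions uniform and the block masses fixed at $\bigl(\tfrac{1}{1+\beta},\tfrac{\beta}{1+\beta}\bigr)$, hence $d_{h+1}^{\pie} = \rho$. The base case $d_1^{\pie} = \rho$ holds by definition of the initial-state distribution, so the claim follows for all $h \in [H]$ by induction. I do not anticipate a genuine obstacle here: the statement is a stationarity check, and the only care required is bookkeeping, namely confirming that the per-state balance into an individual $s' \in \bE$ (summing the $1-\beta$ contributions from all of $\bE$ and the full contributions from all of $\mathbf{E'}$, weighted by $\rho$) collapses to $\rho(s')$, and similarly for $s' \in \mathbf{E'}$. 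It is worth remarking that this observation concerns $\pie$'s own visitation alone and is insensitive to the specific parameter choices (such as the value of $\beta$ or the ratio $N_1/N_0$) imposed elsewhere in the construction.
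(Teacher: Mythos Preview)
Your proposal is correct and takes essentially the same approach as the paper: both verify that $\rho$ is stationary for the expert-induced Markov chain and then invoke time-homogeneity plus $d_1^{\pie}=\rho$ to conclude by induction. The only cosmetic difference is that the paper carries out the per-state balance check directly (computing $\sum_{s'}\rho(s')P(s\mid s',\pie(s'))$ for a generic $s\in\bE$ and $s\in\mathbf{E'}$), whereas you factor the same computation into within-block uniformity plus a two-state aggregate chain; the arithmetic is identical.
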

\begin{proof}
Recall the initial state distribution:
\[
\rho(s) =
\begin{cases}
\frac{1}{N_0(1+\beta)},\; & s\in\mathbf{E}, \\
 \frac{\beta}{N_1(1+\beta)},\; & s\in\mathbf{E'}, \\
 0, & \text{otherwise}.
\end{cases}
\]
Under $\pie$, states $\mathbf{b}$ and $\mathbf{b}'$ are never reached. And the induced transition kernel on $\mathbf{E}\cup\mathbf{E'}$ satisfies:
\[
P(s'|s,\pi^E(s)) =
\begin{cases}
\frac{1-\beta}{N_0}, & s\in\mathbf{E},\; s'\in\mathbf{E},\\
\frac{\beta}{N_1}, & s\in\mathbf{E},\; s'\in\mathbf{E'},\\
\frac{1}{N_0}, & s\in\mathbf{E'},\; s'\in\mathbf{E},\\
0, & \text{otherwise}.
\end{cases}
\]

For any fixed $s \in \mathbf{E}$, using the kernel above,
\[
\sum_{s'} \rho(s')P(s\mid s',\pi^E(s'))
= \sum_{s'\in\mathbf{E}} \frac{1}{N_0(1+\beta)}\cdot \frac{1-\beta}{N_0}
  + \sum_{s'\in\mathbf{E'}} \frac{\beta}{N_1(1+\beta)}\cdot \frac{1}{N_0}
= \frac{1}{N_0(1+\beta)}
= \rho(s).
\]
Similarly, for any fixed $s \in \mathbf{E'}$,
\[
\sum_{s'} \rho(s')P(s\mid s',\pi^E(s'))
= \sum_{s'\in\mathbf{E}} \frac{1}{N_0(1+\beta)}\cdot \frac{\beta}{N_1}
= \frac{\beta}{N_1(1+\beta)}
= \rho(s).
\]
Thus, for any $s \in \mathbf{E} \cup \mathbf{E'}$,
\[
\rho(s) = \sum_{s'} \rho(s')P(s\mid s',\pi^E(s')).
\]
Hence $\rho$ is a stationary distribution for the Markov chain induced by rolloing out $\pi^E$ in $\Mcal$.
Since $d^{\pi^E}_1 = \rho$ and the dynamics are time-homogeneous, induction gives that $\forall h\in[H]$, $d^{\pi^E}_h = \rho$.
%

\end{proof}

The following is the main theorem of this section; setting $N_0 = \Theta(N_1)$ (in which case both $N_0$ and $N_1$ are $\Theta(S)$) gives Theorem~\ref{thm:tabular-ex} in our main paper.

\begin{theorem}[Strengthening of Theorem~\ref{thm:tabular-ex}]
To achieve smaller than $\frac{H}{2}$ suboptimality 
compared to expert in MDP $\Mcal$ with probability $\frac12$:
\begin{itemize}
\item Behavior Cloning (BC) using offline expert trajectories requires
    \[
    \Noff = \Omega(N_1)
    \quad \text{with total annotation cost } \Omega(H N_1).
    \]

    \item \StateDagger that collects interactive state-wise annotations requires
    \[
    \Non = \Omega(H N_0)
    \quad \text{with total annotation cost } \Omega(C H N_0).
    \]

\end{itemize}

In contrast, \WarmStartDagger 
learns a policy that achieves expert performance with probability at least $\frac12$, using
\begin{equation}
\begin{aligned}
    \Noff &= O(\frac{N_0}{H} \log(N_0)) \quad \text{expert trajectories, and} \\
    \Non  & \leq 3 \quad \text{interactive annotations,} \\
    &\text{with total annotation cost } \tilde{O}(N_0 +C).
\end{aligned}
\end{equation}
\end{theorem}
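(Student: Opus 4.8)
\textbf{Proof proposal for Theorem~\ref{thm:tabular-ex}.}

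The plan is to analyze each of the three algorithms separately on the MDP $\Mcal$, exploiting the structure illustrated in Figure~\ref{fig:mdp_design} and Observation~\ref{obs:visitation_distribution}. The core intuition is: (i) BC never sees $\mathbf{b'}$ because $\mathbf{b'}$ is off the expert's support, and unless BC covers essentially all of $\mathbf{E'}$, the learned policy eventually transitions into $\mathbf{E'}$ on a state it hasn't seen, takes a random action, and gets trapped in $\mathbf{b'}$ for $\Omega(H)$ steps, losing $\Omega(H)$ reward; (ii) \StateDagger has a cold-start issue: early policies are uniform on unseen states, and from any unannotated state in $\mathbf{E}$ a uniform action leads to $\mathbf{b}$ with probability $1-\frac{1}{A}$, so rollouts die quickly and coverage of $\mathbf{E}$ accrues at rate $O(1)$ new states per round until most of $\mathbf{E}$ is annotated, forcing $\Non=\Omega(HN_0)$; (iii) \WarmStartDagger with $\Noff = O(\tfrac{N_0}{H}\log N_0)$ trajectories covers all of $\mathbf{E}$ by a coupon-collector argument (each trajectory sees $\Theta(H)$ states from $\mathbf{E}$ under $\rho$, and $d^{\pi^E}_h=\rho$ is uniform on $\mathbf{E}$ up to the $1+\beta$ factor), after which $\Bbc$ restricts to policies agreeing with $\pi^E$ everywhere on $\mathbf{E}$; then a single rollout reaches $\mathbf{E'}$ with constant probability, takes a correct-or-random action, and with a constant probability lands in $\mathbf{b'}$, where one interactive query to $\Ostate$ suffices to learn $\pi^E(\mathbf{b'})$; a constant number of rounds (at most $3$) suffices to hit $\mathbf{b'}$ with probability $\ge \tfrac12$. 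Once $\pi^E$ is known on $\mathbf{E}\cup\{\mathbf{b'}\}$ and the learned policy is uniform only on $\mathbf{E'}$ — but wait, we also need $\mathbf{E'}$ — so the argument must additionally note that wrong actions in $\mathbf{E'}$ only cost a detour through $\mathbf{b'}$ (from which $\pi^E(\mathbf{b'})$ returns us to $\mathbf{E}$), and the reward structure $R_1$ gives reward $1$ on all of $\mathbf{E}\cup\mathbf{E'}$ and on $(\mathbf{b'},\pi^E(\mathbf{b'}))$, so never entering $\mathbf{b}$ already achieves $J(\pi^E)$; hence covering $\mathbf{E}$ (so we never accidentally step into $\mathbf{b}$) plus knowing $\pi^E(\mathbf{b'})$ is exactly what is needed.

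The key steps, in order: First, establish the coupon-collector lemma — with $\Noff$ trajectories under $\pi^E$, each containing $H$ states drawn (marginally) from $\rho$, the probability that some state in $\mathbf{E}$ is missed is at most $N_0(1-\tfrac{1}{(1+\beta)N_0})^{H\Noff} \le N_0 e^{-H\Noff/((1+\beta)N_0)}$; choosing $\Noff = \tfrac{(1+\beta)N_0}{H}\log(10N_0) = O(\tfrac{N_0}{H}\log N_0)$ makes this $\le \tfrac{1}{10}$. (One subtlety: states within one trajectory are not independent, but a union bound over the $H$ positions, or a direct argument that the chain on $\mathbf{E}$ under $\pi^E$ mixes in one step since transitions are to a uniform element, handles this cleanly — in fact each $s_h$ is exactly uniform-ish on the support, so the $H\Noff$ draws behave like independent-ish samples; I would make this rigorous via the per-step marginal and a union bound over the $N_0$ target states, noting the per-step visitation is $\ge \tfrac{1}{(1+\beta)N_0}$ on a constant fraction of steps — here I'd invoke that $\beta = \tfrac{8}{H-8}$ is small so a constant fraction of an episode stays in $\mathbf{E}$.) Second, condition on the good event that $\mathbf{E}\subseteq\Scal_{\text{annotated}}$ after the offline phase, so that $\Bbc$-consistent policies and hence all $\pi^n$ play $\pi^E$ on $\mathbf{E}$ and never transition to $\mathbf{b}$. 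Third, show that under such a $\pi^n$, a rollout reaches $\mathbf{E'}$ with probability $\ge 1 - (1-\beta)^{H} \ge$ constant (using $\beta H = \Theta(1)$, pick constants so this is $\ge \tfrac{1}{4}$), and upon first reaching $\mathbf{E'}$ the (still-uniform-on-$\mathbf{E'}$) policy plays a wrong action with probability $1-\tfrac1A\ge\tfrac{9}{10}$, landing in $\mathbf{b'}$; so one round hits $\mathbf{b'}$ with probability $\ge c$ for a constant $c$, and $\lceil \log_{1/(1-c)} 2\rceil \le 3$ rounds suffice to hit $\mathbf{b'}$ (and thus query $\Ostate(\mathbf{b'})$) with probability $\ge \tfrac12$. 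Fourth, once $\pi^E$ is known on $\mathbf{E}$ and on $\mathbf{b'}$, the returned first-step mixture $\hat\pi$ never enters $\mathbf{b}$ and collects reward $1$ at every step (including detours into $\mathbf{E'}$ and $\mathbf{b'}$), so $J(\hat\pi) = H = J(\pi^E)$. Fifth, the two lower bounds: for BC, use Observation~\ref{obs:visitation_distribution} to note $d^{\pi^E}_h(\mathbf{E'}) = \tfrac{\beta}{1+\beta}$ spread over $N_1$ states, so $\Noff = o(N_1)$ leaves $\Omega(N_1)$ states of $\mathbf{E'}$ unannotated, and a standard calculation shows the learned policy hits an unseen $\mathbf{E'}$ state within the first $\Theta(1/\beta) = \Theta(H)$ steps with constant probability, then enters $\mathbf{b'}$ (since $\pi^E(\mathbf{b'})$ is never observed under offline data) and stays $\Omega(H)$ steps, losing $\Omega(H)$; for \StateDagger, the cold-start argument: before all of $\mathbf{E}$ is annotated, from an unannotated state in $\mathbf{E}$ the uniform policy enters $\mathbf{b}$ with probability $1-\tfrac1A\ge\tfrac{9}{10}$ per visit, so each rollout annotates $O(1)$ new states of $\mathbf{E}$ in expectation (geometric length), giving $\Non = \Omega(N_0 H)$ to cover $\mathbf{E}$ — and without that coverage, with constant probability the learned policy starts in an unannotated $\mathbf{E}$ state and dies, losing $\Omega(H)$.

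The main obstacle I expect is the rigorous handling of two coupling/concentration subtleties. The first is making the coupon-collector bound for the offline phase genuinely correct despite intra-trajectory dependence — the clean fix is that under $\pi^E$ the chain on $\mathbf{E}$ is one-step uniform-mixing (each transition within $\mathbf{E}$ goes to a uniformly random element of $\mathbf{E}$), so conditioned on being in $\mathbf{E}$ at step $h$, $s_h$ is exactly uniform on $\mathbf{E}$, and I just need to lower-bound the number of steps each trajectory spends in $\mathbf{E}$ (a $\mathrm{Binomial}$-type quantity concentrated around $H/(1+\beta)\approx H$ since escapes to $\mathbf{E'}$ happen at rate $\beta$ and returns are immediate), then union-bound over the $N_0$ target states and the trajectories. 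The second is the \StateDagger lower bound: I must argue the algorithm \emph{cannot} cleverly use the online learning oracle to cover $\mathbf{E}$ faster — but since any policy in $\Bcal$ restricted to unannotated states is forced (by the stated oracle behavior) to be uniform, and a uniform action from an unannotated $\mathbf{E}$ state goes to $\mathbf{b}$ w.p. $\ge\tfrac9{10}$ regardless of the policy, each round's rollout has expected length $O(1)$ until it hits $\mathbf{b}$ or escapes to $\mathbf{E'}$, so it can annotate only $O(1)$ fresh $\mathbf{E}$ states per round; this is robust to the choice of $\alg$. I would also double-check the constants ($H\ge 50$, $A\ge 10H$, $\beta=\tfrac{8}{H-8}$, $N_1\ge 160N_0$) line up so that all the "constant probability" claims simultaneously exceed the thresholds needed for the final $\tfrac12$-probability and $\tfrac{H}{2}$-suboptimality statements.
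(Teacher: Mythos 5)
Your overall plan coincides with the paper's proof (Lemmas~\ref{lem:BC-lower}, \ref{lem:stagger_lowerbound}, and \ref{lem:warmdagger_success_r1}): the BC part via bounded coverage of $\mathbf{E'}$ plus the first-visit and unannotated-action lemmas, and the \WarmStartDagger part via a coupon-collector coverage of $\mathbf{E}$ followed by a constant number of rollouts that reach $\mathbf{b'}$. However, your \StateDagger lower bound has a genuine quantitative gap. You argue that ``each rollout annotates $O(1)$ new states of $\mathbf{E}$ in expectation (geometric length), giving $\Non=\Omega(N_0H)$.'' That premise does not yield that conclusion: if each round contributed $\Theta(1)$ fresh annotated states of $\mathbf{E}$, covering $\mathbf{E}$ would take only $\Omega(N_0)$ rounds, i.e.\ $\Non=\Omega(N_0)$, which is too weak by a factor of $H$ (and would not beat the hybrid algorithm's $\tilde O(N_0)$ total cost). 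The missing ingredient is that \StateDagger annotates exactly \emph{one} state per round, sampled \emph{uniformly from the $H$ positions of the rollout}. Your geometric-absorption argument shows the rollout \emph{visits} only $Y_i$ fresh states of $\mathbf{E}$ with $\EE[Y_i\mid\Fcal_{i-1}]\le \frac{A}{A-1}\le 2$ before being absorbed into $\mathbf{b}$; conditioned on $Y_i$, the uniformly sampled state lands on one of these fresh states with probability at most $Y_i/H$, so the per-round probability of annotating a new state of $\mathbf{E}$ is at most $2/H$. Only this extra $1/H$ factor, followed by linearity of expectation and Markov's inequality, gives $\EE[|\mathbf{E}_{\text{annotated}}|]\le 2\Non/H$ and hence the stated $\Non=\Omega(HN_0)$ threshold (this is exactly Lemma~\ref{lem:E-bound-stagger}).

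Two smaller points you should also tighten. First, in the BC lower bound you must handle the event that the BC policy \emph{also} errs on unannotated states of $\mathbf{E}$ (falling into $\mathbf{b}$); the paper does this cleanly by passing to the modified policy $\tilde\pi$ that plays $\pie$ on all of $\mathbf{E}$ and satisfies $J(\tilde\pi)\ge J(\hat\pi)$, so that Lemma~\ref{lem:S1-visit} applies. Second, in the \WarmStartDagger upper bound your step ``upon first reaching $\mathbf{E'}$ the still-uniform-on-$\mathbf{E'}$ policy plays a wrong action'' tacitly assumes the first-visited state of $\mathbf{E'}$ is unannotated; this needs an explicit event bounding $|\mathbf{E}'_{\text{annotated}}|$ (the paper's $F_4$, obtained from $\Noff\le N_0<N_1/160$ and Lemma~\ref{lem:S1-limit}), without which the rollout could take the correct action in $\mathbf{E'}$, return to $\mathbf{E}$, and never generate an annotation of $\mathbf{b'}$.
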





\begin{proof}
The proof is divided into three parts: 

First, by Lemma~\ref{lem:BC-lower}, we show that in $\Mcal$,
Behavior Cloning requires $\Omega(H N_1)$ expert trajectories to achieve suboptimality \( H/2 \) with probability $\frac12$.

Next, in Lemma~\ref{lem:stagger_lowerbound}, we show that \StateDagger, which rolls out the learner policy and queries the expert on only one state sampled uniformly from its learned policy's rollout, requires \( \Non = \Omega(H N_0) \) interactive annotations to achieve suboptimality no greater than \( H/2 \) with probability $\frac12$. 

 Finally, by Lemma~\ref{lem:warmdagger_success_r1}, we demonstrate that \WarmStartDagger achieves expert performance using \( O(\frac{N_0}{H} \log(N_0)) \) offline expert tarjectories and 3 interactive annotations with probability $\frac12$.
\end{proof}

\subsection{Lower Bound for Behavior Cloning}

Throughout this subsection, we denote by $\mathbf{E}_{\mathrm{annotated}}'$ the set of states in $\mathbf{E'}$ that are visited and annotated by the expert's $\Noff$ offline trajectories.

\begin{lemma}[BC suboptimality lower bound]\label{lem:BC-lower}
Consider the MDP $\Mcal$ and the expert policy $\pie$ constructed as above. If Behavior Cloning uses
\[
\Noff < \frac{N_1}{160} 
\]
iid expert trajectories,
then with probability at least $\frac12$, the suboptimality of its returned policy $\hat{\pi}$ is lower bounded by:
\[
 J(\pie) - J(\hat{\pi}) \geq \frac{H}{2}.
\]
\end{lemma}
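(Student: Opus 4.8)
The plan is to show that with too few offline trajectories, Behavior Cloning leaves a constant fraction of the recoverable expert states $\mathbf{E'}$ unannotated, so with constant probability the learned policy $\hat\pi$ enters such an unannotated state during an episode, plays a uniformly random action there, and with overwhelming probability (since $A \geq 10H$ means the expert action is taken with probability $\le 1/(10H)$) lands in the trap state $\mathbf{b'}$ where it stays for the rest of the episode, collecting zero reward thereafter.

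First I would quantify the coverage deficit. By Observation~\ref{obs:visitation_distribution}, under $\pie$ each step's state visitation is $\rho$, so each of the $H$ states in one offline trajectory lands in $\mathbf{E'}$ with probability $\frac{\beta}{1+\beta}$; over $\Noff$ trajectories the expected number of (state, action) pairs seen in $\mathbf{E'}$ is $\frac{\beta H \Noff}{1+\beta} = \Theta(H\Noff)$. With $\Noff < N_1/160$ and $\beta = 8/(H-8)$, this count is a small fraction of $N_1$, so a standard occupancy/counting argument (or a Markov/Chernoff bound on the number of distinct $\mathbf{E'}$ states hit) shows that with probability at least, say, $3/4$, at least a constant fraction — say half — of $\mathbf{E'}$ remains unannotated. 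Conditioned on this, $\mathbf{E}'_{\text{annotated}}$ has size $\le N_1/2$, so the learned policy's behavior on the unannotated half of $\mathbf{E'}$ is uniform-random.

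Next I would bound the return of $\hat\pi$. Consider a rollout of $\hat\pi$. Since $\hat\pi$ matches $\pie$ on all annotated states (and annotated states in $\mathbf{E}$ are plentiful — I should check/assume $\mathbf{E}$ gets fully or near-fully covered, or simply note that whenever $\hat\pi$ is in an unannotated $\mathbf{E}$ state it does even worse, going to $\mathbf{b}$), the only way $\hat\pi$ avoids zero long-run reward is to keep visiting annotated states. But $\pie$ itself moves from $\mathbf{E}$ to $\mathbf{E'}$ with probability $\beta$ at each step, so within $H$ steps the rollout visits $\mathbf{E'}$ at least once with probability $\ge 1 - (1-\beta)^H = \Omega(1)$ (indeed close to $1$ with the chosen $\beta H = \Theta(1)$ being a moderately large constant like $8$); and conditioned on entering $\mathbf{E'}$, it lands in an unannotated $\mathbf{E'}$ state with probability $\ge 1/2$ by the uniform transition into $\mathbf{E'}$ and the coverage deficit. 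On such a state $\hat\pi$ plays uniform random, hitting $\mathbf{b'}$ with probability $\ge 1 - 1/A \ge 1 - 1/(10H)$, and then earns $0$ reward on all remaining steps (it only escapes $\mathbf{b'}$ by luckily guessing $\pie(\mathbf{b'})$, which over the remaining $\le H$ steps happens with probability $\le H/A \le 1/10$). Putting these constant-probability events together (union bound on the complements), with probability $\ge 1/2$ the rollout falls into $\mathbf{b'}$ by some step $\tau \le H$ and then is stuck, so $J(\hat\pi) \le \EE[\tau] \le$ a controlled quantity — I'd arrange constants so this is $\le H/2$, while $J(\pie) = H$ since $\pie$ earns reward $1$ at every step. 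Hence $J(\pie) - J(\hat\pi) \ge H/2$.

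The main obstacle is the bookkeeping of the coverage/hitting events and making the constants line up: I need the unannotated fraction of $\mathbf{E'}$ to be a constant (using $\Noff < N_1/160$ together with $N_1 \ge 160 N_0$ and $\beta H = \Theta(1)$), the probability of entering $\mathbf{E'}$ within the episode to be a large enough constant, and the probability of escaping $\mathbf{b'}$ to be small, all simultaneously with total failure probability $\le 1/2$. This requires a careful union bound and a clean argument that once in $\mathbf{b'}$ the expected remaining reward is small; the cleanest route is to define a stopping time $\tau$ = first time $\hat\pi$ reaches $\mathbf{b}$ or $\mathbf{b'}$ from an unannotated state, show $\Prob(\tau \le H) \ge$ const and $\EE[\text{reward after }\tau] \le H/A \le 1/10$, then bound $J(\hat\pi) \le \EE[\tau \wedge H] + 1$. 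I would also need to handle the possibility that $\hat\pi$ reaches $\mathbf{b}$ directly from an unannotated $\mathbf{E}$ state, which only helps (absorbing at $0$), and argue the annotated states in $\mathbf{E}$ don't let $\hat\pi$ indefinitely dodge $\mathbf{E'}$ — which follows because the transition $\mathbf{E}\to\mathbf{E'}$ with probability $\beta$ holds even when taking the expert action.
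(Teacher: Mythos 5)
Your proposal follows essentially the same route as the paper's proof: bound the number of annotated $\mathbf{E'}$ states via Markov's inequality (the paper's Lemma~\ref{lem:S1-limit} shows at most $N_1/10$ are annotated with probability $1/2$), argue the rollout reaches an unannotated $\mathbf{E'}$ state early, takes a wrong action (probability $\ge 1-1/A$), and then stays trapped in $\mathbf{b'}$, with the reduction to a policy acting correctly on $\mathbf{E}$ handled exactly as you suggest (the paper's modified policy $\tilde\pi$). The one detail you flag but leave open — that $\Pr(\tau \le H)$ being constant does not by itself bound $\EE[\tau\wedge H]$ away from $H$ — is resolved in the paper by showing the first $\mathbf{E'}$ visit occurs by step $H/5$ with probability $\ge 0.79$ (Lemma~\ref{lem:S1-visit}, using $\beta H = 8H/(H-8)$), which guarantees at least $0.8H$ steps of zero reward afterward.
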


\begin{proof}

First, given policy $\hat{\pi}$, we define a modified policy
$\tilde{\pi}$ that agrees with $\hat{\pi}$ everywhere except that it always take the expert's action on $\mathbf{E}$. Note that if $\hat{\pi}$ ever takes a wrong action at a state in $\mathbf{E}$,
the trajectory deterministically falls into the absorbing bad state $\mathbf{b}$,
yielding even smaller return than $\tilde{\pi}$. Thus, $J(\tilde{\pi}) \ge J(\hat{\pi})$, so it suffices to prove the claimed lower
bound for $\tilde{\pi}$, which we show for the remainder of this proof.



In the following, we show that an insufficient number of expert trajectories leads to small $|\mathbf{E}_{\mathrm{annotated}}'|$ i.e., poor coverage on $\mathbf{E'}$. This, in turn, causes policy $\tilde{\pi}$ to frequently fail to recover and get trapped in the absorbing bad state 
$\mathbf{b}'$, incurring a large suboptimality compared to the expert.

By Lemma~\ref{lem:S1-limit}, when $\Noff$ is below the stated threshold, with probability $\frac12$,
\[
|\mathbf{E}_{\mathrm{annotated}}'| \leq \frac{1}{10} |\mathbf{E}'|
\]
We henceforth condition on this happening. In this case, recall that our offline learning oracle is such that $\tilde{\pi}$ takes actions uniformly at random in states in   
$\mathbf{E}' \setminus \mathbf{E}_{\mathrm{annotated}}'$. 
Suppose we roll out the 
policy $\tilde{\pi}$ in $\Mcal$; 
let $\tau$ be the first step such that
$s_\tau \in \mathbf{E'}$ (and $\tau := H+1$ if no such step exists).



By Lemma~\ref{lem:S1-visit},
\[
\Pr_{\Mcal, \tilde{\pi}}(\tau \le H/5) \ge 0.79 , \; \text{and} \; \Pr_{\Mcal, \tilde{\pi}}\big( s_\tau \notin \mathbf{E}_{\mathrm{annotated}}' \mid \tau \le H/5 \big) \ge 0.9.
\]

We henceforth condition on the event $\{\tau \le H/5,\; s_\tau \notin \mathbf{E}_{\mathrm{annotated}}'\}$ when rolling out $\tilde{\pi}$.
Applying Lemma~\ref{lem:unannotated-wrong-general}, we have, with
probability at least $0.9$,
 $\tilde{\pi}$ takes a wrong action at $s_\tau$ and transitions to $\mathbf{b'}$,
and subsequently never takes the expert recovery action at $\mathbf{b'}$. Therefore the
trajectory remains in $\mathbf{b'}$ from step $H/5$ onward, yielding zero
reward for at least $\frac{4H}{5}$ steps.

Multiplying all factors, with probability greater than $\frac12$,
\[
J(\pi^E) - J( \tilde{\pi} )
\ge 
\underbrace{0.79}_{\text{reach }\mathbf{E'}}
\times 
\underbrace{0.9}_{\text{unannotated } s_\tau}
\times 
\underbrace{0.9}_{\text{action errors}}
\times 
\underbrace{0.8H}_{\text{zero reward}}
> \frac{H}{2},
\]
which concludes the proof.
\end{proof}

\begin{lemma}[Bounded $\mathbf{E'}$ coverage]\label{lem:S1-limit}
If the number of expert trajectories satisfies:
\[
\Noff \leq \frac{N_1}{160}, 
\]
then,
with probability at least $\frac12$,
\[
|\mathbf{E}_{\mathrm{annotated}}'| \;\le\; \frac{N_1}{10}.
\]
\end{lemma}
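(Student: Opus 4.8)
The plan is to bound the expected number of distinct states in $\mathbf{E'}$ that appear anywhere along the $\Noff$ offline expert trajectories, and then apply Markov's inequality. First I would observe that, by Observation~\ref{obs:visitation_distribution}, the expert's state-visitation distribution at every step $h\in[H]$ equals $\rho$, so the probability that a single (state, step) sample from an expert trajectory lands in a particular state $s\in\mathbf{E'}$ is exactly $d_h^{\pi^E}(s) = \frac{\beta}{N_1(1+\beta)}$. Hence the expected number of times that $s$ is visited across a single length-$H$ trajectory is $H\cdot\frac{\beta}{N_1(1+\beta)}$, and summing over the $\Noff$ trajectories gives expected visit count $\Noff H\cdot\frac{\beta}{N_1(1+\beta)}$ for each fixed $s$.

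Next I would upper bound the probability that a fixed $s\in\mathbf{E'}$ is annotated (i.e., visited at least once) by its expected visit count: $\Pr(s\in\mathbf{E}_{\text{annotated}}') \le \Noff H\cdot\frac{\beta}{N_1(1+\beta)} \le \frac{\Noff H\beta}{N_1}$. Summing over all $N_1$ states in $\mathbf{E'}$ and using linearity of expectation gives
\[
\EE\big[\,|\mathbf{E}_{\text{annotated}}'|\,\big] \;\le\; \Noff H \beta.
\]
Now I substitute $\beta = \frac{8}{H-8}$, so $H\beta = \frac{8H}{H-8} \le 16$ for $H\ge 16$ (which holds since $H\ge 50$). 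Therefore $\EE[\,|\mathbf{E}_{\text{annotated}}'|\,] \le 16\,\Noff$. Under the hypothesis $\Noff \le \frac{N_1}{160}$ this yields $\EE[\,|\mathbf{E}_{\text{annotated}}'|\,] \le \frac{N_1}{10}\cdot\frac{16\cdot 160/160}{16} $ — more carefully, $16\Noff \le 16\cdot \frac{N_1}{160} = \frac{N_1}{10}$; actually this gives exactly $\frac{N_1}{10}$, so to get a factor-of-two slack for Markov I would instead use $H\beta\le 16$ to get $\EE[\,|\mathbf{E}_{\text{annotated}}'|\,]\le 16\Noff$ and note that, being an integer random variable with small mean, Markov's inequality gives $\Pr(|\mathbf{E}_{\text{annotated}}'| \ge \frac{N_1}{10}) \le \frac{16\Noff}{N_1/10} = \frac{160\Noff}{N_1}\le 1$; this is too weak, so I would instead strengthen the trajectory-count hypothesis bookkeeping by using a slightly more generous constant (the lemma as stated uses $\frac{N_1}{160}$, which combined with $H\beta\le 16$ gives mean $\le \frac{N_1}{10}$, and then I apply Markov at threshold $\frac{N_1}{10}$ against a mean that is genuinely a constant factor below it once one accounts for $H\beta < 16$ strictly, or one uses that $\Pr(|\mathbf{E}_{\text{annotated}}'|>\frac{N_1}{10})\le \frac{\EE[\,\cdot\,]}{N_1/10+1}$). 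The cleanest route, which I would take, is: since $H\ge 50$, $H\beta = \frac{8H}{H-8}\le \frac{8\cdot 50}{42} < 9.6 < 10$, hence $\EE[\,|\mathbf{E}_{\text{annotated}}'|\,]\le 10\Noff \le \frac{N_1}{16} < \frac{N_1}{10}\cdot\frac12$, so Markov's inequality gives $\Pr(|\mathbf{E}_{\text{annotated}}'|\ge \frac{N_1}{10})\le \frac12$, i.e. the complement event holds with probability at least $\frac12$.

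The main obstacle here is purely the constant bookkeeping: tracking how $\beta=\frac{8}{H-8}$, the per-state visitation mass $\frac{\beta}{N_1(1+\beta)}$, the horizon $H$, and the threshold $\frac{N_1}{160}$ interact so that Markov's inequality lands below $\frac12$. There is no real analytic difficulty — the key structural fact is Observation~\ref{obs:visitation_distribution} (stationarity of $d_h^{\pi^E}$), which makes the per-state visitation probability exactly computable, and the only subtlety is that a state can be visited multiple times within and across trajectories, which is handled cleanly by the union bound $\Pr(\text{visited}) \le \EE[\text{visit count}]$ rather than needing any inclusion–exclusion or negative-association argument.
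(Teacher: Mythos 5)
Your overall strategy is exactly the paper's: bound $\Pr(s\in\mathbf{E}'_{\text{annotated}})$ for a fixed $s$ by the expected visit count via a union bound over the $\Noff H$ time indices, sum over $s\in\mathbf{E'}$ by linearity of expectation, and finish with Markov's inequality at threshold $N_1/10$. The structural content is all there and correct.

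The problem is the constant chase at the end, and it is not merely cosmetic: your final chain asserts $\EE[\,|\mathbf{E}'_{\text{annotated}}|\,]\le 10\Noff\le \frac{N_1}{16}<\frac{N_1}{10}\cdot\frac12$, but $\frac{N_1}{16}>\frac{N_1}{20}=\frac{N_1}{10}\cdot\frac12$, so the last inequality is false. With a mean bound of $\frac{N_1}{16}$, Markov at threshold $\frac{N_1}{10}$ only yields $\Pr\bigl(|\mathbf{E}'_{\text{annotated}}|\ge \frac{N_1}{10}\bigr)\le \frac{10}{16}=\frac58$, which does not give the claimed $\frac12$. The loss comes from discarding the $(1+\beta)$ factor when you relax $\frac{\beta}{N_1(1+\beta)}$ to $\frac{\beta}{N_1}$, which inflates the per-trajectory constant from $H\cdot\frac{\beta}{1+\beta}=8$ (exactly, since $\frac{\beta}{1+\beta}=\frac{8}{H}$) to $H\beta=\frac{8H}{H-8}\approx 9.52$. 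The fix is immediate and is what the paper does: keep the exact stationary mass $d_h^{\pi^E}(s)=\frac{8}{HN_1}$, so that $\Pr(s\in\mathbf{E}'_{\text{annotated}})\le \frac{8\Noff}{N_1}\le\frac{1}{20}$, hence $\EE[\,|\mathbf{E}'_{\text{annotated}}|\,]\le \frac{N_1}{20}$, and Markov at $\frac{N_1}{10}$ lands exactly at $\frac12$. Everything else in your argument, including the observation that multiple visits are handled by $\Pr(\text{visited})\le\EE[\text{visit count}]$ with no need for independence or negative association, is fine.
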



\begin{proof}

Recall that by Observation~\ref{obs:visitation_distribution}, $d_h^{\pi^E}(s) = \frac{\beta}{N_1(1+\beta)} = \frac{8}{H N_1}$ for $s \in \mathbf{E'}$, where $\beta = \frac{8}{H-8}$. 
Denote $X := |\mathbf{E}_{\mathrm{annotated}}'|$ to be the number of annotated states in $\mathbf{E'}$.
Consider $\Noff$ expert trajectories, each of length $H$, and let $s_{i,h}$
be the state at step $h$ of trajectory $i$. Fix any $s \in \mathbf{E'}$. By a union bound over all steps and available offline expert trajectories,
\[
\Pr(s \in \mathbf{E}_{\mathrm{annotated}}') 
= \Pr\Big( \bigcup_{i=1}^{\Noff} \bigcup_{h=1}^H \{ s_{i,h} = s \} \Big)
\le \sum_{i=1}^{\Noff} \sum_{h=1}^H \Pr(s_{i,h} = s)
= \sum_{i=1}^{\Noff} \sum_{h=1}^H d_h^{\pi^E}(s)
= \frac{8 \Noff}{N_1}.
\]
Under the assumption $\Noff \le N_1/160$, this gives
\[
\Pr(s \in \mathbf{E'}_{\mathrm{annotated}})  \le \frac{1}{20}.
\]

Let $Z_s$ be the indicator that state $s$ appears in the expert trajectories, by linearity of expectation,  
\[
\mathbb{E}[X] = \sum_{s \in \mathbf{E'}} \mathbb{E}[Z_s]
= \sum_{s \in \mathbf{E'}} \Pr(s \in \mathbf{E'}_{\mathrm{annotated}})
\le \frac{N_1}{20}.
\]

Applying Markov’s inequality at threshold $N_1/10$,
\[
\Pr\Big( X > \frac{N_1}{10} \Big)
\le \frac{\mathbb{E}[X]}{N_1/10}
\le \frac{N_1/20}{N_1/10}
= \frac{1}{2}.
\]
Equivalently,
\[
\Pr\Big( X \le \frac{N_1}{10} \Big) \ge \frac{1}{2}.
\qedhere
\]
\end{proof}

\begin{lemma}[First $\mathbf{E'}$ visit]\label{lem:S1-visit}
For 
the MDP $\Mcal$, and any policy $\pi$ that agrees with $\pie$ on $\mathbf{E}$:
\[
\Pr_{\Mcal, \pi} \big( \exists h \in [H/5], s_t \in \mathbf{E}' \;\big) \geq 0.79
\]

\end{lemma}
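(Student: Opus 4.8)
\textbf{Proof proposal for Lemma~\ref{lem:S1-visit}.}

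The plan is to lower bound the probability that a trajectory rolled out under $\pi$ (which matches $\pie$ on $\mathbf{E}$) enters $\mathbf{E'}$ within the first $H/5$ steps. The key structural fact is that as long as the trajectory stays inside $\mathbf{E}$, $\pi$ behaves exactly like $\pie$, so at each such step there is an independent probability $\beta = \frac{8}{H-8}$ of jumping to $\mathbf{E'}$ (and probability $1-\beta$ of staying in $\mathbf{E}$). Since the initial state lies in $\mathbf{E} \cup \mathbf{E'}$ with probability $1$ under $\rho$, and with probability $\frac{\beta}{1+\beta}$ we already start in $\mathbf{E'}$, the only case to analyze carefully is when we start in $\mathbf{E}$: then the first-visit time $\tau$ to $\mathbf{E'}$ is stochastically dominated (from above) by a geometric random variable with success probability $\beta$, as long as $\pi$ never deviates from $\pie$ inside $\mathbf{E}$ — which holds by hypothesis.

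Concretely, I would argue: conditioned on $s_1 \in \mathbf{E}$, for any $k \ge 1$,
\[
\Pr_{\Mcal,\pi}(\tau > k \mid s_1 \in \mathbf{E}) = (1-\beta)^k,
\]
since each step independently stays in $\mathbf{E}$ with probability $1-\beta$ (and once in $\mathbf{E'}$ we have already counted a visit). Taking $k = \lfloor H/5 \rfloor$ and using $1 - \beta \le e^{-\beta}$,
\[
\Pr_{\Mcal,\pi}(\tau > H/5 \mid s_1 \in \mathbf{E}) \le \exp\!\rbr{ -\beta \cdot \tfrac{H}{5} } = \exp\!\rbr{ -\tfrac{8H}{5(H-8)} }.
\]
For $H \ge 50$ we have $\frac{8H}{5(H-8)} \ge \frac{8}{5} \cdot \frac{50}{42} > 1.9$, so this conditional failure probability is at most $e^{-1.9} < 0.15$. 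Combining with the unconditional split over $\{s_1 \in \mathbf{E}\}$ versus $\{s_1 \in \mathbf{E'}\}$ (in the latter case the event holds trivially at $h=1$), we get
\[
\Pr_{\Mcal,\pi}(\exists h \in [H/5] : s_h \in \mathbf{E'}) \ge 1 - e^{-1.9} > 0.79,
\]
as claimed; I would double-check the constant to make sure the $0.79$ bound is met with room to spare, possibly tightening the $H \ge 50$ arithmetic.

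The main obstacle — really the only subtlety — is making the geometric-domination argument rigorous despite $\pi$ potentially being stochastic and history-dependent off $\mathbf{E}$: I need that \emph{before} the first exit from $\mathbf{E}$, the transition law is exactly that of $\pie$, so the per-step exit probability is exactly $\beta$ regardless of which state in $\mathbf{E}$ we are at and regardless of $\pi$'s (irrelevant) behavior elsewhere. This follows directly from the hypothesis that $\pi$ agrees with $\pie$ on $\mathbf{E}$ together with the transition dynamics of $\Mcal$ (from $s \in \mathbf{E}$ under the expert action, go to $\mathbf{E'}$ w.p.\ $\beta$, stay in $\mathbf{E}$ w.p.\ $1-\beta$), so a short induction on the step index formalizes it cleanly.
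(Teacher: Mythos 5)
Your proof takes essentially the same route as the paper's: reduce to $\pie$'s behavior on $\mathbf{E}$, observe that the chain exits $\mathbf{E}$ with per-step probability exactly $\beta$ regardless of which state of $\mathbf{E}$ it occupies, and bound the no-visit probability by $\tfrac{1}{1+\beta}(1-\beta)^{\cdot}$ via $1-x\le e^{-x}$. One small slip: conditioned on $s_1\in\mathbf{E}$, the event $\{\tau>k\}$ requires only $k-1$ (not $k$) consecutive stay-in-$\mathbf{E}$ transitions, so $\Pr(\tau>k\mid s_1\in\mathbf{E})=(1-\beta)^{k-1}$, and your $(1-\beta)^{k}$ understates the failure probability. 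The correction is harmless: $\tfrac{1}{1+\beta}(1-\beta)^{H/5-1}=\tfrac{1}{1-\beta^{2}}\bigl(\tfrac{H-8}{H}\bigr)^{H/5}\le \tfrac{1}{1-\beta^{2}}e^{-8/5}<0.21$ for $H\ge 50$, which is exactly the paper's computation and still gives the $0.79$ bound.
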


\begin{proof}
Since $\pi$ agrees with $\pie$ on $\mathbf{E}$, 
\[
\Pr_{\Mcal, \pi} \big( \exists h \in [H/5], s_t \in \mathbf{E}' \;\big) 
=
\Pr_{\Mcal, \pie} \big( \exists h \in [H/5], s_t \in \mathbf{E}' \;\big). 
\]
It suffices to consider the expert policy $\pie$'s visitation.
By Observation~\ref{obs:visitation_distribution}, the state visitation distribution for $\pie$ satisfies that for all $h \in [H]$, 
\[
d^{\pi^E}_h(s) =
\begin{cases}
\frac{1}{N_0(1+\beta)},\; & s'\in\mathbf{E}, \\
 \frac{\beta}{N_1(1+\beta)},\; & s'\in\mathbf{E}, \\
 0, & \text{otherwise}.
\end{cases}
\]
The probability that \textit{no} $s \in \mathbf{E'}$ is visited in $\frac{H}{5}$ steps is:




\begin{equation}
\begin{aligned}
\frac 1 {1+\beta} \cdot (1-\beta)^{\frac H 5 - 1}   
= &
\frac 1 {(1+\beta)(1-\beta)}\left(\frac{1}{1+\beta}\right)^{\frac{H}{5}} \\
= &
\frac 1 {1-\beta^2} \left(1 - \frac{8}{H}\right)^{\frac{H}{5}} \\
\leq & 
\frac 1 {1-\frac{64}{(H-8)^2}} e^{-\frac{8}{H} \cdot \frac{H}{5}} \\
<  & 
1.038 \cdot e^{-\frac{8}{5}} < 0.21,
\end{aligned}
\end{equation}

where we apply $\beta = \frac{8}{H-8}$, $1 - x \leq e^{-x}$, and $H \geq 50$. Thus:
\[
\Pr_{\Mcal, \pie} \big( \exists h \in [H/5], s_t \in \mathbf{E}' \;\big) 
> 1 - 0.2095 = 0.79
\qedhere
\]
\end{proof}


\begin{lemma}[Action Errors on Unannotated States]\label{lem:unannotated-wrong-general}

Consider MDP $\Mcal$ with action space
size $A \geq 10H$.
\[
\Pr_{\Mcal, \tilde{\pi}}\big( \forall h \in [H]: 
s_h \in (\mathbf{E}' \setminus \mathbf{E}_{\mathrm{annotated}}') \cup \mathbf{B'} \implies
\ a_h \neq \pi^E(s_h) \big)
\;\ge\; 0.9.
\]
\end{lemma}

\begin{proof}


Since for any state in $(\mathbf{E}' \setminus \mathbf{E'}_{\mathrm{annotated}}) \cup \mathbf{B'}$, 
policy $\tilde{\pi}$ selects the expert action with probability exactly $\frac{1}{A}$,
thus, 
\begin{align*}
& \Pr_{\Mcal, \tilde{\pi}}\big(\exists h\in [H] :\
s_h \in (\mathbf{E}' \setminus \mathbf{E}_{\mathrm{annotated}}') \cup \mathbf{B'} \text{ and }
a_h =\pi^E(s_h)\big) \\
\leq & 
\sum_{h=1}^H 
\Pr_{\Mcal, \tilde{\pi}}\big(
s_h \in (\mathbf{E}' \setminus \mathbf{E}_{\mathrm{annotated}}') \cup \mathbf{B'}
\text{ and }
a_h =\pi^E(s_h)\big) \\
\leq & \sum_{h=1}^H 
\Pr_{\Mcal, \tilde{\pi}}\big(
a_h =\pi^E(s_h)
\mid 
s_h \in (\mathbf{E}' \setminus \mathbf{E}_{\mathrm{annotated}}') \cup \mathbf{B'}
\big)
\leq \frac{H}{A} \leq 0.1,
\end{align*}
where we recall that $A \geq 10H$ in our construction. The lemma now follows by taking the complement of the probability above.
\end{proof}




\subsection{Lower Bound for \StateDagger}

Throughout the proof, denote by $\mathbf{E}_{\mathrm{annotated}}$ the final set of states in $\mathbf{E}$ on which \StateDagger have requested expert annotations.

\begin{lemma}[\StateDagger suboptimality lower bound]
\label{lem:stagger_lowerbound}
Consider the MDP $\Mcal$ from Section~\ref{sec:hyil_benefit_main} with $H \geq 50$, $A \geq 10H$, and $\beta = \frac{8}{H-8}$. If \StateDagger collects no more than
\[
\Non \leq \frac{H N_0}{12}
\]
interactive state-wise annotations, then, with probability at least $\frac12$, the returned policy $\hat{\pi}$ suffers suboptimality at least
\[
J(\pie) - J(\hat{\pi}) \geq \frac{H}{2}.
\]
\end{lemma}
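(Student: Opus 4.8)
The plan is to show that a limited annotation budget forces poor coverage of the ideal expert states $\mathbf{E}$, which in turn causes the learned policy to make fatal errors leading into the absorbing bad state $\mathbf{b}$. First I would argue, exactly as in the BC lower bound (Lemma~\ref{lem:BC-lower}), that it suffices to analyze a modified policy $\tilde{\pi}$ that always plays the expert action on $\mathbf{E'}$ and on $\mathbf{b'}$: since deviating on those states only leads to $\mathbf{b'}$ or staying at $\mathbf{b'}$, forcing agreement there can only \emph{increase} return, so a suboptimality lower bound for $\tilde\pi$ implies one for $\hat\pi$. The point of the reduction is to isolate the failure mode we want, namely fatal errors on unannotated states of $\mathbf{E}$.

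Second, I would prove an analogue of Lemma~\ref{lem:S1-limit} for $\mathbf{E}$: with $\Non \le H N_0/12$ interactive annotations, at most a constant fraction (say $1/10$) of the $N_0$ states in $\mathbf{E}$ get annotated, with probability at least $1/2$. The subtlety relative to BC is that here the states are sampled from the \emph{learner's} rollout distribution $d^{\pi^n}$, not the expert's, so I cannot invoke Observation~\ref{obs:visitation_distribution} directly. The key structural fact to exploit: any rollout of $\pi^n$ stays in $\mathbf{E} \cup \mathbf{E'} \cup \mathbf{b'} \cup \mathbf{b}$, and whenever the trajectory is in $\mathbf{E}$ the next-state distribution (under the expert action, which $\tilde\pi$ plays there if annotated, or a uniform random action if not) either spreads uniformly over $\mathbf{E}$, jumps to $\mathbf{E'}$, or jumps to $\mathbf{b}$ — in all cases, conditioned on being in $\mathbf{E}$, the state is (sub-)uniform over $\mathbf{E}$ or has already left $\mathbf{E}$. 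Hence each of the $\Non$ annotated states, when it lies in $\mathbf{E}$, is essentially uniform over the $N_0$-element set, so a coupon-collector / expectation-plus-Markov argument (as in Lemma~\ref{lem:S1-limit}) bounds $\mathbb{E}|\mathbf{E}_{\text{annotated}}| \le \Non/N_0 \cdot O(1) \le O(H)$, wait—more carefully, the expected number of \emph{distinct} $\mathbf{E}$-states annotated is at most $\Non \cdot (\text{max per-step visitation probability of any fixed } s\in\mathbf{E})$; since each visit to $\mathbf{E}$ lands (sub-)uniformly, this is at most $\Non / N_0 \le H/12$, and using $H \le N_0$ (which holds for large $S$ since $H = \Omega(\log S)$ while $N_0$ grows polynomially) gives $\mathbb{E}|\mathbf{E}_{\text{annotated}}| \le N_0/12 \cdot (\text{constant})$, so Markov yields $|\mathbf{E}_{\text{annotated}}| \le N_0/10$ with probability $\ge 1/2$. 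I would state this as a separate lemma.

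Third, conditioned on $|\mathbf{E}_{\text{annotated}}| \le N_0/10$, I would run the error-propagation argument on a fresh rollout of $\tilde\pi$: by Observation~\ref{obs:visitation_distribution}-style reasoning (the expert chain's stationary distribution puts mass $\frac{1}{N_0(1+\beta)}$ on each $\mathbf E$ state), in the first, say, $H/5$ steps the trajectory visits $\Omega(H)$ distinct states of $\mathbf{E}$ with high probability, and with constant probability at least one of them, $s_\tau$, is unannotated; then by Lemma~\ref{lem:unannotated-wrong-general} (action errors on unannotated states, using $A \ge 10H$) with probability $\ge 0.9$ the policy plays a wrong action at $s_\tau$ and transitions to $\mathbf{b}$, which is absorbing with zero reward for the remaining $\ge 4H/5$ steps. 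Multiplying the constant-probability events (reach many fresh $\mathbf E$-states; hit an unannotated one; err on it) by the $\ge 4H/5$ lost reward, and intersecting with the probability-$1/2$ coverage event, gives suboptimality $> H/2$ with probability $\ge 1/2$, completing the proof.

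The main obstacle I expect is the coverage lemma (the second step): unlike the BC case, the state distribution is adversarially shaped by the online learner across rounds, so I must show that \emph{no matter what sequence of policies} $\pi^n$ the online oracle produces, visits to $\mathbf{E}$ are spread out enough that few distinct states get annotated. The saving grace is the MDP's symmetry — every transition into $\mathbf{E}$ (whether from $\mathbf E$, $\mathbf E'$, or $\mathbf b'$) lands on a \emph{uniformly random} element of $\mathbf E$ — so the sequence of $\mathbf E$-states encountered along any trajectory is a sequence of i.i.d.\ uniform draws (interspersed with excursions into $\mathbf E', \mathbf b', \mathbf b$), and the number of distinct values among $\le \Non$ such draws concentrates via a standard occupancy bound. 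I would make this uniformity claim precise first, since everything else is a recombination of lemmas already in the BC proof.
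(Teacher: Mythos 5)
Your step 2 — the coverage lemma for $\mathbf{E}$ — contains the fatal gap, and the uniformity argument you lean on actually proves the \emph{opposite} of what you need. Grant your (correct) structural claim that every annotation landing in $\mathbf{E}$ is uniform over the $N_0$ states of $\mathbf{E}$. Then with $\Non = \frac{HN_0}{12}$ such draws, a standard occupancy computation gives an expected number of \emph{distinct} covered states of roughly $N_0\bigl(1-(1-\tfrac1{N_0})^{\Non}\bigr) \approx N_0(1-e^{-H/12})$, which for $H\ge 50$ is more than $0.98\,N_0$ — i.e.\ near-complete coverage, since $HN_0/12$ far exceeds the coupon-collector threshold $N_0\log N_0$. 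Your intermediate bound ``$\mathbb{E}|\mathbf{E}_{\text{annotated}}| \le \Non \cdot \max_s \Pr(s_h = s) \le \Non/N_0$'' conflates the expected number of annotations hitting one \emph{fixed} state with the expected number of distinct states hit; summing the per-state hitting probabilities over $s\in\mathbf{E}$ correctly gives only the vacuous bound $\mathbb{E}|\mathbf{E}_{\text{annotated}}|\le \Non$. Uniform spreading of annotations cannot by itself yield poor coverage here.

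The mechanism the paper actually uses (Lemma~\ref{lem:E-bound-stagger}) is the cold-start dynamics you did not exploit: in any single rollout of $\pi^i$, the number $Y_i$ of \emph{unannotated} $\mathbf{E}$-states visited is stochastically dominated by a $\mathrm{Geometric}(\tfrac{A-1}{A})$ variable, because the first time the learner reaches an unannotated $\mathbf{E}$-state it plays a wrong action with probability $\tfrac{A-1}{A}$ and is absorbed into $\mathbf{b}$, ending all further exploration of $\mathbf{E}$ that episode; hence $\mathbb{E}[Y_i\mid\Fcal_{i-1}]\le 2$. Since \StateDagger annotates only one state drawn uniformly from the $H$-step trajectory, the probability that round $i$ produces a \emph{new} $\mathbf{E}$ annotation is at most $Y_i/H$, so $\mathbb{E}|\mathbf{E}_{\text{annotated}}|\le 2\Non/H \le N_0/6$, and Markov gives the coverage bound. (Most annotations land on already-annotated $\mathbf{E}$-states or outside $\mathbf{E}$ — that is the whole point.) Your step 3 is also more elaborate than necessary and somewhat fragile (under $\tilde{\pi}$ the rollout dies at the first unannotated $\mathbf{E}$-state, so it does not visit $\Omega(H)$ distinct $\mathbf{E}$-states); the paper simply observes that $s_1$ is an unannotated $\mathbf{E}$-state with probability at least $\tfrac{2}{3(1+\beta)}$, the first action is then wrong with probability $1-\tfrac1A$, and the episode is absorbed into $\mathbf{b}$ from step $2$ on, forfeiting reward $H-1$. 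Your step 1 reduction and step 3 ingredients (Lemma~\ref{lem:unannotated-wrong-general}) are salvageable, but without replacing the occupancy argument by the geometric-domination argument the proof does not go through.
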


\begin{proof}
By Lemma~\ref{lem:E-bound-stagger}, if \StateDagger collects at most $\frac{H N_0}{12}$ interactive state-wise annotations as above, then with probability at least $\frac 12$, 
$|\mathbf{E}_{\mathrm{annotated}}|$,
the number of distinct states in $\mathbf{E}$ annotated is fewer than $N_0/3$.
Consider a random rollout of $\hat{\pi}$, and define the following events:
\begin{align*}
F_1 &:= \cbr{ s_1 \notin \mathbf{E}_{\mathrm{annotated}} }, 
\\
F_2 &:= \{ a_1 \neq \pie(s_1)\}.
\end{align*}


We now lower bound their probabilities:
\begin{align*}
\Pr_{\hat{\pi}}(F_1) &\ge \frac{2}{3(1+\beta)},\\
\Pr_{\hat{\pi}}(F_2 \mid F_1) &\ge 1 - \frac{1}{A} \geq 1- \frac{1}{10H}.
\end{align*}








Conditioned on the two events, the 
agent will get trapped at state $\mathbf{b}$ from step 2 on, and thus its
conditional expected return satisfies
\[
\EE_{\hat{\pi}}\sbr{\sum_{h=1}^H r_h \mid F_1, F_2} \leq 1.
\]
Also, by the definition of the reward function $R_1$, $J(\pi^E) = H$.
Thus, 
\begin{align*}
J(\pi^E) - J(\hat{\pi})
= &
\EE_{\hat{\pi}}\sbr{ H - \sum_{h=1}^H r_h } \\
\geq &
\PP_{\hat{\pi}}(F_1, F_2) \cdot \EE_{\hat{\pi}}\sbr{ H - \sum_{h=1}^H r_h \mid F_1, F_2 }  \\
\geq &
\frac{2}{3} \cdot \frac{H-8}{H}\cdot \frac{10H-1}{10H} \cdot (H-1) 
\geq \frac{H}{2},
\end{align*}
where we use our setting of $\beta = \frac{8}{H-8}$ and apply $H\geq 50$.
\end{proof}


\begin{lemma}[Bounded $\mathbf{E}$ coverage under \StateDagger]
\label{lem:E-bound-stagger}
Suppose \StateDagger collects at most
\[
\Non \leq \frac{H N_0}{12}
\]
interactive annotations. 
Then,
%
\[
\Pr\left(|\mathbf{E}_{\mathrm{annotated}}| \geq \frac{N_0}{3} \right) \leq \frac12.
\]
\end{lemma}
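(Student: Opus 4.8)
The plan is to track how many distinct states of $\mathbf{E}$ can possibly be annotated by \StateDagger over $\Non$ rounds, and show that this number stays below $N_0/3$ with probability at least $1/2$ whenever $\Non \le H N_0 / 12$. The key structural fact about $\Mcal$ is that, under the assumed output-policy behavior, every policy the algorithm rolls out agrees with $\pie$ on all already-annotated states and plays uniformly at random on unannotated states. Since entering $\mathbf{b}$ (by playing a wrong action in $\mathbf{E}$) or $\mathbf{b}'$ is absorbing-ish and yields no new $\mathbf{E}$-states, the only way a rollout discovers a \emph{new} state of $\mathbf{E}$ to annotate is to travel through a chain of already-annotated $\mathbf{E}$-states (or through $\mathbf{E}'$ via $\pie$, which requires the $\mathbf{E}'$ state itself to have been annotated) and then land — at the single state $s^n$ that gets sampled that round — on a fresh $\mathbf{E}$-state. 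So at most one new annotation of $\mathbf{E}$ is added per round, and I want a sharper probabilistic statement: in most rounds, the sampled state $s^n \sim d^{\pi^n}$ is \emph{not} a new $\mathbf{E}$-state.

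Concretely, I would argue as follows. Condition on the history up to round $n$, with current annotated set $\mathbf{E}_{\text{annotated}}$ of size $k < N_0/3$. When we roll out $\pi^n$, the trajectory stays inside $\mathbf{E}_{\text{annotated}} \cup \mathbf{E}'_{\text{annotated}}$ until the first time it either (i) visits an \emph{unannotated} $\mathbf{E}$ or $\mathbf{E}'$ state, or (ii) falls into $\mathbf{b}$ or $\mathbf{b}'$. Only case (i) can produce a new annotation, and even then only if the \emph{uniformly sampled} index $h^n$ picks out that particular step. The per-round probability of sampling a to-be-new $\mathbf{E}$-state is therefore at most $\frac{1}{H} \cdot (\text{expected number of steps in a rollout that sit at a fresh } \mathbf{E}\text{-state})$, which is at most $\frac{1}{H}$ since a rollout visits at most one fresh $\mathbf{E}$ state before the dynamics trap it (visiting a fresh state means playing the random action there, sending it to $\mathbf{b}$ or $\mathbf{b}'$ — wait, if it lands on a fresh $\mathbf{E}$ state by a correct transition from an annotated state, it's already \emph{at} the fresh state, and the sampling could select that step). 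So: let $A_n$ be the indicator that round $n$ adds a new element to $\mathbf{E}_{\text{annotated}}$; then $\Pr(A_n = 1 \mid \mathcal{H}_{n-1}) \le c/H$ for an absolute constant $c$ (I expect $c$ around $1$ to $2$, coming from the chance the rollout reaches a fresh $\mathbf E$-state at all times $1/H$ for the sampling). Summing, $\EE[|\mathbf{E}_{\text{annotated}}|] \le \EE\big[\sum_{n=1}^{\Non} A_n\big] \le \frac{c \Non}{H} \le \frac{c N_0}{12}$, and Markov's inequality at threshold $N_0/3$ gives $\Pr(|\mathbf{E}_{\text{annotated}}| \ge N_0/3) \le \frac{c N_0/12}{N_0/3} = \frac{c}{4} \le \frac{1}{2}$ for $c \le 2$.

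The main obstacle I anticipate is pinning down the per-round bound $\Pr(A_n = 1 \mid \mathcal{H}_{n-1}) \le c/H$ cleanly. Two subtleties arise: first, a single rollout might \emph{pass through} a fresh $\mathbf{E}$ state and then another — but actually, the moment the policy is at a fresh (unannotated) $\mathbf E$ state, it plays uniformly, which with overwhelming probability is a wrong action sending it to $\mathbf b$; with probability $1/A$ it plays the expert action and could reach yet another fresh state, so the expected count of fresh $\mathbf E$-states visited in one rollout is $\le \sum_{j\ge 1} (1/A)^{j-1} \le \frac{1}{1-1/A} \le 2$, giving the constant. Second, one must be careful that transitions from annotated $\mathbf E$ states under the (correct) action land uniformly in $\mathbf E$, so the number of rollout-steps spent at annotated $\mathbf E$-states does not matter — only the $O(1)$ expected visits to fresh states do, and the uniform sampling of $h^n$ contributes the $1/H$ factor. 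Assembling these into a clean conditional-expectation bound, then applying the optional-stopping/linearity argument over rounds (using that $\Non$ is a fixed budget, not random), and finally Markov, is the bulk of the work; everything else is bookkeeping about which states are reachable.
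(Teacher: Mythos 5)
Your proposal is correct and follows essentially the same route as the paper: bound the expected number of fresh $\mathbf{E}$-states visited per rollout by $\frac{A}{A-1}\le 2$ via a geometric-domination argument (each fresh visit continues only with probability $1/A$), multiply by the $1/H$ chance that the uniformly sampled step lands on such a state, sum by linearity of conditional expectation, and finish with Markov's inequality at threshold $N_0/3$. The constant $c=2$ you arrive at matches the paper's bound $\EE[X_i\mid\Fcal_{i-1}]\le 2/H$ and yields the same final probability $1/2$.
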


\begin{proof}



In this proof, we say that state $s$ is \emph{annotated at iteration $i$}, if it has been annotated by the expert before iteration $i$ (excluding iteration $i$). 
Since each iteration of \StateDagger samples one state uniformly from the current policy's rollout for annotation, we denote
 the indicator of 
 expert annotating
 an unannotated state from 
 \(\mathbf{E}\) at iteration $i$ as \(X_i \in \{0, 1\}\). With this notation, we have the total number of annotated states by the end of iteration $\Non$ as
\[
|\mathbf{E}_{\mathrm{annotated}}| = \sum_{i=1}^{\Non} X_i.
\]

Let \(\mathcal{F}_{j}\) be 
the sigma-algebra generated by all information seen by \StateDagger up to iteration $j$.
We now upper bound the expected value of \(X_i\) conditioned on $\Fcal_{i-1}$ for each $i$. 

Denote by $Y_i$ the number of unannotated states in $\mathbf{E}$ visited by round $i$'s rollout (by policy $\pi^i$).
We claim that conditioned on $\Fcal_{i-1}$, $Y_i$ is stochastically dominated by a geometric distribution with parameter $\frac{A-1}{A}$. Indeed, whenever an unannotated state in $\mathbf{E}$ is encountered when rolling out $\pi^i$, the probability that the agent takes a wrong action is $\frac{A-1}{A}$; if so, the agent gets absorbed to $\mathbf{b}$ immediately, and thus never sees any new unannotated states in this episode. In summary, 
\[
\EE[Y_i \mid \Fcal_{i-1}] \leq \EE_{Z \sim \mathrm{Geometric}(\frac{A-1}{A})}[Z] = \frac{A}{A-1} \leq 2.
\]

Since the state sampled for expert annotation is uniformly at random from the trajectory, conditioned on $Y_i$,
the probability that it lands on a an unannotated state in $\mathbf{E}$ is at most 
$\frac{Y_i}{H}$. Hence,
\[
\mathbb{E}[X_i \mid \mathcal{F}_{i-1}] 
=
\EE\sbr{ \mathbb{E}[X_i \mid Y_i, \mathcal{F}_{i-1}] \mid \Fcal_{i-1} }
= 
\EE\sbr{ \frac{Y_i}{H} \mid \Fcal_{i-1} } 
\leq
\frac{2}{H}.
\]
By linearity of expectation:
\[
\mathbb{E}[ |\mathbf{E}_{\mathrm{annotated}} | ] = \sum_{i=1}^{\Non} \mathbb{E}[X_i] \leq \frac{2 \Non}{H}.
\]

Applying Markov's inequality:
\[
\Pr(|\mathbf{E}_{\mathrm{annotated}}| \geq N_0 / 3) \leq \frac{\mathbb{E}[X]}{N_0 / 3} \leq \frac{6 \Non}{H N_0}
\leq \frac12,
\]
where the last inequality is by our assumption that $\Non \leq \frac{H N_0}{12}$.
This completes the proof.
\end{proof}


\subsection{Upper Bound for \WarmStartDagger}

\begin{lemma}[Hybrid IL achieves expert performance under $R_1$]
\label{lem:warmdagger_success_r1}
Consider the MDP $\Mcal$ and expert policy $\pie$ as above, then, with probability at least $1/2$, \WarmStartDagger outputs a policy that achieves expert performance using $\Noff = O\left( \frac{N_0}{H} \log(N_0) \right)$ offline expert trajectories and $\Non \leq 3$ interactive annotations.
\end{lemma}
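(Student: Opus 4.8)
The idea is to exploit the structural asymmetry of $\Mcal$: the set $\mathbf{E}$ lies entirely on the expert's visitation distribution and is therefore cheap to cover with offline data, whereas the only other information needed for optimality is the expert action at the single state $\mathbf{b'}$, which the expert never visits but which is easy to reach on-policy once $\mathbf{E}$ is covered. The plan is to proceed in four steps: (i) use the $\Noff$ offline trajectories to cover all of $\mathbf{E}$ via a coupon-collector argument; (ii) show that, conditioned on this, each interactive round of \WarmStartDagger rolls out a policy that is driven off the expert support into $\mathbf{b'}$ and lingers there for a constant fraction of the episode, so the single sampled state $s^n\sim d^{\pi^n}$ equals $\mathbf{b'}$ with constant probability; (iii) conclude that within $\Non\le 3$ rounds $\mathbf{b'}$ gets annotated with constant probability; and (iv) verify that a policy consistent with $\pie$ on $\mathbf{E}\cup\{\mathbf{b'}\}$ has return exactly $H=J(\pie)$ under $R_1$.

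\emph{Step (i).} By Observation~\ref{obs:visitation_distribution}, every step of an offline expert rollout lies in $\mathbf{E}$ with probability $\tfrac{1}{1+\beta}\ge\tfrac12$, and each $\mathbf{E}$-visit is a fresh $\mathrm{Unif}(\mathbf{E})$ draw (this holds for $\rho$ restricted to $\mathbf{E}$, for $\mathbf{E}\!\to\!\mathbf{E}$ transitions, and for $\mathbf{E'}\!\to\!\mathbf{E}$ transitions). Coupling the number of such iid draws across the $\Noff$ length-$H$ trajectories (which concentrates around $\Omega(\Noff H)$) with the coverage event and applying a standard coupon-collector tail bound, all $N_0$ states of $\mathbf{E}$ are hit with probability $\ge\tfrac9{10}$ provided $\Noff=\Theta\!\big(\tfrac{N_0}{H}\log N_0\big)$ with a large enough constant; the hypothesis $H\ge\tfrac54\log(10N_0)$ keeps this quantity in the stated form. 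Condition on this event $\mathcal{G}$: every policy $\pi^n$ produced by \WarmStartDagger then agrees with $\pie$ on $\mathbf{E}$, so no rollout ever enters the dead state $\mathbf{b}$.

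\emph{Steps (ii)--(iii).} Fix a round and condition on $\mathbf{b'}$ not yet annotated. Since $\pi^n$ is correct on $\mathbf{E}$, starting from $\mathbf{E}$ (probability $\tfrac{1}{1+\beta}$) the rollout stays in $\mathbf{E}$ for a $\mathrm{Geometric}(\beta)$ number of steps and then enters a uniformly random state of $\mathbf{E'}$; by the geometric tail (as in Lemma~\ref{lem:S1-visit}) this occurs by step $H/4$ with constant probability. The offline data annotates at most $O(\Noff)=O(\tfrac{N_0}{H}\log N_0)$ states of $\mathbf{E'}$, a small fraction of $N_1\ge160N_0$, so that first $\mathbf{E'}$ state is unannotated with constant probability; then a Lemma~\ref{lem:unannotated-wrong-general}-style union bound over the horizon (using $A\ge10H$) shows $\pi^n$ takes a wrong action there, moves to $\mathbf{b'}$, and remains in $\mathbf{b'}$ for every remaining step with probability $\ge 1-H/A\ge\tfrac9{10}$. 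On this event $\mathbf{b'}$ occupies at least $\tfrac34$ of the episode, so $d^{\pi^n}(\mathbf{b'})\ge\tfrac34$ and the sampled state $s^n$ equals $\mathbf{b'}$ with probability $\ge\tfrac34$; the round then queries $\Ostate(\mathbf{b'})$ and appends $\mathbf{b'}$ to $\Scal_{\text{annotated}}$. Multiplying the constant factors yields an absolute constant $c>0$ lower bound on a round's success probability, so after $\le 3$ rounds $\mathbf{b'}$ is annotated with probability $\ge 1-(1-c)^3$; intersecting with $\mathcal{G}$ leaves ample slack to exceed $\tfrac12$, and rounds occurring after $\mathbf{b'}$ is annotated only resample already-annotated states of $\mathbf{E}$ and are harmless.

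\emph{Step (iv), and the main obstacle.} On the event $\Scal_{\text{annotated}}\supseteq\mathbf{E}\cup\{\mathbf{b'}\}$, the learned policy plays $\pie$ on $\mathbf{E}$ and on $\mathbf{b'}$; any rollout then stays inside $\mathbf{E}\cup\mathbf{E'}\cup\{\mathbf{b'}\}$ (from $\mathbf{E}$ the correct action stays in $\mathbf{E}\cup\mathbf{E'}$; from $\mathbf{E'}$ \emph{any} action leads to $\mathbf{E}$ or $\mathbf{b'}$; from $\mathbf{b'}$ the correct action returns to $\mathbf{E}$; and $\mathbf{b}$ is unreachable), and by the definition of $R_1$ it collects reward $1$ at every step, so the return is exactly $H=J(\pie)$. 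I expect the bulk of the work to be in Steps (ii)--(iii): turning the geometric hitting time of $\mathbf{E'}$, the horizon-length union bounds for the rare ``accidentally correct'' events at the unannotated $\mathbf{E'}$ state and at $\mathbf{b'}$, and the fraction-of-episode-in-$\mathbf{b'}$ estimate into one clean constant per-round success probability, and then checking that all the constants ($\beta=8/(H-8)$, $A\ge10H$, $H\ge50$, $N_1\ge160N_0$, coupon collector) combine to beat $\tfrac12$. One formal point to settle is that the algorithm returns a first-step mixture of the round policies; the statement is cleanest when read as a guarantee for the policy consistent with the final $\Scal_{\text{annotated}}$ (equivalently, the round policies from the annotation round onward are optimal), and a minor modification recovers it for the mixture itself.
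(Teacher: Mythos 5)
Your proposal is correct and follows essentially the same route as the paper's proof: cover all of $\mathbf{E}$ offline via a coupon-collector/coverage bound, show each interactive rollout reaches an unannotated state of $\mathbf{E'}$, errs, and lingers in $\mathbf{b'}$ so that the uniformly sampled state annotates $\mathbf{b'}$ with constant per-round probability, then verify the resulting policy collects reward $1$ at every step under $R_1$. The only detail to make explicit is that the "small fraction of $\mathbf{E'}$ annotated" claim is an expectation bound that must be converted to a high-probability event via Markov's inequality (the paper's event $F_4$), and your closing remark about reading the output as the policy consistent with the final annotated set matches how the paper itself treats the returned policy.
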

\begin{proof}
We divide the proof into four steps. Throughout, we denote by $\mathbf{E}_{\mathrm{annotated}}$ and
$\mathbf{E}_{\mathrm{annotated}}'$ the subsets of $\mathbf{E}$ and
$\mathbf{E'}$, respectively, that are annotated by the $\Noff$ offline expert demonstration trajectories.

\paragraph{First}, we state a high probability event for the $\Noff$ offline expert demonstration trajectories to provide annotations on all states in $\mathbf{E}$. 
Define event
\[
F_3 := \{ \mathbf{E}_{\mathrm{annotated}} = \mathbf{E} \}.
\]
By Lemma~\ref{lem:E-expert-coverage-bound}, the choice
\[
\Noff = \frac{N_0}{(1-\beta)H} \log\!\Big(10 N_0\Big)
\]
ensures that $\Pr(F_3) \ge 0.9$.  On event $F_3$, the learner takes the correct
action on every state in $\mathbf{E}$.


\paragraph{Next}, we define the event under which only a small fraction of $\mathbf{E'}$
is covered by the expert.  
With our setting of MDP parameters, the number of offline trajectories satisfies
\[
\Noff = \frac{N_0}{(1-\beta)H} \log\!\Big(10 N_0\Big) \leq \frac{4}{5(1-\beta)}  N_0
=
\frac{4(H-8)}{5(H-16)}  N_0
\leq N_0 < \frac{N_1}{160},
\]
where in the first inequality we apply $H \ge \frac{5}{4}\log(10N_0)$; in the second inequality, we  apply $\frac{H-16}{H-8} < \frac{4}{5}$ when $H \geq 50$; and we apply $N_0 < \frac{N_1}{160}$ in the third inequality by the MDP setting. Thus, by Lemma~\ref{lem:S1-limit},
\[
\mathbb{E}\!\left[ |\mathbf{E}_{\mathrm{annotated}}'| \right] \le \frac{N_1}{20}.
\]
Applying Markov’s inequality at threshold $\frac{N_1}{4}-2$ gives
\[
\Pr\Big(|\mathbf{E}_{\mathrm{annotated}}'| > \frac{N_1}{4}-2\Big)
\le
\Pr\Big(|\mathbf{E}_{\mathrm{annotated}}'| > \frac{N_1}{4}-\frac{2N_1}{500}\Big)
\le \frac{25}{121}
< 0.21.
\]
Define event
\[
F_4 := \left\{ |\mathbf{E}_{\mathrm{annotated}}' | \le \frac{N_1}{4}-2 \right\}.
\]
Then $\Pr(F_4) \ge 0.79$.

\paragraph{Third}, we show that under events $F_3$ and $F_4$, the interactive phase of
\WarmStartDagger gets $\mathbf{b'}$ annotated with probability greater than 0.4 in each of the first three rollouts, such that
$\mathbf{b'}$ is annotated within three iterations with good probability.

First, we observe that by union bound,  
\[
\Pr(F_3, F_4) \geq 
\Pr(F_3) + \Pr(F_4) - 1
\geq 0.69. 
\]


We henceforth condition on $F_3 \cap F_4$ happening. 
By the definition of $F_3$, the policies $\pi^n$ produced by \WarmStartDagger acts optimally on all states in $\mathbf{E}$.

Denote by $\mathbf{E}_{\mathrm{annotated}}'^{,n}$ the set of annotated states in $\mathbf{E}$ after \WarmStartDagger's iteration $n$; with this notation,  $\mathbf{E}_{\mathrm{annotated}}'^{,0} = \mathbf{E}_{\mathrm{annotated}}'$, and $|\mathbf{E}_{\mathrm{annotated}}'^{,n+1}| \leq |\mathbf{E}_{\mathrm{annotated}}'^{,n}| + 1$ for all $n$. 
Thus, by the definition of $F_4$, for every $n = 0,1,2$, $\mathbf{E}_{\mathrm{annotated}}'^{,n+1} \leq \frac{N_1}{4} - 2 + 2 = \frac{N_1}{4}$.
In other words, 
at least $\frac{3}{4}$ of the states in $\mathbf{E'}$
are unannotated for each of the first three iterations. 


During the rollout of a $\pi^n$, 
let $\tau$ be the first step such that $s_\tau \in \mathbf{E'}$.
By Lemma~\ref{lem:S1-visit}, for every $n \in \cbr{1,2,3}$, 
\[
\Pr_{\Mcal, \pi^n}(\tau \le H/5) \ge 0.79.
\]

Since by the definition of $\Mcal$,  $s_\tau$ is drawn uniformly from $\mathbf{E'}$ conditioned on $\tau \leq H/5$, 
\[
\Pr_{\Mcal, \pi^n}(s_\tau \notin \mathbf{E}_{\mathrm{annotated}}'^{,n} \mid \tau \leq H/5) \ge 0.75.
\]
Therefore,
\begin{equation}
\Pr_{\Mcal, \pi^n}(s_\tau \notin \mathbf{E}_{\mathrm{annotated}}'^{,n}, \tau \leq H/5)
\geq 
0.79 \times 0.75 \geq 0.59
\label{eqn:see-unannotated-eprime}
\end{equation}

With $A \geq 10H$, by Lemma~\ref{lem:unannotated-wrong-general},
\begin{equation}
\Pr_{\Mcal, \pi^n}\big( \forall h \in [H]: 
s_h \in (\mathbf{E}' \setminus \mathbf{E}_{\mathrm{annotated}}'^{,n}) \cup \mathbf{B'} \implies
\ a_h \neq \pi^E(s_h) \big)
\;\ge\; 0.9.
\label{eqn:unannotated-wrong-wsd}
\end{equation}
Thus, when the events in Eqs.~\eqref{eqn:see-unannotated-eprime} and ~\eqref{eqn:unannotated-wrong-wsd} happen simultaneously (which happens with probability $\geq 0.59+0.9-1 \geq 0.49$ by union bound), 
the trajectory rolled out by $\pi^n$ transitions to $\mathbf{b'}$ at
step $\tau+1$ and stays there till the end of episode, accumulating no less than $0.8H$ $\mathbf{b'}$ states. 
Since \WarmStartDagger samples one state uniformly from each rollout for annotation, this gives 0.8 probability of $\mathbf{b'}$ being annotated with probability $\geq 0.49$.

Denote by $\Fcal_n$ the sigma-algebra generated by observations up to iteration $n$ of \WarmStartDagger. 
Our reasoning above implies that for all history $\Fcal_{n-1}$ such that $F_3 \cap F_4$ happens, for all $n \in \cbr{1,2,3}$, 
\[
\Pr( \mathbf{b'} \text{ is annotated at iteration $n$} \mid \Fcal_{n-1} )
\geq 0.49 \times 0.8 \geq 0.39.
\]

Define event
\[
F_5 := \{\mathbf{b'} \text{ is annotated within 3 iterations}\}.
\]



Hence
\[
\Pr(F_5 \mid F_3,F_4)
\;\ge\;
1 - (1 - 0.39)^3
= 1 - 0.61^3
> 0.77.
\]

Combining the three events, we have
\[
\Pr(F_3 \cap F_4 \cap F_5)
\;\ge\;
\Pr(F_3 \cap F_4)\,\Pr(F_5 \mid F_3,F_4)
\;\ge\;
0.69 \times 0.77
> 0.5.
\]

\paragraph{Finally}, on $F_3 \cap F_4 \cap F_5$, all states in $\mathbf{E}$ and the reset state
$\mathbf{b'}$ are annotated, under reward function $R_1$, the learner receives reward 1 at any step if it is in $\mathbf{E}$ or $\mathbf{E'}$ or it is in $\mathbf{b'}$ and takes the recovery action same as the expert. Since learned policy now behaves identical to $\pie$ on all states in $\mathbf{E}$ and can successfully recover in $\mathbf{B'}$, its total return is the same as the expert, which concludes the proof.
\end{proof}

\begin{lemma}[Coverage of states in $\mathbf{E}$ with $N_\mathrm{off}$ trajectories]
\label{lem:E-expert-coverage-bound}
Consider the MDP $\Mcal$. 
For $\Noff$ trajectories of length $H$ rolled out by expert policy $\pie$,
all $N_0$ states in $\mathbf{E}$ are annotated with probability $\geq 1-\delta$ if:
    \[
\Noff \;\ge\; \frac{N_0}{(1-\beta)H} \log\!\Big(\frac{N_0}{\delta}\Big).
\]
\end{lemma}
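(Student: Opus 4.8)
The plan is to run a coupon-collector argument: lower bound the per-step probability that an expert rollout lands on a fixed target state $s\in\mathbf{E}$, upper bound the probability that $s$ is missed by a single trajectory, then take a union bound over the $\Noff$ trajectories and over the $N_0$ states of $\mathbf{E}$.

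First I would observe that under $\pie$ the trajectory stays in $\mathbf{E}\cup\mathbf{E'}$ for all $H$ steps: $\rho$ is supported on $\mathbf{E}\cup\mathbf{E'}$, from any $s'\in\mathbf{E}$ the expert action moves to $\mathbf{E}$ (w.p.\ $1-\beta$) or $\mathbf{E'}$ (w.p.\ $\beta$), and from any $s'\in\mathbf{E'}$ the expert action moves to a uniformly random state of $\mathbf{E}$. Consequently, for any fixed $s\in\mathbf{E}$ and any value of the previous state $s'$ in the expert's reachable set, the one-step probability of transitioning into $s$ is at least $\tfrac{1-\beta}{N_0}$ (it equals $\tfrac{1-\beta}{N_0}$ from $\mathbf{E}$ and $\tfrac{1}{N_0}\ge\tfrac{1-\beta}{N_0}$ from $\mathbf{E'}$); likewise $\rho(s)=\tfrac{1}{N_0(1+\beta)}\ge\tfrac{1-\beta}{N_0}$ since $(1-\beta)(1+\beta)\le 1$. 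This is essentially the transition-level content behind Observation~\ref{obs:visitation_distribution}.

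Next I would fix $s\in\mathbf{E}$ and bound the probability it is never visited in a single length-$H$ rollout of $\pie$. Writing $A_h=\{s_h\ne s\}$ and applying the chain rule, $\Pr(\cap_{h=1}^H A_h)=\Pr(A_1)\prod_{h=2}^H\Pr(A_h\mid A_1,\dots,A_{h-1})$; by the Markov property $\Pr(s_h=s\mid s_{h-1})\ge\tfrac{1-\beta}{N_0}$ for every admissible value of $s_{h-1}$, hence each conditional factor is at most $1-\tfrac{1-\beta}{N_0}$ (and so is $\Pr(A_1)=1-\rho(s)$). Therefore $\Pr(s\text{ not visited in one trajectory})\le\bigl(1-\tfrac{1-\beta}{N_0}\bigr)^{H}\le\exp\!\bigl(-\tfrac{(1-\beta)H}{N_0}\bigr)$. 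By independence of the $\Noff$ trajectories, $\Pr(s\text{ never annotated})\le\exp\!\bigl(-\tfrac{(1-\beta)H\,\Noff}{N_0}\bigr)$, and a union bound over the $N_0$ states of $\mathbf{E}$ gives failure probability at most $N_0\exp\!\bigl(-\tfrac{(1-\beta)H\,\Noff}{N_0}\bigr)$. Setting this $\le\delta$ and solving for $\Noff$ yields exactly $\Noff\ge\tfrac{N_0}{(1-\beta)H}\log(N_0/\delta)$.

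The only real subtlety is that the events $\{s_h\ne s\}$ are not independent across steps within a trajectory, so one cannot naively multiply marginals; the resolution is to combine the chain rule with the fact that the conditional one-step hitting probability is bounded below \emph{uniformly over the history}, which holds here precisely because the expert's reachable set is confined to $\mathbf{E}\cup\mathbf{E'}$ and both $\rho$ and the transition kernel place mass at least $\tfrac{1-\beta}{N_0}$ on every state of $\mathbf{E}$. Everything else is standard coupon-collector bookkeeping.
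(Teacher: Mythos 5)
Your proposal is correct and follows essentially the same route as the paper's proof: a chain-rule/martingale argument showing the conditional one-step probability of hitting any fixed $s\in\mathbf{E}$ is at least $\tfrac{1-\beta}{N_0}$ uniformly over the history (including the initial draw from $\rho$), followed by exponentiation and a union bound over the $N_0$ states. The only cosmetic difference is that the paper concatenates all $\Noff$ trajectories into one length-$T=H\Noff$ sequence and runs the chain rule once, whereas you bound a single trajectory and then invoke independence across trajectories; both yield the identical bound $\bigl(1-\tfrac{1-\beta}{N_0}\bigr)^{H\Noff}$.
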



\begin{proof}


Recall that we denote by $\mathbf{E}_{\mathrm{annotated}}$ the set of states in $\mathbf{E}$ visited and annotated by offline expert demonstrations. 
Fix any $s \in \mathbf{E}$ and concatenate all expert trajectories into a single sequence
$\{s_t\}_{t=1}^{T}$ of length $T := H \Noff$, ordered from the first state of the
first trajectory to the last state of the last trajectory. Let $\Fcal_t$ be the sigma-algebra generated by all states $s_1,\dots,s_t$, and define the set of initial indices
\[
I_0 := \{\,1,\; H+1,\; 2H+1,\; \dots,\; (\Noff-1)H+1\,\}.
\]

For any $t+1 \in I_0$, $s_{t+1}$ is drawn from $\rho$, so
\[
\Pr(s_{t+1}=s \mid \Fcal_t) = \rho(s) = \frac{1}{(1+\beta)N_0}.
\]
For $t+1 \notin I_0$, we have $s_t \in \mathbf{E}\cup\mathbf{E'}$ and under $\pi^E$:
\[
\Pr(s_{t+1}=s \mid s_t) =
\begin{cases}
\dfrac{1-\beta}{N_0}, & s_t \in \mathbf{E},\\[2pt]
\dfrac{1}{N_0}, & s_t \in \mathbf{E'}.
\end{cases}
\]
Therefore, in all cases,
\[
\Pr(s_{t+1}=s \mid \Fcal_t) \;\ge\; \frac{1-\beta}{N_0},
\qquad
\Pr(s_{t+1}\neq s \mid \Fcal_t) \;\le\; 1 - \frac{1-\beta}{N_0}.
\]

Let $A_t := \{s_1\neq s,\dots,s_t \neq s\}$. Then
\[
\Pr(A_{t+1})
\le \Big(1-\frac{1-\beta}{N_0}\Big)\Pr(A_t),
\]
Thus, 
\[
\Pr\big(s \notin \mathbf{E}_{\mathrm{annotated}}\big)
= 
\Pr(A_T)
\le \Big(1-\frac{1-\beta}{N_0}\Big)^T
\le \exp\!\Big(-\frac{(1-\beta)T}{N_0}\Big)
= \exp\!\Big(-\frac{(1-\beta)H \Noff}{N_0}\Big).
\]

By a union bound over all $s \in \mathbf{E}$,
\[
\Pr(\exists s\in \mathbf{E}, s \notin \mathbf{E}_{\mathrm{annotated}})
\le N_0 \exp\!\Big(-\frac{(1-\beta) H \Noff}{N_0}\Big),
\]
which is at most $\delta$ whenever
\[
\Noff \;\ge\; \frac{N_0}{(1-\beta)H} \log\!\Big(\frac{N_0}{\delta}\Big).
\qedhere
\]

\end{proof}

\section{Additional Guarantees for a Trajectory-wise
DAgger Variant without Recoverability Assumption}
\label{sec:decoupled_h_dist}

In this section, we revisit and conduct a refined analysis of another variant of DAgger with trajectory-wise annotations. We show that without the recoverability 
assumption, an interactive IL algorithm has sample complexity no worse than that of behavior cloning. This result complements~\cite{foster2024behavior} that analyzes a different version of trajectory-wise DAgger, which they proved to have a worse sample complexity guarantee than behavior cloning.

\subsection{Additional Notations and Useful Distance Measures}
\label{sec:distance_measures}

Recall that we have defined first-step mixing of policies in Section~\ref{sec:first-step-mixing-causal}.
We instantiate it to define a mixture of policies in $\Bcal$, which induces a useful policy class below:

\begin{definition}[First-step mixing of $\Bcal$]
    Define $\Pi_{\Bcal} := \cbr{ \pi_u  : u \in \Delta(\Bcal) }$,
where policy $\pi_u$ is executed in an an episode of an MDP $\Mcal$ by: draw $\pi \sim u$ at the beginning of the episode, and execute policy $\pi$ throughout the episode. 
   \label{def:mixed_class_first_step} 
\end{definition}

In the following, we present another two useful distance measures for a pair of policies.

\begin{definition}[Trajectory-wise $L_\infty$-semi-metric~\cite{foster2024behavior}]
    For a pair of Markovian policies $\pi$ and $\pi'$, define their trajectory-wise $L_\infty$-semi-metric as
    \[
\rho(\pi\parallel \pi') := \mathbb{E}^{\pi} \mathbb{E}_{a'_{1:H} \sim \pi'(\cdot \| s_{1:H})} \left[ \mathbb{I} \left\{ \exists h : a_h \neq a'_h \right\} \right].
\]
\label{def:traj_l_inf}
\end{definition}
$\rho( \pi\parallel \pi')$ is the probability of any action taken by $\pi'$ deviating from actions in trajectories induced by $\pi$, which is symmetric~\cite{foster2024behavior}. 
A bound on $\rho(\pi\parallel \pie)$ leads to straightforward performance difference guarantee: $J(\pie) - J(\pi) \leq R\cdot\rho(\pi\parallel \pie)$~\cite{foster2024behavior} (Lemma~\ref{lem:foster_reduction}).

Recall that we have defined causually conditional probabilities in Section~\ref{sec:first-step-mixing-causal}. Built upon it, we introduce the following definition:

\begin{definition}[Decoupled Hellinger distance]
For a pair of Markovian policies $\pi$ and $\pi'$, define their decoupled Hellinger distance as
$ \mathbb{E}^{\pi} \sbr{D^2_\text{H}(\pi(\cdot \| s_{1:H}),\pi'(\cdot \| s_{1:H}))}$.
\label{def:traj_h_dist}
\end{definition}
Similarly, 
$ \mathbb{E}^{\pi} \sbr{D^2_\text{H}(\pi(\cdot \| s_{1:H}),\pi'(\cdot \| s_{1:H}))}$ denotes the expected Hellinger distance between the causal distribution of actions $\pi(\cdot \| s_{1:H})$ and $\pi'(\cdot \| s_{1:H})$ on state sequence $s_{1:H}$ visited by $\pi$. This allows decoupled analysis for state and action sequences, which is useful for 
our analysis below.

\subsection{Interactive IL Matches Offline IL on Trajectory-wise Annotation Sample Complexity}
We present $\TrajDagger$ (Algorithm~\ref{alg:dagger_step_one}), another variant of DAgger   that operates in the trajectory-wise sampling model, 
and provide its sample complexity bounds.

\begin{algorithm}
\caption{$\TrajDagger$: DAgger with trajectory-wise annotation oracle}
\label{alg:dagger_step_one}
\begin{algorithmic}[1]
\STATE \textbf{Input:} MDP $\Mcal$, deterministic expert $\pie$, stationary policy class $\Bcal$, online learning oracle $\alg$ with decision space $\Pi_\Bcal$.
\FOR{$n = 1, \dots, N$}
    \STATE Query $\alg$ and receive $\pi^n \in \Pi_\Bcal$. 
    
    \STATE Roll out $\pi^n$ and sample $s^n_{1:H}$ following $\PP^{\pi^n}$. Query $\Otraj$ for $a_{1:H}^{*,n}=\pie(s^n_{1:H})$.
    \STATE Update $\alg$ with loss function
    \begin{equation}
    \ell^n(\pi)
    := \log\rbr{\frac1{
    \pi(a_{1:H}^{*,n} \parallel s_{1:H}^n)}}
    .
    \label{eqn:ell-n-u-2}
    \end{equation}
    
    
\ENDFOR
\STATE Output $\hat{\pi}$, the first-step uniform mixture of policies in $\cbr{\pi^1, \ldots, \pi^n}$.
\end{algorithmic}
\end{algorithm}

Algorithm~\ref{alg:dagger_step_one} uses first-step mixing policies $\pi_u \in \Pi_{\Bcal}$ (recall Definition~\ref{def:mixed_class_first_step}). At round $n$, it rolls out $\pi^n = \pi_{u^n}$ obtained from an online learning oracle $\alg$ and samples a full state sequence $s^n_{1:H}$. 
Similar to Algorithm~\ref{alg:dagger_one_sample}, Algorithm~\ref{alg:dagger_step_one} also requires $\alg$ to have decision space 
$\Pi_\Bcal$ (cf. $\bar{\Pi}_\Bcal$ in Algorithm~\ref{alg:dagger_one_sample}).
It then requests expert's trajectory-wise annotation $a^{*,n}_{1:H}$ and updates $\alg$ by $\ell^n(\pi)$ (Eq.~\eqref{eqn:ell-n-u-2}). 
At the end of iteration $N$, the uniform first-step mixing of $\{\pi^n\}_{n=1}^N$ is returned, which is equivalent to 
%
returning $\pi_{\hat{u}}$, where $\hat{u} := \frac{1}{N}\sum_{n=1}^N u^n$.
We provide the following performance guarantee of Algorithm~\ref{alg:dagger_step_one}:


\begin{theorem}
\label{thm:log_loss_dagger_step_1}
If Algorithm~\ref{alg:dagger_step_one} is run with a deterministic expert policy $\pi^E$, a policy class $\Bcal$ such that realizability holds, and the online learning oracle $\alg$ set as the exponential weight algorithm, then it returns $\hat{\pi}$ such that, with probability at least $1-\delta$,
\[
J(\pie) - J(\hat{\pi}) 
\leq 
2R \frac{ \log(B) + 2\log(1/\delta)}{N}.
\]
\end{theorem}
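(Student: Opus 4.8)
The plan is to mirror the proof of Theorem~\ref{thm:log_loss_dagger_appendix} but working at the level of whole trajectories rather than single states, so that the recoverability constant $\mu$ is never needed and instead $R$ enters through the $L_\infty$-semi-metric reduction (Lemma~\ref{lem:foster_reduction}). The key quantity to control is the trajectory-wise, on-policy estimation error
\[
\Ontraj_N := \sum_{n=1}^{N} \EE^{\pi^n}\sbr{ D^2_\text{H}\rbr{ \pi^n(\cdot \parallel s_{1:H}), \pie(\cdot \parallel s_{1:H}) } }.
\]
First I would establish the online-learning regret bound: since $\pi^n = \pi_{u^n}$ is a first-step mixing policy, the loss $\ell^n(\pi) = \log(1/\pi_{u}(a^{*,n}_{1:H} \parallel s^n_{1:H})) = \log\rbr{ 1/\sum_{\pi\in\Bcal} u(\pi)\,\pi(a^{*,n}_{1:H}\parallel s^n_{1:H}) }$ is $1$-exp-concave in $u$, so the exponential weights algorithm (Proposition~\ref{prop:log_loss_base}) run over $\Delta(\Bcal)$ with benchmark $\cbr{e_\pi:\pi\in\Bcal}$ gives $\sum_{n=1}^N \ell^n(\pi^n) \le \sum_{n=1}^N \ell^n(\pie) + \log B = \log B$, where the last equality uses that $\pie$ is deterministic and realizable, hence $\pie(a^{*,n}_{1:H}\parallel s^n_{1:H}) = 1$ along the annotated trajectories.

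Next I would invoke the online-to-batch / martingale MLE conversion (Lemma~\ref{lem:a14}) with the per-round observation $o^n = (s^n_{1:H}, a^n_{1:H}, a^{*,n}_{1:H})$, treating the ``covariate'' as the state sequence $s^n_{1:H}\sim\PP^{\pi^n}$ and the ``label'' as $a^{*,n}_{1:H} = \pie(s^n_{1:H})$, and $g_* = \pie$. The cumulative log-loss regret bound feeds in to yield, with probability at least $1-\delta$,
\[
\Ontraj_N \;\le\; \log(B) + 2\log(1/\delta).
\]
This is the exact analogue of the bound in Theorem~\ref{thm:log_loss_dagger_appendix}, except that the Hellinger distances are now between causally-conditioned action-sequence distributions rather than single-step action distributions; the decoupled Hellinger distance of Definition~\ref{def:traj_h_dist} is tailored precisely so that the state sequence and action sequence can be handled separately, which is what makes this MLE step go through.

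Finally I would translate the estimation-error bound into a suboptimality bound on $\hat\pi$. Because $\hat\pi$ is the first-step uniform mixture of $\pi^1,\dots,\pi^N$, convexity of $D^2_\text{H}$ in its arguments (and linearity of the trajectory law of a first-step mixture in the mixing weights) gives $\EE^{\hat\pi}\sbr{ D^2_\text{H}(\hat\pi(\cdot\parallel s_{1:H}),\pie(\cdot\parallel s_{1:H})) } \le \frac1N \Ontraj_N$ — here one must be a little careful, since the outer expectation is also over $\hat\pi$'s state distribution, but the first-step mixture structure means $\PP^{\hat\pi} = \frac1N\sum_n \PP^{\pi^n}$, so the inequality is clean. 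Then I would use the standard chain $J(\pie) - J(\hat\pi) \le R\cdot\rho(\hat\pi\parallel\pie)$ (Lemma~\ref{lem:foster_reduction}), together with the relation between the $L_\infty$-semi-metric $\rho$ and the decoupled Hellinger distance — namely $\rho(\pi\parallel\pie) \le 2\, \EE^\pi[D_\text{H}(\pi(\cdot\parallel s_{1:H}),\pie(\cdot\parallel s_{1:H}))]$-type bound, or more directly a TV-to-Hellinger comparison $\tfrac12\rho \le \tv \le D_\text{H}\cdot\sqrt{2 - D_\text{H}^2}$ applied pointwise in $s_{1:H}$ and then taking expectations with Jensen to pass from $\EE[D_\text{H}]$ to $\sqrt{\EE[D_\text{H}^2]}$ — wait, that would give a $\sqrt{\,\cdot\,}$ rate; the right move is instead the sharper pointmass bound (Lemma~\ref{lem:hellinger-pointmass}) exploiting that $\pie(\cdot\parallel s_{1:H})$ is a point mass, giving $\tv(\pi(\cdot\parallel s_{1:H}),\pie(\cdot\parallel s_{1:H})) \le D^2_\text{H}(\pi(\cdot\parallel s_{1:H}),\pie(\cdot\parallel s_{1:H}))$ up to a constant, which preserves the $1/N$ rate. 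Combining, $J(\pie)-J(\hat\pi) \le 2R\cdot\frac1N\Ontraj_N \le 2R\cdot\frac{\log B + 2\log(1/\delta)}{N}$. The main obstacle I anticipate is the last step: getting the $\rho$-to-Hellinger comparison to produce a \emph{linear} (not square-root) dependence on $\Ontraj_N/N$, which crucially relies on $\pie$ being a deterministic expert so that the relevant distributions are point masses and the quadratic-in-$\tv$ lower bound on squared Hellinger kicks in; without that the theorem would degrade to a $1/\sqrt N$ rate.
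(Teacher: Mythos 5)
Your overall skeleton matches the paper's proof of Theorem~\ref{thm:log_loss_dagger_step_1_appendix}: bound the online log-loss regret via $1$-exp-concavity and exponential weights, convert it to a high-probability bound on $\Ontraj_N$ via Lemma~\ref{lem:a14}, and then translate $\Ontraj_N/N$ into a bound on $J(\pie)-J(\hat{\pi})$ through $\rho(\cdot\parallel\cdot)$ and Lemma~\ref{lem:foster_reduction}. The first two steps are essentially identical to the paper's.

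The gap is in the final translation, which is where nearly all of the technical weight of this theorem sits. Applying Lemma~\ref{lem:hellinger-pointmass} pointwise in $s_{1:H}$ gives
\[
1 - \pi\bigl(\pie(s_{1:H})\parallel s_{1:H}\bigr) \;\le\; D^2_\text{H}\bigl(\pi(\cdot\parallel s_{1:H}),\pie(\cdot\parallel s_{1:H})\bigr),
\]
and taking $\EE^{\pi}$ bounds only the \emph{decoupled} disagreement probability, in which a fresh action sequence is resampled from $\pi(\cdot\parallel s_{1:H})$ independently of the actions that actually generated $s_{1:H}$. That is not $\rho(\pi\parallel\pie)$: in Definition~\ref{def:traj_l_inf} the indicator is evaluated on the very actions that drove the transitions, so for a first-step mixture $\pi_u$ of deterministic policies one has $\rho(\pi_u\parallel\pie)=\sum_{\nu}u(\nu)F(\nu;\nu;\pie)$ (diagonal terms only), while the decoupled quantity equals $\sum_{\nu,\nu'}u(\nu)u(\nu')F(\nu;\nu';\pie)$ (all cross terms). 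Showing the former is at most twice the latter is exactly the content of the Symmetric Evaluation Lemma (Lemma~\ref{lem:decoupled_foundamental}), which requires partitioning state sequences by the first step at which $\nu$ or $\nu'$ deviates from $\pie$ and using that two deterministic policies induce identical state-sequence probabilities up to that point; together with the reduction of general Markovian policies to first-step mixtures of deterministic ones (Claim~\ref{claim:first_step_mixture_equivalence}), this is the missing ingredient in your proposal. A second, smaller issue: your claim that convexity yields $\EE^{\hat\pi}[D^2_\text{H}(\hat\pi,\pie)]\le\frac1N\Ontraj_N$ does not go through cleanly, because convexity of $D^2_\text{H}$ in its first argument combined with $\PP^{\hat\pi}=\frac1N\sum_n\PP^{\pi^n}$ produces the double sum $\frac1{N^2}\sum_{n,m}\EE^{\pi^n}[D^2_\text{H}(\pi^m(\cdot\parallel s_{1:H}),\pie(\cdot\parallel s_{1:H}))]$, whose off-diagonal terms you cannot discard. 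The paper avoids this entirely (Lemma~\ref{lem:dagger_regret_new_appendix}) by first lower-bounding each summand of $\Ontraj_N$ by $\tfrac12\rho(\pi^n\parallel\pie)$ via Lemma~\ref{lem:decoupled_hellinger_bound_general}, and then using that $\rho(\cdot\parallel\pie)$ is linear in the first-step mixing weights, so $\frac1N\sum_n\rho(\pi^n\parallel\pie)=\rho(\hat\pi\parallel\pie)$.
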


Theorem~\ref{thm:log_loss_dagger_step_1} shows that in the deterministic realizable setting,
the interactive IL Algorithm~\ref{alg:dagger_step_one} has a trajectory-wise sample complexity matching that of behavior cloning~\cite{foster2024behavior}. 
In contrast, prior state-of-the-art analysis of interactive IL algorithms~\cite[][Appendix C.2]{foster2024behavior} gives sample complexity results that are in general worse than behavior cloning. \footnote{For~\cite[][Appendix C.2]{foster2024behavior}'s sample complexity to improve over behavior cloning, we need $\mu H \max_{h \in [H]} \log |\Bcal_h|$ to be significantly smaller $R \log|\Bcal|$ (where $\Bcal_h$ is the projection of $\Bcal$ onto step $h$). This may require the strong condition that $\mu \ll R / H \leq 1$ in the more practical parameter-sharing settings ($|\Bcal_h| = |\Bcal|$). 
}



For the proof of Theorem~\ref{thm:log_loss_dagger_step_1}, we introduce a new notion called decoupled Hellinger estimation error: \[
\Ontraj_N :=  \sum_{n=1}^{N} \mathbb{E}^{\pi^n} \sbr{D^2_\text{H}(\pi^n(\cdot \| s_{1:H}),\pie(\cdot \| s_{1:H}))}.
\]
$\Ontraj_N$ decouples the dependence between the state sequence and the distribution of action sequence induced by the learner. Perhaps surprisingly, as we show below, it is compatible with non-Markovian first-step mixing of policies, while still being well-behaved enough to be translated to a policy suboptimality guarantee, which could be of independent interest.

\subsection{Decoupling State and Action Sequences by Decoupled Hellinger Distance}


In this section, we demonstrate that
similar to $D^2_{\text{H}}\rbr{\mathbb{P}^{\pi},\mathbb{P}^{\pie}  }$~\cite{foster2024behavior}, 
the decoupled Hellinger distance
$\mathbb{E}^{\pi} \sbr{D^2_\text{H}(\pi(\cdot \| s_{1:H}),\pie(\cdot \| s_{1:H})) } $ 
is also proportionally lower bounded by a constant factor of $\rho(\pi\parallel \pie)$.
The following two lemmas show that such relationship holds for both Markovian policies and their first-step mixings.


\begin{lemma}
Let $\pie$ be a deterministic policy, and let $\pi$ be a Markovian policy. Then we have
\[
\frac{1}{2} \cdot \rho(\pi \parallel \pie) 
\leq 
\mathbb{E}^{\pi} \sbr{D^2_\text{H}(\pi(\cdot \| s_{1:H}),\pie(\cdot \| s_{1:H})) }.
\]
\label{lem:decoupled_hellinger_bound_markovian}
\end{lemma}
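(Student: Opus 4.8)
The plan is to reduce the trajectory-wise statement to a step-by-step comparison, exploiting the fact that the expert is deterministic. First, I would rewrite $\rho(\pi \parallel \pie)$ using the union bound over steps. Since $\pie$ is deterministic, $\pie(\cdot \parallel s_{1:H})$ is a point mass on the sequence $(\pie(s_1), \dots, \pie(s_H))$, so the event $\{\exists h : a_h \neq a'_h\}$ with $a'_{1:H} \sim \pi(\cdot \parallel s_{1:H})$ becomes the event that $\pi$, run on the fixed state sequence $s_{1:H}$ drawn from $\PP^\pi$ (equivalently, on its own rollout), deviates from the expert at some step. Because $\pi$ is Markovian, $\pi(a'_{1:H} \parallel s_{1:H}) = \prod_h \pi_h(a'_h \mid s_h)$, so conditioned on $s_{1:H}$ the actions $a'_h$ are independent across $h$; the probability of agreeing everywhere is $\prod_h \pi_h(\pie(s_h) \mid s_h)$, and hence $\rho(\pi\parallel\pie) = \EE^\pi[1 - \prod_h \pi_h(\pie(s_h)\mid s_h)]$.

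Next, I would compute the decoupled Hellinger distance in closed form for this situation. Since $\pie(\cdot\parallel s_{1:H})$ is a point mass on one sequence, $D_H^2(\pi(\cdot\parallel s_{1:H}), \pie(\cdot\parallel s_{1:H})) = 2\bigl(1 - \sqrt{\pi(\text{that sequence} \parallel s_{1:H})}\bigr) = 2\bigl(1 - \sqrt{\prod_h \pi_h(\pie(s_h)\mid s_h)}\bigr)$, using the standard fact that $D_H^2(\mathbb P, \delta_{x_0}) = 2(1 - \sqrt{\mathbb P(x_0)})$. So the claim reduces, pointwise in $s_{1:H}$, to the elementary inequality
\[
\tfrac12\bigl(1 - \textstyle\prod_h p_h\bigr) \;\le\; 2\bigl(1 - \sqrt{\textstyle\prod_h p_h}\bigr),
\]
where $p_h := \pi_h(\pie(s_h)\mid s_h) \in [0,1]$; taking $\EE^\pi$ of both sides then finishes the proof. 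This inequality holds because, writing $t := \prod_h p_h \in [0,1]$, we have $1 - t = (1-\sqrt t)(1+\sqrt t) \le 2(1 - \sqrt t)$, and $\tfrac12(1-t) \le 1 - t \le 2(1-\sqrt t)$ — in fact even the tighter bound $\tfrac12(1-t) \le 1 - \sqrt t$ holds since $1 - t \le 2(1-\sqrt t) \iff \sqrt t \le 1$. So the constant $\tfrac12$ is comfortably slack.

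I do not anticipate a serious obstacle here; the only subtlety is making sure the causal-conditioning notation $\pi(\cdot\parallel s_{1:H})$ is handled correctly and that $\EE^\pi$ refers to the state-sequence marginal under $\pi$'s rollout, which is exactly how $\rho$ and the decoupled Hellinger distance are both defined (Definitions~\ref{def:traj_l_inf} and~\ref{def:traj_h_dist}), so the state distributions match on both sides and can be pulled outside. The companion lemma for first-step mixtures $\pi_u$ (the one presumably stated right after) will be the genuinely more delicate case, since there $a_{1:H}$ are not conditionally independent given $s_{1:H}$ and one cannot factor the agreement probability over steps; but for the Markovian case the argument is just the pointwise scalar inequality above combined with the deterministic-expert simplification.
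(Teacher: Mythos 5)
Your steps (b) and (c) are fine: since $\pie(\cdot\parallel s_{1:H})$ is a point mass, indeed $D^2_\text{H}(\pi(\cdot\parallel s_{1:H}),\pie(\cdot\parallel s_{1:H})) = 2\bigl(1-\sqrt{\prod_h \pi_h(\pie(s_h)\mid s_h)}\bigr)$, and the scalar inequality $\tfrac12(1-t)\le 2(1-\sqrt t)$ for $t\in[0,1]$ is correct. The gap is in step (a): the identity $\rho(\pi\parallel\pie) = \EE^{\pi}\bigl[1-\prod_h \pi_h(\pie(s_h)\mid s_h)\bigr]$ is false. In Definition~\ref{def:traj_l_inf} the actions $a_h$ being compared to the expert's are the ones actually taken in the rollout, so they are coupled with the subsequent states $s_{h+1}$; your parenthetical ``equivalently, on its own rollout'' silently replaces them by fresh draws from $\pi(\cdot\parallel s_{1:H})$ against the marginal state law $\PP^{\pi}(s_{1:H})$, which is a different quantity. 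Concretely, $\rho(\pi\parallel\pie)=\PP^{\pi}(\exists h: a_h\neq\pie(s_h))$, and by the causal decomposition (Lemma~\ref{lem:prob-fact-causal}) the no-deviation probability equals $\sum_{s_{1:H}}\PP^{\Mcal}(s_{1:H}\parallel \pie(s_{1:H}))\prod_h\pi_h(\pie(s_h)\mid s_h)$, i.e.\ the state sequence is weighted by the \emph{expert's} visitation law, not by $\PP^{\pi}(s_{1:H})$.

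A counterexample: take $H=2$, a single initial state $s_1$, two actions with $\pie\equiv 0$ everywhere; from $s_1$ action $0$ leads to $x$ and action $1$ leads to $y$; set $\pi_1(0\mid s_1)=\tfrac12$, $\pi_2(0\mid x)=0$, $\pi_2(0\mid y)=1$. Every rollout of $\pi$ deviates from $\pie$ (either at step $1$, or at step $2$ upon reaching $x$), so $\rho(\pi\parallel\pie)=1$ and the lemma demands $\EE^{\pi}[D^2_\text{H}]\ge\tfrac12$. But $\EE^{\pi}\bigl[1-\prod_h\pi_h(\pie(s_h)\mid s_h)\bigr] = \tfrac12(1-0)+\tfrac12(1-\tfrac12)=\tfrac34$, so your chain only certifies $\EE^{\pi}[D^2_\text{H}]\ge\tfrac38$, which is strictly weaker than what is needed (the lemma does hold here, with $\EE^{\pi}[D^2_\text{H}]=\tfrac12\cdot 2+\tfrac12(2-\sqrt2)\approx 1.29$). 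This mismatch between the measure under which the deviation event is evaluated and the measure $\PP^{\pi}$ appearing in the Hellinger expectation is precisely the crux of the lemma; the paper resolves it by representing $\pi$ as a first-step mixture of deterministic policies (Claim~\ref{claim:first_step_mixture_equivalence}) and then symmetrizing over pairs $(\nu,\nu')$ via Lemma~\ref{lem:decoupled_foundamental}, which exploits that two deterministic policies induce the same state law up until the step of their first disagreement with $\pie$. Your pointwise argument does not recover this, and the difficulty is already present in the Markovian case, not only for first-step mixtures as you anticipated.
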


\begin{lemma}
Let $\pie$ be a deterministic policy, and let $\pi_u$ be a first-step mixing of Markovian policies. Then we have

\[
\frac{1}{2} \cdot \rho(\pi_u \parallel \pie) 
\leq 
\mathbb{E}^{\pi_u} \sbr{D^2_\text{H}(\pi_u(\cdot \| s_{1:H}),\pie(\cdot \| s_{1:H})) }.
\]
\label{lem:decoupled_hellinger_bound_general}
\end{lemma}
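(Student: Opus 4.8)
The plan is to reduce the first-step mixing case to the Markovian case handled in Lemma~\ref{lem:decoupled_hellinger_bound_markovian}, and to exploit the fact that the quantities on both sides of the inequality only depend on the \emph{trajectory distribution} induced by the policy. First I would recall, from Lemma~\ref{lem:step_mixture_to_first_step} and Lemma~\ref{lem:prob-fact-causal} cited earlier, that a first-step mixing policy $\pi_u$ over a policy class induces the same trajectory distribution $\mathbb{P}^{\pi_u}$ as an appropriate each-step mixing policy $\bar\pi_u \in \bar\Pi_{\Bcal}$, and that $\bar\pi_u$ is \emph{Markovian} (Definition~\ref{def:mixed_class_each_step}). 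Crucially, both sides of the claimed inequality are functionals of $\mathbb{P}^{\pi_u}$ alone: the semi-metric $\rho(\pi_u\parallel\pie)=\mathbb{E}^{\pi_u}\mathbb{E}_{a'_{1:H}\sim\pie(s_{1:H})}[\mathbb{I}\{\exists h: a_h\neq a'_h\}]$ depends on $\pi_u$ only through its trajectory law, and $\mathbb{E}^{\pi_u}[D^2_\mathrm{H}(\pi_u(\cdot\parallel s_{1:H}),\pie(\cdot\parallel s_{1:H}))]$ depends on $\pi_u$ only through the state-sequence marginal of $\mathbb{P}^{\pi_u}$ together with the causally-conditioned action law $\pi_u(\cdot\parallel s_{1:H})$.

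The subtle point is that the causally-conditioned action law $\pi_u(a_{1:H}\parallel s_{1:H})$ is \emph{not} the same object for $\pi_u$ and for $\bar\pi_u$ even though the joint trajectory laws coincide --- for $\pi_u$ the actions are correlated through the drawn $\pi\sim u$, whereas for $\bar\pi_u$ they factorize. So I would \emph{not} directly substitute $\bar\pi_u$ for $\pi_u$ inside $D^2_\mathrm{H}$. Instead, the cleaner route is to prove the bound for $\pi_u$ directly, mimicking the Markovian proof but using $\pi_u(\cdot\parallel s_{1:H})$ in place of $\prod_h\pi_h(a_h\mid s_h)$. Concretely: since $\pie$ is deterministic, $\pie(\cdot\parallel s_{1:H})=e_{\pie(s_{1:H})}$ is a point mass, so by Lemma~\ref{lem:hellinger-pointmass} we have, pointwise in $s_{1:H}$, that $D^2_\mathrm{H}(\pi_u(\cdot\parallel s_{1:H}),\pie(\cdot\parallel s_{1:H}))\geq \tfrac12\|\pi_u(\cdot\parallel s_{1:H})-\pie(\cdot\parallel s_{1:H})\|_1\geq \tfrac12\bigl(1-\pi_u(\pie(s_{1:H})\parallel s_{1:H})\bigr)$, and the last quantity equals $\tfrac12\Pr_{a'_{1:H}\sim\pi_u(\cdot\parallel s_{1:H})}[\exists h: a'_h\neq\pie(s_h)]$. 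Taking $\mathbb{E}^{\pi_u}$ over $s_{1:H}$ and recognizing the right-hand side as exactly $\tfrac12\rho(\pi_u\parallel\pie)$ (using that $\pie(s_{1:H})=(\pie(s_1),\dots,\pie(s_H))$ componentwise) closes the argument.

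The only real obstacle is confirming that Lemma~\ref{lem:hellinger-pointmass} --- which is stated for action distributions on $\Acal$ --- applies verbatim when the ``action'' is the full sequence $a_{1:H}\in\Acal^H$; this is fine since that lemma only uses that one distribution is a point mass on a finite (or measurable) space. A second, minor point is bookkeeping: I must be careful that $\rho$ is defined via $\mathbb{E}^{\pi_u}$ (expectation over $s_{1:H}\sim\mathbb{P}^{\pi_u}$) and an \emph{independent} fresh draw $a'_{1:H}\sim\pi_u(\cdot\parallel s_{1:H})$, which is precisely the causally-conditioned law appearing on the right-hand side, so the identification is exact rather than merely an inequality. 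I expect the whole proof to be three or four lines once these two checks are in place, structurally parallel to the proof of Lemma~\ref{lem:decoupled_hellinger_bound_markovian}.
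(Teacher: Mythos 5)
Your setup is right up to the pointwise Hellinger bound, but the final identification is where the argument breaks, and it breaks at exactly the point that carries all of the difficulty of this lemma. After applying Lemma~\ref{lem:hellinger-pointmass} you are left with
\[
\EE^{\pi_u}\left[D^2_{\mathrm{H}}\bigl(\pi_u(\cdot\parallel s_{1:H}),\pie(\cdot\parallel s_{1:H})\bigr)\right]
\;\geq\;
\EE_{s_{1:H}\sim\PP^{\pi_u}}\left[1-\pi_u\bigl(\pie(s_{1:H})\parallel s_{1:H}\bigr)\right],
\]
and you claim the right-hand side \emph{equals} $\rho(\pi_u\parallel\pie)$ (up to your extra factor of $1/2$, which is itself unnecessary since $\tfrac12\|p-e_z\|_1=1-p(z)$ exactly) because the fresh draw $a'_{1:H}\sim\pi_u(\cdot\parallel s_{1:H})$ ``is precisely the causally-conditioned law.'' It is not. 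In $\rho(\pi_u\parallel\pie)$ the actions compared against $\pie$ are the \emph{actual rollout actions} $a_{1:H}$ that generated $s_{1:H}$; their conditional law given the realized state sequence is the Bayes posterior $\PP^{\pi_u}(a_{1:H}\mid s_{1:H})\propto \PP^{\Mcal}(s_{1:H}\parallel a_{1:H-1})\,\pi_u(a_{1:H}\parallel s_{1:H})$, which differs from the causally-conditioned law whenever transitions depend on actions, and, for a first-step mixture, the redrawn $a'_{1:H}$ additionally forgets which component policy produced the states. Concretely, writing $\pi_u$ as a mixture over deterministic policies, your right-hand side is the off-diagonal average $\sum_{\nu,\nu'}u(\nu)u(\nu')F(\nu;\nu';\pie)$ (the quantity $(*)$ in the paper's proof of Lemma~\ref{lem:decoupled_hellinger_bound_base_case}), whereas $\rho(\pi_u\parallel\pie)=\sum_{\nu}u(\nu)F(\nu;\nu;\pie)$ is the diagonal average; these are not equal in general.

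What is true is only the inequality $(*)\geq\tfrac12\,\rho(\pi_u\parallel\pie)$, and establishing it is the entire content of Lemma~\ref{lem:decoupled_hellinger_bound_base_case}: one symmetrizes $(*)$ as $\tfrac12\sum_{\nu,\nu'}u(\nu)u(\nu')\bigl(F(\nu;\nu';\pie)+F(\nu';\nu;\pie)\bigr)$ and invokes the Symmetric Evaluation Lemma (Lemma~\ref{lem:decoupled_foundamental}), whose proof partitions state sequences according to which of $\nu,\nu'$ first deviates from $\pie$ and uses that the two policies induce identical state-sequence probabilities up to that first deviation. That symmetrization is the source of the factor $\tfrac12$ in the lemma statement; your argument, were it valid, would bypass it entirely, which is a warning sign. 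Note also that the paper's proof of the Markovian case (Lemma~\ref{lem:decoupled_hellinger_bound_markovian}) is itself a reduction to the deterministic-mixture case via Claim~\ref{claim:first_step_mixture_equivalence}, not a direct pointwise argument, so there is no ``structurally parallel'' shortcut to borrow from there. The correct route for the present lemma is the paper's one-line reduction: each Markovian component of $\pi_u$ is itself a first-step mixture of deterministic policies, hence so is $\pi_u$, and Lemma~\ref{lem:decoupled_hellinger_bound_base_case} applies.
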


The cornerstone of the above two lemmas is the following special case about first-step mixing of deterministic policies. 
Given an MDP with finite state space size $S$ and action space size $A$, we denote the set of all deterministic, Markovian policies, as $\pidet$. 
$\pidet$ 
contains $A^{S H}$ deterministic policies, which can be indexed by a tuple of actions $(a_{h,s})_{h \in [H],\, s \in \mathcal{S}}$. 



\begin{lemma}
\label{lem:decoupled_hellinger_bound_base_case}
Let $\pie$ be a deterministic Markovian policy, and let $\pi_u$ 
be a first-step mixing of deterministic Markovian policies (i.e., elements of $\pidet$). Then we have that
\[
\frac{1}{2} \cdot \rho(\pi_u \parallel \pie) 
\leq 
\mathbb{E}^{\pi_u} \sbr{D^2_\text{H}(\pi_u(\cdot \| s_{1:H}),\pie(\cdot \| s_{1:H})) }.
\]
\end{lemma}

We now quickly conclude Lemmas~\ref{lem:decoupled_hellinger_bound_markovian} and~\ref{lem:decoupled_hellinger_bound_general} using Lemma~\ref{lem:decoupled_hellinger_bound_base_case} in the next subsection, then come back to the proof of Lemma~\ref{lem:decoupled_hellinger_bound_base_case} in the subsection after.

\subsubsection{Proofs of Lemmas~\ref{lem:decoupled_hellinger_bound_markovian} and~\ref{lem:decoupled_hellinger_bound_general}
}
\label{sec:decoupled_hellinger_bound_markovian}


\begin{proof}[Proof of Lemma~\ref{lem:decoupled_hellinger_bound_markovian}]
We show the following simple claim: any Markovian policy $\pi$ is equivalent to a first-step mixing of a set of deterministic Markovian policies.  This allows us to apply guarantees for mixtures of deterministic Markovian policies in Lemma~\ref{lem:decoupled_hellinger_bound_base_case} to conclude the proof.
\end{proof}

\begin{claim}
\label{claim:first_step_mixture_equivalence}
For a Markovian policy $\pi = (\pi_1, \ldots, \pi_H)$, there exists a first-step mixing of deterministic policy $\pi_u$ such that for any $s_{1:H} \in \Scal^H$, \textit{1.} $\pi( \cdot \| s_{1:H} ) = \pi_u(\cdot \| s_{1:H} )$, and \textit{2.} $\PP^{\pi}( s_{1:H} ) = \PP^{\pi_u} (s_{1:H})$.
\end{claim}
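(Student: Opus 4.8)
The plan is to exhibit the mixing distribution $u$ explicitly as a product measure over deterministic Markovian policies, and then verify the two claimed identities in turn. Recall that throughout this appendix section a Markovian policy is a tuple $(\pi_1,\dots,\pi_H)$ with $\pi_h:\Scal\to\Delta(\Acal)$, so a deterministic Markovian policy $\nu\in\pidet$ is equivalently a tuple $\rbr{\nu_h(s)}_{(h,s)\in[H]\times\Scal}\in\Acal^{[H]\times\Scal}$. I would define $u\in\Delta(\pidet)$ to be the product distribution under which the coordinates $\cbr{\nu_h(s)}_{(h,s)}$ are drawn mutually independently, each according to $\nu_h(s)\sim\pi_h(\cdot\mid s)$. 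Since every $\pi_h(\cdot\mid s)$ is a probability distribution over $\Acal$, this $u$ is well-defined (in the discrete-action, finite-state setting relevant here, $\pidet$ is finite and $u$ is a bona fide finitely-supported distribution; in general $u$ is the associated product probability measure).

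First I would check identity 1, namely $\pi(\cdot\parallel s_{1:H})=\pi_u(\cdot\parallel s_{1:H})$ for every $s_{1:H}\in\Scal^H$. Fixing $s_{1:H}$ and an action sequence $a_{1:H}\in\Acal^H$, the definition of first-step mixing gives $\pi_u(a_{1:H}\parallel s_{1:H})=\Pr_{\nu\sim u}\rbr{\nu_h(s_h)=a_h\ \text{for all}\ h\in[H]}$. The crucial point is that the index pairs $(1,s_1),(2,s_2),\dots,(H,s_H)$ have pairwise distinct first coordinates and are therefore distinct coordinates of the product space on which $u$ lives; hence the events $\cbr{\nu_h(s_h)=a_h}$, $h\in[H]$, are independent under $u$, and $\pi_u(a_{1:H}\parallel s_{1:H})=\prod_{h=1}^H\Pr_{\nu\sim u}(\nu_h(s_h)=a_h)=\prod_{h=1}^H\pi_h(a_h\mid s_h)=\pi(a_{1:H}\parallel s_{1:H})$, which is identity 1.

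Then identity 2, $\PP^{\pi}(s_{1:H})=\PP^{\pi_u}(s_{1:H})$, follows from identity 1 via the factorization of the trajectory law for Markovian policies and their first-step mixings (Definition~\ref{def:causal-state-prob} and Lemma~\ref{lem:prob-fact-causal}): one has $\PP^{\pi}(s_{1:H})=\sum_{a_{1:H}}\pi(a_{1:H}\parallel s_{1:H})\,q(s_{1:H}\parallel a_{1:H})$, where $q$ is the causally-conditioned state probability that depends only on the MDP dynamics, and the same identity holds for $\pi_u$ with the same $q$; substituting identity 1 termwise gives identity 2. I expect the only delicate step to be the independence argument in identity 1: it works precisely because along any single trajectory the time index $h$ never repeats, so each per-step randomization of $\pi$ is consulted at most once and can be reproduced by one up-front draw $\nu\sim u$ — this is why the construction requires time-inhomogeneous (not stationary) deterministic policies. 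Everything after that is routine bookkeeping through the policy-independent factor $q$, and with Claim~\ref{claim:first_step_mixture_equivalence} in hand, Lemma~\ref{lem:decoupled_hellinger_bound_markovian} reduces to Lemma~\ref{lem:decoupled_hellinger_bound_base_case} applied to $\pi_u$, since $\rho$, $\PP^{\pi}$, and the decoupled Hellinger distance all depend on $\pi$ only through $\pi(\cdot\parallel s_{1:H})$ and $\PP^{\pi}(s_{1:H})$.
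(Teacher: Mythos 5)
Your proposal is correct and follows essentially the same route as the paper: you construct the identical product mixing distribution $u$ (each coordinate $\nu_h(s)$ drawn independently from $\pi_h(\cdot\mid s)$), establish identity 1 via the observation that the coordinates $(h,s_h)$ are distinct along a trajectory (the paper does this by an explicit algebraic marginalization, you by the equivalent independence argument), and derive identity 2 from the causal factorization in Lemma~\ref{lem:prob-fact-causal}. No gaps.
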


\begin{proof}[Proof of Claim~\ref{claim:first_step_mixture_equivalence}]
To construct policy $\pi_u$,  we will set the weight vector $u$ (over $\pidet$)
such that its weight on policy $\nu$ indexed by $(a_{h,s})_{h \in [H], s \in \Scal}$ as: 
\begin{equation}
u(\nu) = \prod_{h=1}^H \prod_{s \in \Scal} \pi_h(a_{h,s} | s)
\end{equation}
It can be easily verified that 
$\sum_{\nu \in \pidet} u(\nu) = 1$. 


We now verify the first item.
By first-step mixing, we rewrite $\pi_u( a_{1:H} \parallel s_{1:H} )$ 
as


\begin{equation}
    \begin{aligned}
        \pi_u( a_{1:H} \parallel s_{1:H} )
        = &
        \sum_{\nu\in \pidet} u(\nu) \prod_{h=1}^{H}  \nu_{h}(a_{h} | s_{h})\\
        = &
        \sum_{(a'_{h,s})_{h \in [H], s \in \Scal}}\prod_{h=1}^H \prod_{s \in \Scal} \pi_h(a'_{h,s} | s)\prod_{h=1}^H\mathbb{I}\sbr{a'_{h,s_h} = a_h }\\
        = &
        \sum_{(a'_{h,s})_{h \in [H], s \neq s_h}}\prod_{h=1}^H \prod_{s \neq s_h} \pi_h(a'_{h,s} | s)
        \sum_{(a'_{h,s})_{h \in [H], s = s_h}}\prod_{h=1}^H \pi_h(a'_{h,s_h} | s_h)
        \prod_{h=1}^H\mathbb{I}\sbr{a'_{h,s_h} = a_h }\\
        = &
        \sum_{(a'_{h,s})_{h \in [H], s \neq s_h}}\prod_{h=1}^H \prod_{s \neq s_h} \pi_h(a'_{h,s} | s)
        \prod_{h=1}^H \pi_h(a_{h} | s_{h})\\
        = &
        \prod_{h=1}^H \pi_h(a_{h} | s_{h})
        =
        \pi( a_{1:H} \parallel s_{1:H} ).
    \end{aligned} \label{eq:mixture_action_distribution}
\end{equation}

Since this holds for any action sequence $a_{1:H} \in \Acal^H$, we derive the first part of Claim~\ref{claim:first_step_mixture_equivalence} that  $ \pi(\cdot \| s_{1:H}) =  \pi_u(\cdot \| s_{1:H} )$.
The second item follows from the first item combined with Lemma~\ref{lem:prob-fact-causal}. 
\end{proof}







\begin{proof}[Proof of Lemma~\ref{lem:decoupled_hellinger_bound_general}]
By Claim~\ref{claim:first_step_mixture_equivalence},
any Markovian policy can be viewed as a first-step mixing of $A^{SH}$ deterministic policies from $\pidet$, thus, any first-step mixing of Markovian policies $\pi_u$ can also be viewed as a first-step mixing of $A^{SH}$ deterministic policies from $\pidet$. The proof again follows by applying Lemma~\ref{lem:decoupled_hellinger_bound_base_case}.
\end{proof}


\subsubsection{Proof of Lemma~\ref{lem:decoupled_hellinger_bound_base_case}}

To facilitate the proof of Lemma~\ref{lem:decoupled_hellinger_bound_base_case}, we introduce the following additional notations:
\begin{itemize}
    \item Recall that $\pidet$ 
    is the set of all deterministic, Markovian policies.
    We will use $\nu, \nu'$ to denote members of $\pidet$ and $\nu_h(s)$ to denote the action $\nu$ takes at state $s$ at step $h$ when it is clear from the context.

    \item Let $\eclass(s_{1:h})$ represent the subset of $\pidet$ that agrees with $\pie$ on the state sequence $s_{1:h}$.
    \item Define $
    F(\nu; \nu'; \pie) := \sum_{s_{1:H}} 
    \mathbb{P}^{\nu} (s_{1:H}) 
    \mathbb{I} \left[ \nu' \notin \eclass(s_{1:H})\right],$
    which evaluates the probability that $\nu'$ disagrees with $\pie$ over the distribution of $H$-step state sequences induced by $\pi$.
\end{itemize}




Our key idea is to lower bound 
$\mathbb{E}^{\pi_u} \sbr{D^2_\text{H}(\pi_u(\cdot \| s_{1:H}),\pie(\cdot \| s_{1:H})) },$
which 
reflects the asymmetric roles of the two appearances of $\pi_u$'s,
using a symmetric formulation via function \(F\) (as shown in~\eqref{eq:symmetric_representation} below).


\begin{proof}
Recall the first-step mixing policy in Definition~\ref{def:mixed_class_first_step}, 
we start by rewriting




\begin{equation}
    \begin{aligned}
        \rho(\pi_u \parallel \pie) 
        =&
        \mathbb{E}^{\pi_u} \left[ 
        \mathbb{I} \left\{ \exists h : a_h \neq \pie_h(s_h)
        \right\} \right]\\
        =&
        \sum_{\nu\in \pidet} u(\nu) \sum_{s_{1:H}}\mathbb{P}^\nu(s_{1:H},a_{1:H}) \mathbb{I}\left\{ \exists h : a_h \neq \pie_h(s_h) \right\} \\
        =&
        \sum_{\nu\in \pidet} u(\nu) \rho(\nu\parallel \pie) ,
    \end{aligned}
\end{equation}
which is a weighted combination of $\rho(\nu\parallel \pie)$ for $\nu\in \pidet $.


Next, we turn to analyzing $D^2_\text{H}(\pi_u(\cdot \| s_{1:H}),\pie(\cdot \| s_{1:H}))$. Since the deterministic expert induces a delta mass distribution over actions, we apply the elementary fact about the Hellinger distance with delta mass distribution stated in Lemma~\ref{lem:hellinger-pointmass}, yielding:
\[
\frac{1}{2}\parallel\pi_u(\cdot \| s_{1:H}) - \pie(\cdot \| s_{1:H})\parallel_1
\leq
D^2_\text{H}(\pi_u(\cdot \| s_{1:H}),\pie(\cdot \| s_{1:H})).
\] 
We recall that $\eclass(s_{1:H})$ denotes the subset of $\pidet$ that agrees with $\pie$ on $s_{1:H}$ 
and define the total weight assigned by $u$ on it as $u(\eclass(s_{1:H})) := \sum_{\nu\in \eclass(s_{1:H})} u(\nu)$. 
Then, 
\[
\frac12\parallel\pi_u(\cdot \|s_{1:H}) - \pie(\cdot \|s_{1:H})\parallel_1
=
1 - u(\eclass(s_{1:H})),
\] 
which implies:
\begin{equation}
1 - u(\eclass(s_{1:H}))
\leq
D^2_\text{H}(\pi_u(\cdot \| s_{1:H}),\pie(\cdot \| s_{1:H})).
\label{eqn:d-h-sq-lb}
\end{equation}

Therefore, by taking expectation over $s_{1:H} \sim \PP^{\pi_u}$ in Eq.~\eqref{eqn:d-h-sq-lb},


\begin{equation}
\sum_{s_{1:H}} \mathbb{P}^{\pi_u} (s_{1:H}) (1 - u(\eclass(s_{1:H})))
\leq
\EE^{\pi_u}\sbr{D^2_\text{H}(\pi_u(\cdot \| s_{1:H}),\pie(\cdot \| s_{1:H})) }.
\label{eqn:sandwich-pi-u}
\end{equation}

We now examine the expression 
\begin{equation}
\sum_{s_{1:H}} \mathbb{P}^{\pi_u} (s_{1:H}) (1 - u(\eclass(s_{1:H})))
\tag{*}.
\end{equation}
Since $\pi_u$ is a first-step mixing of policies in $\pidet$ with weight $u$, we have $\mathbb{P}^{\pi_u} (s_{1:H}) = \sum_{\nu\in \pidet} u(\nu)\mathbb{P}^\nu(s_{1:H})$. This allows us to rewrite $(*)$ using the 
definition of $F(\nu; \nu', \pie)$ as:

\begin{equation}
    \begin{aligned}
        (*)
        = &
        \sum_{s_{1:H}} \sum_{\nu\in \pidet} u(\nu)\mathbb{P}^\nu(s_{1:H}) \sum_{\nu' \in \pidet}u(\nu')\mathbb{I} \left[ \nu' \notin \eclass(s_{1:H})\right] \\
        = &
        \sum_{\nu, \nu' \in \pidet} 
        u(\nu) u(\nu') \sum_{s_{1:H}}\mathbb{P}^\nu(s_{1:H}) 
        \mathbb{I} \left[ \nu' \notin \eclass(s_{1:H})\right] \\
        = &
        \sum_{\nu, \nu' \in \pidet}  u(\nu) u(\nu') F(\nu; \nu'; \pi^E)
        \\
        = &
        \frac12
        \sum_{\nu, \nu' \in \pidet}  u(\nu) u(\nu') \rbr{ F(\nu; \nu'; \pi^E) +  F(\nu; \nu'; \pi^E) },
    \end{aligned}
    \label{eq:symmetric_representation}
\end{equation}
where the first three equalities are by algebra and the definition of $F(\nu; \nu; \pi^E)$.
In the last equality, we use the observation that 
\[
\sum_{\nu, \nu' \in \pidet} u(\nu) u(\nu') F(\nu; \nu'; \pi^E) =  \sum_{\nu, \nu' \in \pidet} u(\nu) u(\nu') F(\nu'; \nu; \pi^E).
\]




By Lemma~\ref{lem:decoupled_foundamental} (stated below), 
\begin{align*}
(*) \geq & 
  \frac12 \cdot \frac12 \cdot  
        \sum_{\nu, \nu' \in \pidet} u(\nu) u(\nu') \rbr{ \rho(\nu\parallel \pie) + 
\rho(\nu' \parallel \pie ) } \\
= &  \frac12 \cdot \sum_{\nu\in \pidet} u(\nu) \rho(\nu\parallel \pie)
=
\frac12 \cdot \rho(\pi_u \parallel \pie).
\end{align*}

Combining the above two inequalities with Eq~\eqref{eqn:sandwich-pi-u} we conclude the proof by 
\[
\frac12 \cdot \rho(\pi_u \parallel \pie)
\leq
(*)
\leq
\EE^{\pi_u}\sbr{D^2_\text{H}(\pi_u(\cdot \| s_{1:H}),\pie(\cdot \| s_{1:H})) }.
\qedhere
\]
\end{proof}

\begin{lemma}[Symmetric Evaluation Lemma]
Given deterministic Markovian policies $\nu$, $\nu'$, and $\pie$, 
the following holds:
\begin{equation}
\begin{aligned} 
\frac12 \cdot 
\rbr{\rho(\nu\parallel \pie) + \rho(\nu' \parallel \pie)}
\leq 
F(\nu; \nu'; \pie) + 
F(\nu'; \nu; \pie).
\end{aligned}
\end{equation}
\label{lem:decoupled_foundamental}
\end{lemma}



\begin{proof}
Recall that 
\[
F(\nu; \nu'; \pie) + 
F(\nu'; \nu; \pie)
= 
\sum_{s_{1:H}} 
\rbr{
\mathbb{P}^\nu(s_{1:H}) \mathbb{I} \left[ \nu' \notin \eclass(s_{1:H})\right] 
        +
        \mathbb{P}^{\nu'} (s_{1:H}) \mathbb{I} \left[ \nu \notin \eclass(s_{1:H})\right]
        }.
\]
Throughout the proof, we say that $\nu$ makes a \textit{mistake} at step $h$,
if $\nu_h(s_h) \neq \pie_h(s_h)$.
Then, we can partition all state sequences $s_{1:H} \in \Scal^{H}$ into 4 subsets, $\Xcal_i$, indexed by $i \in \{1,2,3,4\}$:
\begin{enumerate} 
    \item $\Xcal_1 := \{s_{1:H} \; | \; \nu,\nu' \in \eclass(s_{1:H})\}$;
    \item $\Xcal_2 := \{s_{1:H} \; | \; \exists h , s.t.  \nu\in \eclass(s_{1:h}) ,\nu' \notin \eclass(s_{1:h}), \nu' \in \eclass(s_{1:h-1}) \}$;
    \item $\Xcal_3 := \{s_{1:H} \; | \; \exists h , s.t.  \nu\notin \eclass(s_{1:h}) ,\nu' \in \eclass(s_{1:h}), \nu\in \eclass(s_{1:h-1}) \}$;
    \item $\Xcal_4 := \{s_{1:H} \; | \; \exists h , s.t.  \nu\notin \eclass(s_{1:h}) ,\nu' \notin \eclass(s_{1:h}), \nu\in \eclass(s_{1:h-1}) ,\nu' \in \eclass(s_{1:h-1})\}  \}$.
\end{enumerate}


In words, the four subsets divide state sequences into cases where: (1) both $\nu,\nu'$ agree with the $\pie$ throughout, (2)$\&$(3) one of $\nu, \nu'$ makes its first mistake
earlier than the other, and (4) $\nu,\nu'$ make their first mistake at the same time. 
It can now be easily seen that
each $s_{1:H} \in \Scal^{H}$ lies in exactly one of such $\Xcal_i$, and 
\[
\Xcal_1 \cup \Xcal_2 \cup \Xcal_3 \cup \Xcal_4
 =
 \Scal^H.
\]
To see this, consider $h^{\text{err}}$, the first time step $h$ such that one of $\nu$ and $\nu'$ disagree with $\pie$. If $h^{\text{err}}$ does not exist, then 
$s_{1:H} \in \Xcal_1$.
Otherwise, $s_{1:H}$ lies in one of $\Xcal_2, \Xcal_3, \Xcal_4$ depending on whether $\nu$ and $\nu'$ makes mistakes at step $h^{\text{err}}$.

By definition, subset $\Xcal_1$ denotes trajectories $s_{1:H}$
where $\nu,\nu' \in \eclass(s_{1:H})$, meaning that
\[
\sum_{s_{1:H} \in \Xcal_1} 
        \rbr{
        \mathbb{P}^\nu(s_{1:H}) \mathbb{I} \left[ \nu' \notin \eclass(s_{1:H})\right] 
        +
        \mathbb{P}^{\nu'} (s_{1:H}) \mathbb{I} \left[ \nu \notin \eclass(s_{1:H})\right]
        } = 0 \;.
\]

For the other 3 sets, i.e. $\Xcal_i$ for $i \in \{2,3,4\}$, we can further divide each set based on the time step where the first error occurs, formally:

\begin{equation}
    \begin{aligned}
        \Xcal_2^h := & \{s_{1:H} \; | \; \nu\in \eclass(s_{1:h}) ,\nu' \notin \eclass(s_{1:h}), \nu' \in \eclass(s_{1:h-1}) \};\\
        \Xcal_3^h := & \{s_{1:H} \; | \; \nu\notin \eclass(s_{1:h}) ,\nu' \in \eclass(s_{1:h}), \nu\in \eclass(s_{1:h-1}) \};\\
        \Xcal_4^h := & \{s_{1:H} \; | \; \nu\notin \eclass(s_{1:h}) ,\nu' \notin \eclass(s_{1:h}), \nu\in \eclass(s_{1:h-1}), \nu' \in \eclass(s_{1:h-1}) \}.
    \end{aligned}
\end{equation}

By definition, each pair of subsets is disjoint and
%
$\cup_{h \in [H]} \Xcal_i^h = \Xcal_i$, for $i=2,3,4$.
Note that 
the determination of whether $s_{1:H} \in \Xcal_i^h$ only depends on $s_{1:h}$; therefore, $\Xcal_i^h$ can be represented as $\tilde{\Xcal}_i^h \times \Scal^{H-h}$, where
\[
\tilde{\Xcal}_i^h := \{s_{1:h} \; | \; s_{1:H} \in  \Xcal_i^h \}.
\]


Based on this observation, we have
\[
\sum_{s_{1:H} \in \Xcal_i^h} \mathbb{P}^\nu(s_{1:H})
=
\sum_{s_{1:h} \in \tilde{\Xcal}_i^h, s_{h+1:H} \in \Scal^{H-h}} \mathbb{P}^\nu(s_{1:H})
=
\sum_{s_{1:h} \in \tilde{\Xcal}_i^h} \mathbb{P}^\nu(s_{1:h}).
\]

Furthermore, since deterministic policies $\nu,\nu',\pie$ agrees with each other for all $\{s_{1:h-1} | s_{1:h} \in  \tilde{\Xcal}_i^h \}$,


\begin{equation}
    \begin{aligned}
        \sum_{s_{1:h} \in \tilde{\Xcal}_i^h} \mathbb{P}^\nu(s_{1:h})
        =&
        \sum_{s_{1:h} \in \tilde{\Xcal}_i^h} \rho(s_1) \prod_{h'=1}^{h-1} P_{h}(s_{h+1} | s_{h}, \nu_h(s_h)) \\ 
        = &
        \sum_{s_{1:h} \in \tilde{\Xcal}_i^h} \rho(s_1) \prod_{h'=1}^{h-1} P_{h}(s_{h+1} | s_{h}, \nu'_h(s_h)) 
        =
        \sum_{s_{1:h} \in \tilde{\Xcal}_i^h} \mathbb{P}^{\nu'} (s_{1:h}).
    \end{aligned}     
\end{equation}


This implies that 
\[
\sum_{s_{1:H} \in \Xcal_i^h} \mathbb{P}^\nu(s_{1:H})
= 
\sum_{s_{1:H} \in \Xcal_i^h} \mathbb{P}^{\nu'} (s_{1:H}),
\]
%
and therefore, summing over all $h \in [H]$,
\[
\sum_{s_{1:H} \in \Xcal_i} \mathbb{P}^{\nu} (s_{1:H})
=
\sum_{s_{1:H} \in \Xcal_i} \mathbb{P}^{\nu'} (s_{1:H}).
\]





Now, for $\Xcal_2$, we have
\begin{equation}
    \begin{aligned}
        &\sum_{s_{1:H} \in \Xcal_2} 
        \rbr{
        \mathbb{P}^\nu(s_{1:H}) \mathbb{I} \left[ \nu' \notin \eclass(s_{1:H})\right]
        +
        \mathbb{P}^{\nu'} (s_{1:H}) \mathbb{I} \left[ \nu \notin \eclass(s_{1:H})\right]
        } 
        %
        \geq
        \sum_{s_{1:H} \in \Xcal_2} 
        \mathbb{P}^\nu(s_{1:H}) \;,
    \end{aligned}
    \label{eqn:x2}
\end{equation}
where we apply the fact that for all $s_{1:H} \in \Xcal_2$, $\nu' \notin \eclass(s_{1:H})$, and dropping the second term which is nonnegative.

Similarly, for $\Xcal_3$, we have that
\begin{equation}
    \begin{aligned}
        &\sum_{s_{1:H} \in \Xcal_3} 
        \rbr{
        \mathbb{P}^\nu(s_{1:H}) \mathbb{I} \left[ \nu' \notin \eclass(s_{1:H})\right]
        +
        \mathbb{P}^{\nu'} (s_{1:H}) \mathbb{I} \left[ \nu \notin \eclass(s_{1:H})\right]
        } 
        \geq 
        \sum_{s_{1:H} \in \Xcal_3} 
        \mathbb{P}^{\nu'} (s_{1:H}) 
        =
         \sum_{s_{1:H} \in \Xcal_3} 
        \mathbb{P}^\nu(s_{1:H}).
    \end{aligned}
    \label{eqn:x3}
\end{equation}
Finally, for $\Xcal_4$, we 
use the fact that for $s_{1:H} \in \Xcal_4$, $\nu, \nu' \notin \eclass(s_{1:H})$ and obtain
\begin{equation}
    \begin{aligned}
        &\sum_{s_{1:H} \in \Xcal_4} 
        \rbr{
        \mathbb{P}^\nu(s_{1:H}) \mathbb{I} \left[ \nu' \notin \eclass(s_{1:H})\right] 
        +
        \mathbb{P}^{\nu'} (s_{1:H}) \mathbb{I} \left[ \nu \notin \eclass(s_{1:H})\right]
        } \\
        =& 
        \sum_{s_{1:H} \in \Xcal_4} (\mathbb{P}^\nu(s_{1:H}) + \mathbb{P}^{\nu'} (s_{1:H}))
        \geq
        \sum_{s_{1:H} \in \Xcal_4} \mathbb{P}^\nu(s_{1:H}).
    \end{aligned}
    \label{eqn:x4}
\end{equation}

Now, we combine 
Eqs.~\eqref{eqn:x2},~\eqref{eqn:x3},~\eqref{eqn:x4}
and observe that 

\begin{equation}
    \begin{aligned}
        \sum_{s_{1:H} \in \Xcal_2} \mathbb{P}^\nu(s_{1:H})
        +
        \sum_{s_{1:H} \in \Xcal_3} \mathbb{P}^\nu(s_{1:H})
        +
        \sum_{s_{1:H} \in \Xcal_4} \mathbb{P}^\nu(s_{1:H})
        \geq &
        \frac{1}{2}\sum_{s_{1:H} \in \Xcal_2 \cup \Xcal_3 \cup \Xcal_4} \rbr{\mathbb{P}^\nu(s_{1:H})+\mathbb{P}^{\nu'}(s_{1:H})},
    \end{aligned}
\end{equation}
which implies 


\begin{equation}
    F(\nu; \nu'; \pie) + 
    F(\nu'; \nu; \pie)
    \geq
    \frac{1}{2} \sum_{s_{1:H} \in \Xcal_2 \cup \Xcal_3 \cup \Xcal_4} \rbr{\mathbb{P}^\nu(s_{1:H})+\mathbb{P}^{\nu'}(s_{1:H})}.
    \label{eq:symmetric_intermediate}
\end{equation}


Based on the definitions of $\Xcal_2, \Xcal_3, \Xcal_4$ and $\rho(\cdot \parallel \cdot)$,
\begin{equation}
    \begin{aligned}
         \sum_{s_{1:H} \in \Xcal_2 \cup \Xcal_3 \cup \Xcal_4} \rbr{\mathbb{P}^\nu(s_{1:H})+\mathbb{P}^{\nu'}(s_{1:H})}
        =&
        \sum_{s_{1:H}}\mathbb{P}^\nu(s_{1:H}) \mathbb{I}\left\{ \exists h : \nu_h(s_h) \neq \pie_h(s_h) \; \text{or} \; \nu'_h(s_h) \neq \pie_h(s_h) \right\}\\
        &+
        \sum_{s_{1:H}}\mathbb{P}^{\nu'}(s_{1:H}) \mathbb{I}\left\{ \exists h : \nu_h(s_h) \neq \pie_h(s_h) \; \text{or} \; \nu'_h(s_h) \neq \pie_h(s_h) \right\}\\
        \geq &
        \sum_{s_{1:H}}\mathbb{P}^\nu(s_{1:H}) \mathbb{I}\left\{ \exists h : \nu_h(s_h) \neq \pie_h(s_h) \right\}\\
        &+
        \sum_{s_{1:H}}\mathbb{P}^{\nu'}(s_{1:H}) \mathbb{I}\left\{ \exists h : \nu'_h(s_h) \neq \pie_h(s_h) \right\}\\
        =&
        \rho(\nu\parallel \pie) + \rho(\nu' \parallel \pie),
        \label{eq:symmetric_observations_lower}
    \end{aligned}       
\end{equation}
where $s_{1:H} \in \Xcal_2 \cup \Xcal_3 \cup \Xcal_4$ implies either $\nu$ or $\nu'$ disagrees with $\pie$, while the  inequality relaxes the condition by splitting it into separate contributions for $\nu$ and $\nu'$.



We conclude the proof by plugging
\eqref{eq:symmetric_observations_lower}
into \eqref{eq:symmetric_intermediate}.
\end{proof}

\subsection{Proof of Theorem~\ref{thm:log_loss_dagger_step_1}}
\label{sec:dagger_traj_annotation}


 
We first demonstrate that the performance difference between expert and the the uniform first-step mixing of any Markovian policy sequence $\{\pi^{n}\}_{n=1}^N$
is upper bounded by $2R \Ontraj_N/N$, and then show the trajectory-wise sample complexity of Algorithm~\ref{alg:dagger_step_one} in Theorem~\ref{thm:log_loss_dagger_step_1}.


\begin{lemma}
\label{lem:dagger_regret_new_appendix}
For any MDP $\Mcal$, deterministic expert $\pie$, and sequence of policies $\cbr{\pi^n}_{n=1}^N$, each of which can be Markovian or a first-step mixing of Markovian policies, their first step uniform mixture policy $\hat{\pi}$ satisfies.
\[
J(\pie) - J(\hat{\pi}) \leq 2R \cdot \frac{\Ontraj_N}{N} .
\]
\end{lemma}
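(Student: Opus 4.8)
The plan is to chain together two reductions. First, I would use the performance-difference bound $J(\pie) - J(\pi) \le R \cdot \rho(\pi \parallel \pie)$ for any (Markovian or first-step-mixing) policy $\pi$; this is Lemma~\ref{lem:foster_reduction} cited in the excerpt, and it applies to first-step mixtures because $\rho$ is defined via the causally-conditioned action law $\pi(\cdot\parallel s_{1:H})$, which makes sense for such policies. Applying this to $\hat\pi$ gives $J(\pie)-J(\hat\pi)\le R\cdot\rho(\hat\pi\parallel\pie)$. Second, because $\hat\pi$ is the uniform first-step mixture of $\pi^1,\dots,\pi^N$, the quantity $\rho(\hat\pi\parallel\pie)$ decomposes linearly: sampling a trajectory from $\hat\pi$ means drawing $n\in[N]$ uniformly and then rolling out $\pi^n$, so $\rho(\hat\pi\parallel\pie) = \frac1N\sum_{n=1}^N \rho(\pi^n\parallel\pie)$. (This uses that $\rho(\pi\parallel\pie)=\EE^\pi[\mathbb{I}\{\exists h: a_h\neq\pie_h(s_h)\}]$ is an expectation over $\pi$'s trajectory law, and the mixture's law is the average of the components' laws.)

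Next I would bound each $\rho(\pi^n\parallel\pie)$ by the decoupled Hellinger distance. This is exactly what Lemma~\ref{lem:decoupled_hellinger_bound_markovian} (for Markovian $\pi^n$) and Lemma~\ref{lem:decoupled_hellinger_bound_general} (for first-step mixtures of Markovian policies) provide: in both cases
\[
\tfrac12\,\rho(\pi^n\parallel\pie)\ \le\ \EE^{\pi^n}\!\sbr{D^2_\text{H}\big(\pi^n(\cdot\mid s_{1:H}),\pie(\cdot\mid s_{1:H})\big)}.
\]
Summing over $n$ and recalling the definition $\Ontraj_N=\sum_{n=1}^N \EE^{\pi^n}[D^2_\text{H}(\pi^n(\cdot\mid s_{1:H}),\pie(\cdot\mid s_{1:H}))]$ yields $\sum_{n=1}^N\rho(\pi^n\parallel\pie)\le 2\,\Ontraj_N$.

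Putting the pieces together:
\[
J(\pie)-J(\hat\pi)\ \le\ R\cdot\rho(\hat\pi\parallel\pie)\ =\ \frac{R}{N}\sum_{n=1}^N\rho(\pi^n\parallel\pie)\ \le\ \frac{2R}{N}\,\Ontraj_N,
\]
which is the claimed bound. The only subtle point — and the main thing to get right rather than a genuine obstacle — is justifying that the two reduction lemmas ($\rho$-to-performance and $\rho$-to-decoupled-Hellinger) apply to first-step mixtures, not just Markovian policies, so that the argument goes through uniformly for each $\pi^n$ regardless of its type; fortunately Lemma~\ref{lem:decoupled_hellinger_bound_general} is stated precisely for that case, and the linear decomposition of $\rho$ over the first-step mixture is immediate from its definition as a trajectory-law expectation. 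No delicate estimates are needed beyond invoking these results and averaging.
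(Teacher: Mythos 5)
Your proposal is correct and follows essentially the same route as the paper's proof: it invokes Lemmas~\ref{lem:decoupled_hellinger_bound_markovian} and~\ref{lem:decoupled_hellinger_bound_general} to lower-bound the decoupled Hellinger distance by $\tfrac12\rho(\pi^n\parallel\pie)$, uses the linear decomposition of $\rho(\hat\pi\parallel\pie)$ over the uniform first-step mixture, and concludes with Lemma~\ref{lem:foster_reduction}. The only difference is the order in which the three steps are chained, which is immaterial.
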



\begin{proof}
By Lemma~\ref{lem:decoupled_hellinger_bound_general}, for each $\pi^n$, which is  a first-step mixing of Markovian policies:
\[
\EE^{\pi^n} \sbr{D^2_\text{H}(\pi^n(\cdot \| s_{1:H}),\pie(\cdot \| s_{1:H}))}   \geq  \frac{1}{2}\rho(\pi^n \parallel \pie).
\]

Then, by the definition of $\Ontraj_N$,
\[
\frac{\Ontraj_N}{N} 
=
\frac{1}{N}\sum_{n=1}^{N} \EE^{\pi^n} \sbr{D^2_\text{H}(\pi^n(\cdot \| s^n_{1:H}),\pie(\cdot \| s^n_{1:H})) }
\geq
\frac{1}{2N} \sum_{n=1}^{N} \rho(\pi^n \parallel \pie) 
=
\frac{1}{2} \rho(\hat{\pi} \parallel \pie),
\]
where we apply the fact 
that $\hat{\pi}$ is a first-step uniform mixing of $\{\pi^n\}_{n=1}^N$.
Finally, we conclude the proof by applying Lemma~\ref{lem:foster_reduction}.
\end{proof}


%

\begin{proof}[Proof of Theorem~\ref{thm:log_loss_dagger_step_1}]

The proof closely follows Proposition C.2 in~\cite{foster2024behavior},which was tailored to another variant of DAgger. Different from that analysis, here we leverage the distribution of the state sequence \(s_{1:H}\) instead of the per-step state distribution.

Observe that the log loss functions passed through online learning oracle $\alg$, $\ell^n(\pi)$ is of the form 
\[
\ell^n(\pi) 
=
\log\rbr{ 
\frac1 {\pi(a^{n,*}_{1:H}\parallel s^n_{1:H})} } 
\]
It can be observed that $\ell^n(\pi_u)$'s are 1-exp-concave in $u \in \Delta(\Bcal)$. Therefore, setting $\alg$ as the exponential weight  algorithm (Proposition~\ref{prop:log_loss_base}) ensures that the following bound holds almost surely:
\[
\sum_{n=1}^{N} \log(1/\pi^n(a^{*,n}_{1:H}\parallel s^n_{1:H})) \leq \sum_{n=1}^{N}  \log(1/\pie(a^{*,n}_{1:H}\parallel s^n_{1:H})) + \log(B)
=
\log(B).
\]






Now, Lemma~\ref{lem:a14} with $x^n = s^n_{1:H}$, $y^n = a^{*,n}_{1:H}$, $g_* = \pie(\cdot \| \cdot)$, and $\Hcal^n = \{o^{n'}\}_{n'=1}^n$, where $o^n = \left( s_1^n, a_1^n, a_1^{*,n}, \dots, s_H^n, a_H^n, a_H^{*,n} \right)$, 
implies that with probability at least $1-\delta$,
\[
\Ontraj_N
=
\sum_{n=1}^{N} \EE^{\pi^n} \sbr{D^2_\text{H}(\pi^n(\cdot \| s^n_{1:H}),\pie(\cdot \| s^n_{1:H})) } 
\leq
\log(B) + 2\log(1/\delta).
\]
The second part of the theorem follows by applying Lemma~\ref{lem:dagger_regret_new_appendix}.
\end{proof}

\section{Auxiliary Results}
\label{sec:auxiliary_results}

\begin{lemma}
\label{lem:hellinger-pointmass}
If $p, q$ are two distributions over some discrete domain $\Zcal$, and $q$ is {\emph{a delta mass}} on an element in $\Zcal$. Then 
\[
\frac{1}{2} \parallel p - q \parallel_1
\leq 
D_H^2(p, q) 
\leq \parallel p - q \parallel_1
\]
\end{lemma}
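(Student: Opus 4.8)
The plan is to use the point-mass structure of $q$ to collapse both sides of the inequality to explicit functions of a single scalar. Let $z_0 \in \Zcal$ be the atom of $q$, so that $q(z_0) = 1$ and $q(z) = 0$ for $z \neq z_0$, and write $a := p(z_0) \in [0,1]$; then $\sum_{z \neq z_0} p(z) = 1 - a$.

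First I would compute both quantities in closed form. Using the discrete (counting-measure) version of the Hellinger distance with the unnormalized convention fixed in the preliminaries,
\[
D_H^2(p,q) = \big(\sqrt{p(z_0)} - 1\big)^2 + \sum_{z \neq z_0} p(z) = (\sqrt{a} - 1)^2 + (1-a) = 2 - 2\sqrt{a},
\]
and likewise $\|p - q\|_1 = |p(z_0) - 1| + \sum_{z \neq z_0} p(z) = (1-a) + (1-a) = 2(1-a)$. With these closed forms in hand, the claimed chain of inequalities is equivalent to
\[
(1-a) \;\le\; 2(1 - \sqrt{a}) \;\le\; 2(1-a) \qquad \text{for all } a \in [0,1].
\]

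Second I would verify these two scalar inequalities, both of which are elementary. The left inequality rearranges to $0 \le 1 - 2\sqrt{a} + a = (1 - \sqrt{a})^2$, which is immediate; the right inequality rearranges to $a \le \sqrt{a}$, i.e.\ $\sqrt{a}(\sqrt{a} - 1) \le 0$, which holds because $\sqrt{a} \in [0,1]$. In both cases equality occurs exactly when $a = 1$, i.e.\ when $p = q$, and substituting back completes the proof.

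There is essentially no hard step here; the only things to be careful about are (i) adopting the paper's unnormalized definition of $D_H^2$ when forming the closed form $2 - 2\sqrt{a}$, and (ii) recognizing that the upper bound $D_H^2(p,q) \le \|p-q\|_1$ is simply the standard squared-Hellinger/total-variation inequality and holds for arbitrary $p, q$, whereas the lower bound $\tfrac12 \|p-q\|_1 \le D_H^2(p,q)$ is false in general and genuinely exploits that $q$ is a point mass.
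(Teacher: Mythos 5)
Your proof is correct and complete: the closed forms $D_H^2(p,q)=2-2\sqrt{a}$ and $\|p-q\|_1=2(1-a)$ with $a=p(z_0)$ follow directly from the paper's (unnormalized) definition of the squared Hellinger distance, and the two scalar inequalities $(1-a)\le 2(1-\sqrt{a})\le 2(1-a)$ are verified exactly as you state. The paper gives this lemma without proof, so there is no alternative argument to compare against; your direct reduction to a one-variable computation is the natural one, and your closing remark that the lower bound genuinely requires $q$ to be a point mass (it fails for general $p,q$, where only $\tfrac12\|p-q\|_1\le D_H(p,q)$ holds) is a worthwhile observation.
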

{\begin{proof}
Without loss of generality, assume that $q$ is a delta mass on $z_0$. Therefore, 
$\| p - q \|_1 = 2(1 - p(z_0))$, and 
\[
D_H^2(p, q)
= 
(1 - \sqrt{p(z_0)})^2 + (1 - p_{z_0})
= 
2 (1 - \sqrt{ p(z_0) }).
\]
Thus, 
\[
\frac{D_H^2(p, q)}{\| p - q \|_1} = \frac{1}{1 + \sqrt{p(z_0)}} \in [\frac12, 1].
\qedhere
\]
\end{proof}

\begin{lemma}[Performance Difference Lemma~\cite{Kakade2002ApproximatelyOA,ross2010efficient}]
\label{lem: performance difference lemma base}
For two Markovian policies $\pi$ and $\pie$ $: \Scal \to \Delta(\Acal)$, we have 
\[
J(\pie) - J(\pi) = \EE^\pi \sbr{\sum_{h=1}^H A_h^{E}(s_h,a_h)},
\]
where $A_h^E(s_h,a_h) :=Q_h^{\pie}(s_h, a_h) - V_h^{\pie}(s_h) $. Furthermore:


\begin{itemize}
\item It holds that (recall Definition~\ref{def:traj_l1}) 
\[J(\pi) - J(\pi^E) \leq H \cdot \lambda(\pie \parallel \pi).\]

\item Suppose $(\MDP,\pie)$
is $\mu$-recoverable, then 
\[J(\pi) - J(\pi^E) \leq \mu \cdot \lambda(\pi \parallel \pie).\]
\end{itemize}
\end{lemma}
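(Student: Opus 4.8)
The plan is to prove the exact performance‑difference identity first, by telescoping the expert's value functions along a $\pi$‑rollout, and then to obtain the two inequalities as specializations of it.

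For the identity, set $V_{H+1}^{\pie}\equiv 0$ and, along a trajectory generated by $\pi$, use the telescoping rearrangement $\sum_{h=1}^H r_h = V_1^{\pie}(s_1) + \sum_{h=1}^H\bigl(r_h + V_{h+1}^{\pie}(s_{h+1}) - V_h^{\pie}(s_h)\bigr)$. Taking $\EE^\pi[\,\cdot\mid s_1]$, the left side is $V_1^\pi(s_1)$, while by the tower rule and the definition of $Q^{\pie}$ under the true transition dynamics, $\EE^\pi[r_h + V_{h+1}^{\pie}(s_{h+1})\mid s_h,a_h] = Q_h^{\pie}(s_h,a_h)$; hence the $h$‑th increment has conditional mean $Q_h^{\pie}(s_h,a_h) - V_h^{\pie}(s_h) = A_h^E(s_h,a_h)$. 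Averaging over $s_1\sim P_0$ gives $J(\pi)-J(\pie) = \EE^\pi\bigl[\sum_{h=1}^H A_h^E(s_h,a_h)\bigr]$, which is the identity in the statement up to the overall sign fixed by whether $J$ is read as a return or a cost; equivalently $J(\pie)-J(\pi) = \EE^\pi\bigl[\sum_h\bigl(V_h^{\pie}(s_h)-Q_h^{\pie}(s_h,a_h)\bigr)\bigr]$.

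For the recoverability bound, start from this identity and split each summand according to whether $a_h = \pie(s_h)$. Because $\pie$ is deterministic, $V_h^{\pie}(s) = Q_h^{\pie}(s,\pie(s))$, so the summand is \emph{exactly} $0$ on every step where $\pi$ plays the expert action; on a disagreeing step, $\mu$‑recoverability bounds the summand by $\mu$. Hence the gap is at most $\mu\,\EE^\pi\bigl[\sum_h \mathbb{I}(a_h\neq\pie(s_h))\bigr]$, and since $\pie$ is a point mass this expectation equals $\lambda(\pi\parallel\pie)$, giving the $\mu$‑bound. For the $H$‑bound, apply the identity with the roles of $\pi$ and $\pie$ interchanged, obtaining $J(\pie)-J(\pi) = \sum_{h=1}^H \EE^{\pie}\bigl[A_h^\pi(s_h,\pie(s_h))\bigr]$. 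Writing $A_h^\pi(s,\pie(s)) = \sum_a \pi_h(a\mid s)\bigl(Q_h^\pi(s,\pie(s)) - Q_h^\pi(s,a)\bigr)$, the $a=\pie(s)$ term drops out and every remaining difference has magnitude at most $H$ (value functions lie in $[0,H]$), so $|A_h^\pi(s,\pie(s))|\le H\bigl(1-\pi_h(\pie(s)\mid s)\bigr)$; summing over $h$ and noting $\sum_h \EE^{\pie}[1-\pi_h(\pie(s_h)\mid s_h)] = \lambda(\pie\parallel\pi)$ yields the $H$‑bound. Equivalently one can run the classical coupling argument---couple the two rollouts until $\pi$ first deviates from $\pie$, observe that the two trajectory laws coincide up to that time, and charge at most one full horizon of reward afterwards.

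The telescoping identity is routine; the care is concentrated in the two specializations. The $\mu$‑bound hinges on using determinism of $\pie$ to make the advantage gap vanish exactly on agreement, so that only genuine mistakes are charged and each is charged only $\mu$ rather than the trivial $R$ or $H$; this is precisely what enables the $\mu H/N$ rate of the interactive algorithms downstream. The $H$‑bound is the more delicate step, because the raw identity is an expectation over the learner's visitation whereas $\lambda(\pie\parallel\pi)$ lives over the expert's visitation; bridging this requires the symmetric form of the identity followed by controlling the learner's own advantage $A_h^\pi$, and that is exactly where the extra factor of $H$ becomes unavoidable.
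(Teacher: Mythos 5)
Your proof is correct, and it supplies an argument the paper itself omits: Lemma~\ref{lem: performance difference lemma base} is stated in the auxiliary-results section with citations to Kakade--Langford and Ross--Bagnell but no proof. Your route is the standard one and matches what those references do: the telescoping of $V^{\pie}$ along a $\pi$-rollout gives the identity; determinism of $\pie$ makes the expert advantage vanish exactly on agreement steps so that recoverability charges only $\mu$ per disagreement, yielding the $\mu\cdot\lambda(\pi\parallel\pie)$ bound; and the symmetric application of the identity under $\EE^{\pie}$, with $\lvert A_h^{\pi}(s,\pie(s))\rvert \le H\bigl(1-\pi_h(\pie(s)\mid s)\bigr)$, yields the $H\cdot\lambda(\pie\parallel\pi)$ bound (the coupling argument you mention is an equally valid alternative). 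You are right to flag the sign convention: the lemma as printed writes $J(\pie)-J(\pi)=\EE^{\pi}[\sum_h A_h^E]$ and the recoverability definition bounds $Q^{\pie}-V^{\pie}\le\mu$, whereas the standard identity and the way the lemma is invoked downstream (e.g.\ in Lemma~\ref{lem:dagger_regret_new_state_wise_appendix}, which bounds $J(\pie)-J(\hat\pi)$) correspond to $J(\pi)-J(\pie)=\EE^{\pi}[\sum_h A_h^E]$ with recoverability read as $V^{\pie}-Q^{\pie}\le\mu$; your proof works with the substantively correct convention, which is what the rest of the paper actually needs. One minor remark: under the paper's normalized-return assumption the first bound could be sharpened to $R\cdot\lambda(\pie\parallel\pi)$, but $H$ is all the lemma claims, so bounding value differences by $H$ is sufficient.
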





\begin{lemma} [Lemma D.2. of ~\cite{foster2024behavior}] 
For all (potentially stochastic) policies $\pi$ and $\pi'$, it holds that
\[
J(\pi) - J(\pi') \leq R \cdot \rho(\pi\parallel \pi').
\]

\label{lem:foster_reduction}
\end{lemma}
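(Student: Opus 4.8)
The plan is to sandwich \(J(\pi)-J(\pi')\) between a coupling-based disagreement probability and \(R\) times it, and then to identify that disagreement probability with \(\rho(\pi\parallel\pi')\). Concretely, under the normalized-return assumption the return \(g(\tau):=\sum_{h=1}^H r_h\) lies in \([0,R]\), so for \emph{any} coupling \((\tau,\tau')\) of the trajectory laws \(\mathbb{P}^{\pi}\) and \(\mathbb{P}^{\pi'}\) we have \(g(\tau)-g(\tau')=0\) on \(\{\tau=\tau'\}\) and \(g(\tau)-g(\tau')\le R\) otherwise; hence \(J(\pi)-J(\pi')=\EE[g(\tau)-g(\tau')]\le R\cdot\Pr[\tau\ne\tau']\le R\cdot\mathrm{TV}(\mathbb{P}^{\pi},\mathbb{P}^{\pi'})\). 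It therefore suffices to build one coupling with \(\Pr[\tau\ne\tau']\le\rho(\pi\parallel\pi')\).

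The coupling is built on the causal factorization of trajectory laws (Lemma~\ref{lem:prob-fact-causal}), which applies to both Markovian policies and their first-step mixings: for any such policy \(\pi\), \(\mathbb{P}^{\pi}(\tau)=\pi(a_{1:H}\parallel s_{1:H})\cdot M(s_{1:H},r_{1:H}\parallel a_{1:H})\), where the second factor — the causally-conditioned law of states and rewards given the action sequence — depends only on the MDP, not on the policy. I would then define the coupling by: draw \(\tau=(s_{1:H},a_{1:H},r_{1:H})\sim\mathbb{P}^{\pi}\); draw \(a'_{1:H}\sim\pi'(\cdot\parallel s_{1:H})\) along the same realized state sequence; let \(G:=\{a'_{1:H}=a_{1:H}\}\); on \(G\) set \(\tau':=\tau\), and on \(G^c\) draw \(\tau'\) freshly from the ``deficit'' distribution \(\bigl(\mathbb{P}^{\pi'}(\cdot)-\Pr[\tau'=\cdot,\,G]\bigr)/\Pr[G^c]\). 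The one thing to verify is that this deficit is a nonnegative measure, i.e.\ \(\Pr[\tau'=t,\,G]\le\mathbb{P}^{\pi'}(t)\) for every trajectory \(t\); using the factorization and the fact that \(a'_{1:H}\) is drawn only from the state sequence of \(t\), \(\Pr[\tau'=t,G]=\mathbb{P}^{\pi}(t)\,\pi'(a_{1:H}(t)\parallel s_{1:H}(t))=\pi(a_{1:H}(t)\parallel s_{1:H}(t))\cdot\mathbb{P}^{\pi'}(t)\le\mathbb{P}^{\pi'}(t)\), since \(\pi(a_{1:H}(t)\parallel s_{1:H}(t))\in[0,1]\). With this check, \(\tau\) has marginal \(\mathbb{P}^{\pi}\), \(\tau'\) has marginal \(\mathbb{P}^{\pi'}\), and \(\{\tau\ne\tau'\}\subseteq G^c\), so \(\Pr[\tau\ne\tau']\le\Pr[G^c]=\Pr[a'_{1:H}\ne a_{1:H}]=\rho(\pi\parallel\pi')\) by Definition~\ref{def:traj_l_inf}. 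Combining with the first paragraph gives \(J(\pi)-J(\pi')\le R\cdot\rho(\pi\parallel\pi')\).

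The main obstacle — really the only nontrivial point — is the measure-domination inequality \(\Pr[\tau'=t,G]\le\mathbb{P}^{\pi'}(t)\); it is exactly here that the policy-independence of the MDP's state/reward kernel is used, and without routing through the causal factorization one is left comparing \(|\pi(a\parallel s)-\pi'(a\parallel s)|\) against a quantity involving an action-dependent state kernel, where the naive total-variation contraction does not directly close. A minor bookkeeping point is to state explicitly that the entire argument covers first-step mixings of Markovian policies, not just Markovian ones, since those also admit the factorization (cf.\ Lemma~\ref{lem:causal-decomp}); and, using the symmetry of \(\rho\) noted after Definition~\ref{def:traj_l_inf}, the bound is in fact insensitive to which argument plays the role of \(\pi\).
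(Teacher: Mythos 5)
The paper does not prove this lemma itself---it imports it verbatim as Lemma~D.2 of \cite{foster2024behavior}---so there is no in-paper proof to compare against; judged on its own terms, your argument is correct and is essentially the standard route: bound $J(\pi)-J(\pi')$ by $R$ times a coupling failure probability, then exhibit a coupling whose failure probability is exactly $\rho(\pi\parallel\pi')$. Your domination check $\Pr[\tau'=t,\,G]=\pi(a_{1:H}(t)\parallel s_{1:H}(t))\cdot\mathbb{P}^{\pi'}(t)\le\mathbb{P}^{\pi'}(t)$ is the crux and is handled correctly via the causal factorization of Lemma~\ref{lem:prob-fact-causal}; it is the coupling-language counterpart of the analytic step in the original reference, which bounds $\mathrm{TV}(\mathbb{P}^{\pi},\mathbb{P}^{\pi'})$ by $\rho$ using $[p-q]_+\le p(1-q)$ on the action-sequence likelihoods with the policy-independent state/reward kernel factored out. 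The only cosmetic gaps are the degenerate case $\Pr[G^c]=0$ (where the deficit distribution is undefined but the claim is trivial, since the domination then forces the marginal of $\tau'$ to be exactly $\mathbb{P}^{\pi'}$) and a note that the factorization must include the reward coordinates, which it does since rewards depend only on $(s_h,a_h)$; neither affects correctness.
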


\begin{proposition}[Proposition 3.1 of~\cite{cesa2006prediction}]
Suppose $\{\ell^n(\cdot)\}_{n=1}^N$ is a sequence of $\eta$-exp-concave functions from $\Delta(\Xcal)$ to $\mathbb{R}$. For all $x\in \Xcal$, define the weights $w^{n-1}(x)$ and probabilities $u^n(x)$ as follows:
\[
w^{n-1}(x) = e^{-\eta \sum_{i=1}^{n-1} \ell_i(e_x)}, \quad
u^n(x) = \frac{w^{n-1}(x)}{\sum_{x'\in \Xcal} w^{n-1}(x')},
\]
where $e_x$ is the $x$-th standard basis vector in $\mathbb{R}^{|\Xcal|}$. Then, choosing $u^n = (u^n(x))_{x\in\Xcal}$ (exponential weights used with learning rate $\eta$) satisfies:
\[
\sum_{n=1}^N \ell^n(u^n)
\leq  
\min_{x \in \Xcal}  \sum_{n=1}^N \ell^n(e_x)
+
\frac{\log |\Xcal|}{\eta}.
\]
\label{prop:log_loss_base}
\end{proposition}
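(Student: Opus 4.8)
The plan is to run the classical potential-function (``weight-sum'') argument for exponential weights, invoking $\eta$-exp-concavity exactly once. First I would introduce the unnormalized weights $w_x^n := e^{-\eta\sum_{i=1}^n \ell^i(e_x)}$ with the convention $w_x^0 = 1$, together with the total weight $W_n := \sum_{x\in\Xcal} w_x^n$, so that $W_0 = |\Xcal|$ and the algorithm's iterate satisfies $u^n(x) = w_x^{n-1}/W_{n-1}$. The proof then amounts to sandwiching $W_N$ between an upper bound governed by the algorithm's cumulative loss and a lower bound governed by the best fixed vertex.

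For the upper bound I would establish the one-step contraction $W_n \le W_{n-1}\, e^{-\eta \ell^n(u^n)}$. This follows because $W_n = \sum_x w_x^{n-1} e^{-\eta\ell^n(e_x)} = W_{n-1}\sum_x u^n(x)\, e^{-\eta\ell^n(e_x)}$, and since $\ell^n$ is $\eta$-exp-concave, the map $v\mapsto e^{-\eta\ell^n(v)}$ is concave on $\Delta(\Xcal)$; applying Jensen's inequality to the distribution $u^n$ supported on the vertices $\{e_x\}$ gives $\sum_x u^n(x)\, e^{-\eta\ell^n(e_x)} \le e^{-\eta\ell^n(\sum_x u^n(x) e_x)} = e^{-\eta\ell^n(u^n)}$. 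Telescoping over $n = 1,\dots,N$ then yields $W_N \le |\Xcal|\, e^{-\eta\sum_{n=1}^N \ell^n(u^n)}$.

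For the lower bound I would simply retain the single term at the loss minimizer, namely $W_N = \sum_x e^{-\eta\sum_n \ell^n(e_x)} \ge \max_{x\in\Xcal} e^{-\eta\sum_n \ell^n(e_x)} = e^{-\eta\min_{x}\sum_n \ell^n(e_x)}$, using positivity of all terms. Combining the two bounds, taking logarithms, and dividing through by $\eta > 0$ gives $\sum_{n=1}^N \ell^n(u^n) \le \min_{x\in\Xcal}\sum_{n=1}^N \ell^n(e_x) + \tfrac{\log|\Xcal|}{\eta}$, which is the claimed regret bound.

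There is no genuine obstacle here; the only point demanding care is the \emph{direction} of Jensen's inequality, which relies on the precise definition of $\eta$-exp-concavity (concavity of $e^{-\eta\ell^n}$, not of $\ell^n$ itself), and on the observation that $u^n$ is exactly the normalized exponential-weight distribution, so that the identity $W_n = W_{n-1}\sum_x u^n(x)\, e^{-\eta\ell^n(e_x)}$ holds with equality and the telescoping is exact.
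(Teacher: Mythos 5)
Your argument is the standard potential-function proof of the exponential-weights bound for $\eta$-exp-concave losses, and it is correct: the Jensen step goes in the right direction precisely because $\eta$-exp-concavity means $v \mapsto e^{-\eta\ell^n(v)}$ is concave, and the telescoping and lower bound on $W_N$ are exact. The paper does not reprove this result but cites Proposition 3.1 of Cesa-Bianchi and Lugosi, whose proof is essentially the one you give, so there is nothing further to reconcile.
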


\begin{lemma}[Corollary of Lemma A.14 in~\cite{foster2021statistical}]

Under the realizbility assumption, where there exists $g_{\star} \in \mathcal{G}$ such that for all $n \in [N]$,
\[
y^{n} \mid x^{n}, \mathcal{H}^{n-1} \sim g_{\star} (\cdot \mid x^{n}),
\]
where $\mathcal{H}^{n-1}$ denotes all histories at the beginning of round $n$.
Then, for any estimation algorithm that outputs $\cbr{\hat{g}^n}_{n=1}^N$ online
and any $\delta \in (0,1)$, with probability at least $1 - \delta$,
\[
\sum_{n=1}^{N} \mathbb{E}_{n-1} \left[ D_{\mathcal{H}}^2 \left( \hat{g}^{n}( \cdot \mid x^{n}), g_{\star}(\cdot \mid x^{n}) \right) \right]
\leq \sum_{n=1}^{N} \left( 
\log\frac{1}{\hat{g}^n(y^n \mid x^n)}
-
\log\frac{1}{g_{\star}(y^n \mid x^n)}
 \right) + 2 \log (\delta^{-1}).
\]
where 
$\mathbb{E}_n[\cdot] := \mathbb{E}[\cdot \mid \mathcal{H}^{n}]$.
\label{lem:a14}
\end{lemma}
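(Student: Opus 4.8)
The plan is to prove this by the standard exponential-supermartingale (``Chernoff for the likelihood ratio'') argument that underlies maximum-likelihood generalization bounds. Write $p^n := \hat{g}^n(\cdot\mid x^n)$ and $q^n := g_\star^n(\cdot\mid x^n)$ for the two conditional densities at round $n$, and recall the elementary identity linking the Hellinger affinity to the squared Hellinger distance: $\int \sqrt{p^n q^n}\,d\omega = 1 - \tfrac12 D_H^2(p^n, q^n)$, which is immediate from expanding $D_H^2(\mathbb{P},\mathbb{Q}) = 2 - 2\int\sqrt{d\mathbb{P}\,d\mathbb{Q}}$. The first step is to invoke realizability: conditioning on $\Hcal^{n-1}$ and on $x^n$, the label $y^n$ is drawn from $q^n$, so averaging the square-root likelihood ratio over $y^n$ and then over $x^n$ gives
\[
\mathbb{E}_{n-1}\!\left[\sqrt{\frac{\hat g^n(y^n\mid x^n)}{g_\star^n(y^n\mid x^n)}}\,\right]
= 1 - \tfrac12\,\mathbb{E}_{n-1}\!\left[D_H^2\big(\hat g^n(x^n), g_\star^n(x^n)\big)\right]
\le \exp\!\left(-\tfrac12\,\mathbb{E}_{n-1}\!\left[D_H^2\big(\hat g^n(x^n), g_\star^n(x^n)\big)\right]\right),
\]
using $1-u\le e^{-u}$ in the last step.

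Next I would define, for $0\le n \le N$, the process
\[
W^n := \exp\!\left( \sum_{i=1}^{n} \left( \tfrac12 \log \frac{\hat g^i(y^i\mid x^i)}{g_\star^i(y^i\mid x^i)} \;+\; \tfrac12\, \mathbb{E}_{i-1}\!\left[D_H^2\big(\hat g^i(x^i), g_\star^i(x^i)\big)\right] \right) \right),
\]
with $W^0=1$. Since $\hat g^n$ is produced before the round-$n$ observation $(x^n,y^n)$ is revealed, it is $\Hcal^{n-1}$-measurable, hence $\mathbb{E}_{n-1}[D_H^2(\hat g^n(x^n), g_\star^n(x^n))]$ is $\Hcal^{n-1}$-measurable; therefore
\[
\mathbb{E}_{n-1}[W^n] = W^{n-1}\, \exp\!\left(\tfrac12\,\mathbb{E}_{n-1}[D_H^2(\hat g^n(x^n), g_\star^n(x^n))]\right) \mathbb{E}_{n-1}\!\left[\sqrt{\tfrac{\hat g^n(y^n\mid x^n)}{g_\star^n(y^n\mid x^n)}}\right] \le W^{n-1}
\]
by the previous display. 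Thus $(W^n)_{n\ge 0}$ is a nonnegative supermartingale with respect to $(\Hcal^n)$, so $\mathbb{E}[W^N]\le W^0=1$.

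The last step is a one-line application of Markov's inequality: $\Pr[W^N \ge \delta^{-1}] \le \delta\,\mathbb{E}[W^N] \le \delta$, so with probability at least $1-\delta$ we have $\log W^N < \log(\delta^{-1})$, i.e.
\[
\sum_{i=1}^N \mathbb{E}_{i-1}\!\left[D_H^2\big(\hat g^i(x^i), g_\star^i(x^i)\big)\right] \le \sum_{i=1}^{N} \log\frac{g_\star^i(y^i\mid x^i)}{\hat g^i(y^i\mid x^i)} + 2\log(\delta^{-1}) = \sum_{i=1}^{N}\big(\ell_{\log}^i(\hat g^i) - \ell_{\log}^i(g_\star^i)\big) + 2\log(\delta^{-1}),
\]
which is the claimed bound. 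I do not expect any genuine obstacle here — the main thing to get right is the bookkeeping on the filtration (that $\hat g^n$ is predictable, and that $\mathbb{E}_{n-1}$ integrates over both $x^n$ and $y^n$ given $\Hcal^{n-1}$), since this is exactly what makes $W^n$ a supermartingale; the affinity identity and the Markov step are routine. If one wanted the inequality to hold simultaneously for all $n\le N$, one would replace Markov's inequality by Ville's maximal inequality for nonnegative supermartingales, but the stated fixed-horizon version needs only Markov.
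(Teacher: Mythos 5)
Your proof is correct and is exactly the standard argument behind this result: the paper imports Lemma A.14 from Foster et al.\ (2021) without proof, and that source's proof is the same exponential-supermartingale construction you give — the Hellinger-affinity identity under realizability, the bound $1-u\le e^{-u}$, the nonnegative supermartingale $W^n$ built from half the log-likelihood ratio plus half the conditional squared Hellinger distance, and Markov's inequality. The filtration bookkeeping you flag (predictability of $\hat g^n$ and the fact that $\mathbb{E}_{n-1}$ integrates over both $x^n$ and $y^n$) is indeed the only delicate point, and you handle it correctly.
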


\section{Experiment Details}
\label{sec:appendix_experiment_results}

We compare \WarmStartDagger with Behavior Cloning (BC) and \StateDagger on continuous-control tasks from OpenAI Gym MuJoCo~\cite{todorov2012mujoco, brockman2016openai} with episode length $H=1000$. 

\paragraph{Infrastructure and Implementation.}
All experiments were conducted on a Linux workstation equipped with an Intel Core i9 CPU (3.3GHz) and four NVIDIA GeForce RTX 2080 Ti GPUs. Our implementation builds on the publicly available DRIL repository~\cite{brantley2019disagreement} (https://github.com/xkianteb/dril), with modifications to support interactive learning. The continuous control environments used in our experiments are: ``HalfCheetahBulletEnv-v0'', ``AntBulletEnv-v0'', ``Walker2DBulletEnv-v0'', and ``HopperBulletEnv-v0''. We include link to our implementation here: \url{https://github.com/liyichen1998/Interactive-and-Hybrid-Imitation-Learning-Provably-Beating-Behavior-Cloning}.

\paragraph{Environments and Expert Policies.} We use four MuJoCo environments: Ant, Hopper, HalfCheetah, and Walker2D. The expert policy is a deterministic MLP pretrained via TRPO~\cite{schulman2015trust, schulman2017proximal}, with two hidden layers of size 64.

\paragraph{Model Architecture used by Learner.} The learner uses the same MLP architecture as the expert. Following~\cite{foster2024behavior}, we use a diagonal Gaussian policy:
\[
\pi(a \mid s) = \mathcal{N}\left(f_\theta(s), \operatorname{diag}(\sigma^2)\right),
\]
where $f_\theta(s) \in \mathbb{R}^{d_\Acal}$ is the learned mean, and $\sigma \in \mathbb{R}^{d_\Acal}$ is a learnable log-standard deviation vector. 

Each model is trained from random initialization using a batch size of 100, a learning rate of $10^{-3}$, and up to 2000 passes over the dataset, with early stopping evaluated every 250 passes using a 20\% held-out validation set.

\paragraph{Learning Protocols.}
To evaluate the performance of BC against the number of states annotated, we reveal expert state-action pairs sequentially along expert trajectories until the annotation budget is reached. For \StateDagger, at each iteration, it rolls out the latest policy, samples a state uniformly from the trajectory, queries it for the expert action, and updates immediately.

For \WarmStartDagger, we begin with BC and switch to \StateDagger after a predefined number of offline expert state-action pairs has been used, denoted as $N$. We set $N$ to be 100, 200, or 400 for easier tasks (e.g., Hopper, Ant) and 200, 400, or 800 for harder tasks (e.g., HalfCheetah, Walker2D).

\paragraph{Cost Model and Evaluation.} We assign a cost of 1 to each offline state-action pair and a cost of $C = 1$ or $2$ to each interactive query. We run each method for 10 random seeds. For every 50 new state-action pairs collected, we evaluate the current policy by running 25 full-episode rollouts and reporting the average return.

Though the nonrealizable setting is beyond the scope of this work, we expect that some variant of our algorithm can still give reasonable performance, provided that the policy class is expressive enough (so that
the approximation error is nonzero but small). For example, \cite{li2023agnostic} observed that with nonrealizable stochastic experts, DAgger variants outperform BC, and exhibit learning curves similar to ours.

\subsection{Additional Experiment Plots}

We present extended experiment results with larger cost budgets. As shown in Figure~\ref{fig:log_comparison_appendix}, we allocate a total annotation  cost budget of 2000 for Hopper and Ant, and 4000 for HalfCheetah and Walker.
This complements Figure~\ref{fig:log_comparison_main} in 
the main paper by showing the full training curves without zooming into the stage with small cost budget. The trends are consistent with our earlier observations: \WarmStartDagger achieves similar or better sample efficiency compared to \StateDagger when $C=1$, and clearly outperforms both baselines under the cost-aware setting where $C=2$.


\begin{figure*}[t]  
  \centering
  \includegraphics[width=\textwidth]{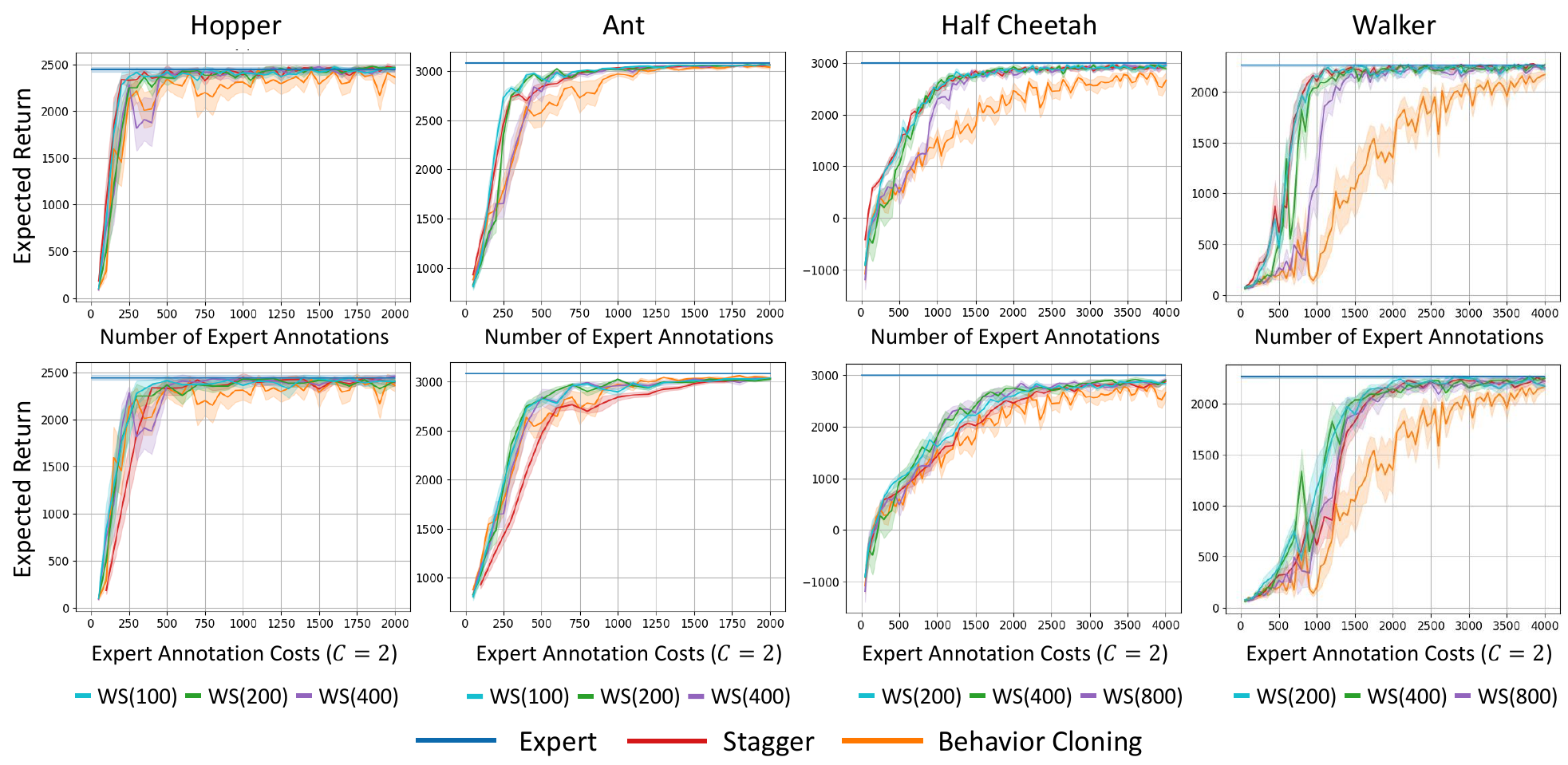}
  \caption{Sample and cost efficiency on MuJoCo tasks. The top row shows expected return vs. number of annotations ($C = 1$); the bottom row shows performance under a cost-aware setting ($C = 2$). \WarmStartDagger (WS) is initialized with 1/20, 1/10, or 1/5 of the samples as offline demonstrations. It matches \StateDagger in sample efficiency and outperforms the baselines when $C = 2$, especially WS(1/5).}
  \label{fig:log_comparison_appendix}
\end{figure*}

\subsection{Experiment with MSE Loss}
\label{sec:mse_experiment}

We additionally evaluate our algorithms using mean squared error (MSE) as the loss function for optimization. All training settings remain identical to the main experiments with log loss, except that we use a learning rate of $2.5 \times 10^{-4}$. As shown in Figure~\ref{fig:mse_comparison_appendix}, we observe qualitatively similar results to those under log loss shown in Figure~\ref{fig:log_comparison_main}, consistent with prior observations in~\cite{foster2024behavior}, with perhaps more stable learning curves.

\begin{figure*}[t]  
  \centering
  \includegraphics[width=\textwidth]{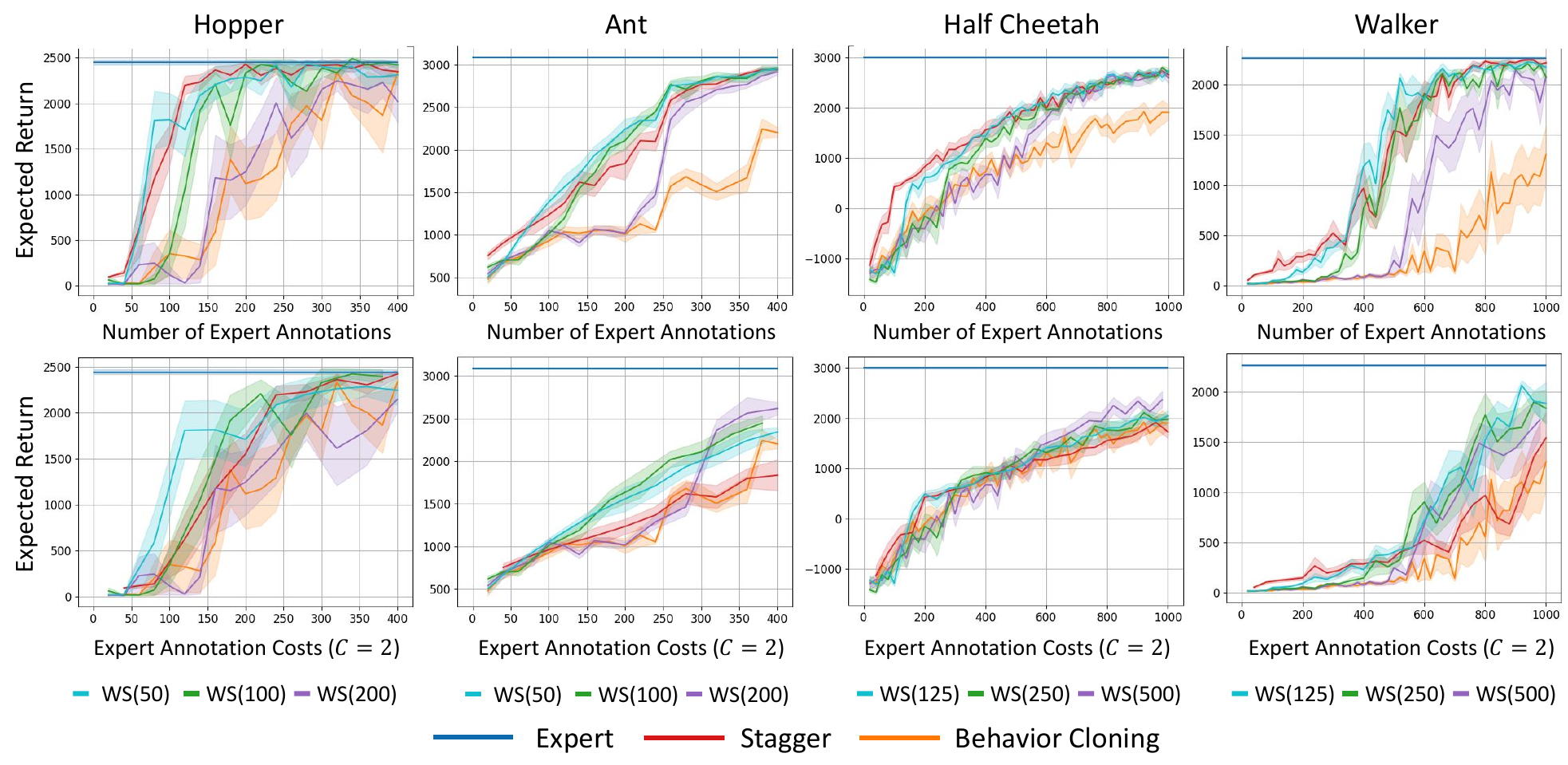}
  \caption{
Performance comparison under MSE loss across MuJoCo tasks. Results show that \WarmStartDagger (WS) achieves comparable sample efficiency and performance to the log loss setting, with improved training stability. Each curve represents the average over 10 seeds.
}
  \label{fig:mse_comparison_appendix}
\end{figure*}

\subsection{Additional Experiments with \TrajDagger}
\label{sec:tragger_experiments_appendix}

For completeness, we evaluate \TrajDagger and its warm-start variant (\WarmStartTragger, Algorithm~\ref{alg:warm_il_traj}), on continuous control tasks and the toy MDP as in Figure~\ref{fig:mdp_design}. The key distinction between \WarmStartTragger and \WarmStartDagger lies in the annotation mode: the former employs trajectory-wise oracle feedback instead of state-wise annotation, leading to notably different behaviors, as shown in Figure~\ref{fig:traj_dagger_overall_appendix}.

In particular, for Ant and HalfCheetah, the state-wise cost efficiency ($C=1$) of \TrajDagger and \WarmStartTragger is significantly worse than that of \StateDagger due to the cold-start problem: early DAgger rollouts have poor state coverage but must still proceed until the end of each trajectory. In contrast, \StateDagger samples only one state per trajectory, thereby better leveraging interaction to get timely feedback.
For Hopper and Walker, however, \TrajDagger and \WarmStartTragger achieve performance closer to \StateDagger. These environments feature hard resets when the agent's state becomes unhealthy (unlikely in Ant and never in HalfCheetah), which terminates ('truncate' has a specific meaning of ending at a prespecified step in openai gym) poor trajectories and consequently improve sample efficiency.

Overall, these observations suggest a natural middle ground between full-trajectory and single-state annotation—namely, batch queries (e.g., sampling 50 states per trajectory), as explored by~\cite{li2023agnostic} with comparable results.

A head-to-head comparison between \TrajDagger and \StateDagger, as well as between \WarmStartTragger and \WarmStartDagger, is shown in Figure~\ref{fig:traj_dagger_hth_appendix}, highlighting the advantage of state-wise over trajectory-wise annotation.

However, this advantage does not hold in general: in the toy MDP in Figure~\ref{fig:mdp_design}, \TrajDagger and \WarmStartTragger achieve performance nearly identical to \StateDagger and \WarmStartDagger, as shown in Figure~\ref{fig:traj_dagger_mdp_appendix}.

\begin{algorithm}[t]
\caption{\WarmStartTragger: Warm-start \TrajDagger with offline demonstrations}
\label{alg:warm_il_traj}
\begin{algorithmic}[1]
\STATE \textbf{Input:} MDP $\Mcal$, trajectory-wise expert annotation oracle $\Otraj$, Stationary policy class $\Bcal$, online learning oracle $\alg$, offline expert dataset $D_{\mathrm{off}}$ of size $\Noff$, interaction budget (in terms of number of states) $\Non$
\STATE Initialize $\alg$ with policy class 
$\Bcal_{\text{bc}} := \{ \pi \in \Bcal : \pi(s_h) = a_h,\ \forall h \in [H], \forall (s, a)_{1:H} \in D_{\text{off}} \}$
\FOR{$n = 1, \dots, \Non/H$}
    \STATE Query $\alg$ and receive $\pi^n$.
    \STATE Execute $\pi^n$ and sample $s^n_{1:H}$ following $\PP^{\pi^n}$. Query $\Otraj$ for $a_{1:H}^{*,n}=\pie(s^n_{1:H})$.
    \STATE Update $\alg$ with loss function
    \begin{equation}
    \ell^n(\pi)
    := \log\rbr{\frac1{
    \pi(a_{1:H}^{*,n} \parallel s_{1:H}^n)}}
    .
    \end{equation}
\ENDFOR
\STATE \textbf{Output:} $\hat{\pi}$, a first-step uniform mixture of $\{\pi^1, \dots, \pi^N\}$.
\end{algorithmic}
\end{algorithm}

\begin{figure*}[t]  
  \centering
  \includegraphics[width=\textwidth]{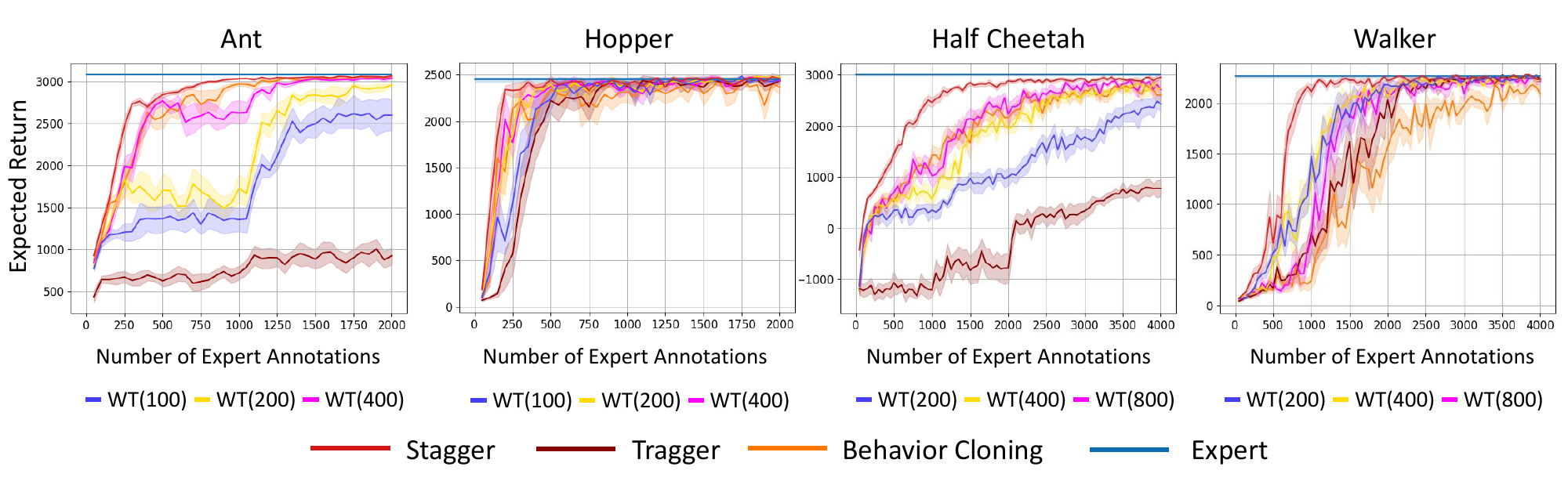}
\caption{Sample efficiency of algorithms on MuJoCo tasks, showing expected return vs. number of annotations ($C = 1$). \WarmStartTragger (WT) is initialized with 1/20, 1/10, or 1/5 of the total annotation budget as offline demonstrations. 
Specifically, WT($n$) refers to WT with offline demonstrations of total length $n$.
Although the performance of WT improves with more offline demonstrations, both \TrajDagger and \WarmStartTragger remain inferior to \StateDagger and, in many cases, even underperform Behavior Cloning, confirming the advantage of state-wise over trajectory-wise annotations.}
  \label{fig:traj_dagger_overall_appendix}
\end{figure*}

\begin{figure*}[t]  
  \centering
  \includegraphics[width=\textwidth]{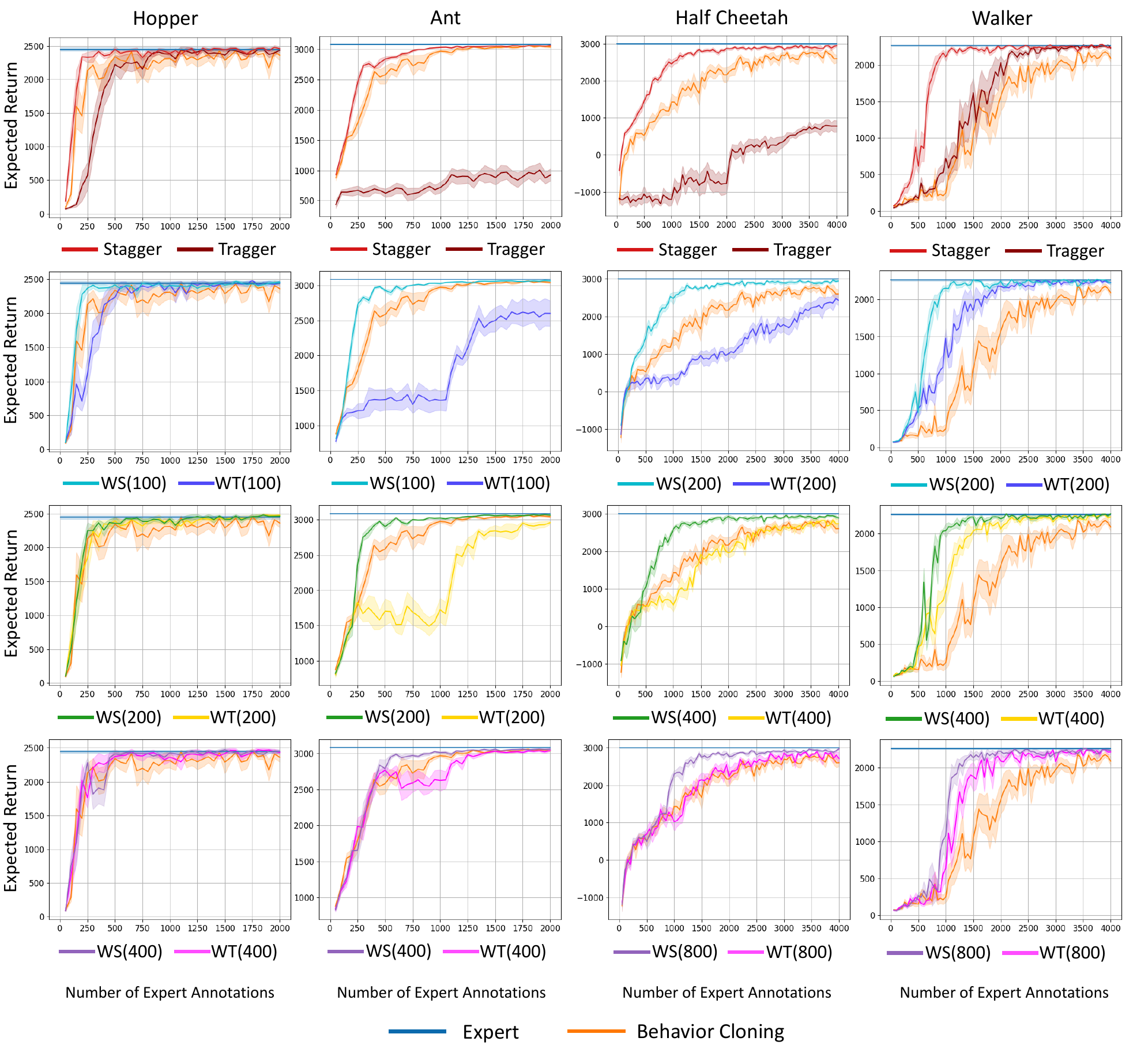}
  \caption{Head-to-head sample efficiency comparison between \TrajDagger and \StateDagger, and between \WarmStartTragger and \WarmStartDagger under equal (since we are talking about comparison here) offline demonstration budgets. \StateDagger and \WarmStartDagger consistently outperform \TrajDagger and \WarmStartTragger. The performance gap narrows as the offline budget increases, effectively alleviating the cold-start problem suffered by \TrajDagger.}
  \label{fig:traj_dagger_hth_appendix}
\end{figure*}

\begin{figure*}[t]  
  \centering
  \includegraphics[width=\textwidth]{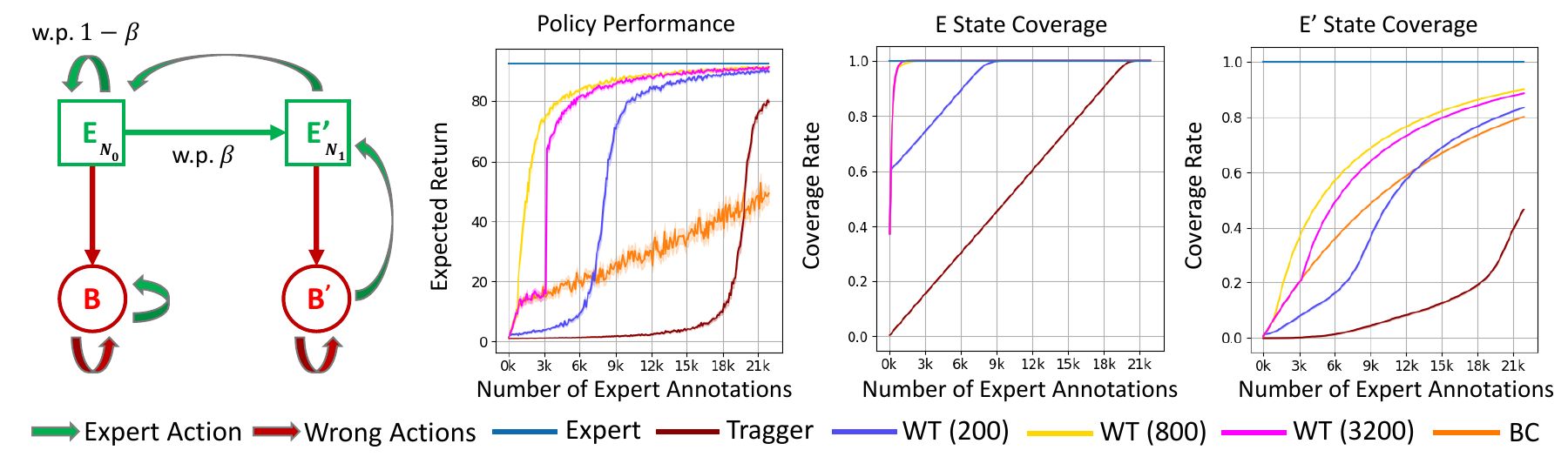}
  \caption{
  Similar to Figure~\ref{fig:mdp_design}, we evaluate 
  \TrajDagger
  and \WarmStartTragger (WT) with 
  200, 800, 3200 offline (state, expert action) pairs in the toy MDP therein. All methods are evaluated under equal total annotation cost with $C = 1$.
  With 800 offline (state, expert action) pairs, WT significantly improves the sample efficiency over the baselines and explores $\mathbf{E'}$ more effectively. The performance of \TrajDagger  and \WarmStartTragger is almost the same as \StateDagger and \WarmStartDagger
  in Figure~\ref{fig:mdp_design}.
  }
  \label{fig:traj_dagger_mdp_appendix}
\end{figure*}

\end{document}